\newtheorem{innercustomthm}{Theorem}
\newtheorem{theorem}{Theorem}[section]
\newtheorem{lemma}[theorem]{Lemma}
\DeclareMathOperator*{\minimise}{minimize}
\DeclareMathOperator*{\subject}{subject}
\begin{document}
	
\title{Informed Non-convex Robust Principal Component Analysis with Features}

\author{Niannan Xue, Jiankang Deng, Yannis Panagakis, Stefanos Zafeiriou\\
Department of Computing, Imperial College London, UK, SW7 2AZ\\
}
\date{}

\maketitle

\begin{abstract}
We revisit the problem of robust principal component analysis with features acting as prior side information. To this aim, a novel, elegant, non-convex optimization approach is proposed to decompose a given observation matrix into a low-rank core and the corresponding sparse residual.
Rigorous theoretical analysis of the proposed algorithm results in exact recovery guarantees with low computational complexity. Aptly designed synthetic experiments demonstrate that our method is the first to wholly harness the power of non-convexity over convexity in terms of both recoverability and speed. That is, the proposed non-convex approach is more accurate and faster compared to the best available algorithms for the problem under study.
Two real-world applications, namely image classification and face denoising further exemplify the practical superiority of the proposed method.
\end{abstract}

\section{Introduction}
\noindent Many machine learning and artificial intelligence tasks involve the separation of a data matrix into a low-rank structure and a sparse part capturing different information. Robust principal component analysis (RPCA) \citet{candes11,chandrasekaran11} is a popular framework that logically characterizes this matrix separation problem.\\
\indent Nevertheless, prior side information, oftentimes in the form of features, may also be present in practice. For instance, features are available for the following tasks:\par
\renewcommand{\labelitemi}{$\textendash$}
\begin{itemize}
   \item  Collaborative filtering: apart from ratings of an item by other users, the profile of the user and the description of the item can also be exploited in making recommendations \citet{chiang15};
   \item Relationship prediction: user behaviours and message exchanges can assist in finding missing links on social media networks \citet{xu13};
   \item Person-specific facial deformable models: an orthonormal subspace learnt from manually annotated data captured in-the-wild, when fed into an image congealing procedure, can help produce more correct fittings \citet{sagnoas14}.
 \end{itemize}
It is thus reasonable to investigate how propitious it is for RPCA to exploit the available features. Indeed, 
recent results \citet{liu17} indicate that features are not redundant at all. In the setting of multiple subspaces, RPCA degrades as the number of subspaces grows because of the increased row-coherence. On the other hand, the use of feature dictionaries allows accurate low-rank recovery by removing the dependency on row-coherence. Despite the theoretical and practical merits of convexified RPCA with features, such as LRR \citet{liu10} and PCPF \citet{chiang16}, convex relaxations of the rank function and $l_0$-norm necessarily lead into \emph{algorithm weakening} \citet{chandrasekarana13}.\\
\indent On a separate note, recent advances in non-convex optimization algorithms continue to undermine their convex counterparts \citet{gong13,ge16,kohler17}. In particular, non-convex RPCA algorithms such as fast RPCA \citet{yi16} and AltProj \citet{netrapalli14} exhibit better properties than the convex formulation. Most recently, \citet{niranjan17} embedded features into a non-convex RPCA framework known as IRPCA-IHT with faster speed. However, it remains unclear as to whether features have been effectively incorporated into non-convex RPCA and the benefits of accuracy, speed and so on have been exploited as much as possible.\\
\indent In this work, we give positive answers to the above questions by proposing a novel, non-convex scheme that fully leverages features, which reveal true row and column subspaces, to decompose an observation matrix into a core matrix with given rank and a residual part with informed sparsity. Even though the proposed algorithm is inspired by the recently proposed fast RPCA \citet{yi16}, our contributions are by no means trivial, especially from a theoretical perspective. First, fast RPCA cannot be easily extended to consistently take account of features. Second, as we show in this paper, incoherence assumptions on the observation matrix and features play a decisive role in determining the corruption bound and the computational complexity of the non-convex algorithm. Third, fast RPCA is limited to a corruption rate of $50\%$ due to their choice of the hard threshold, whereas our algorithm ups this rate to $90\%$. Fourth, we prove that the costly projection onto factorized spaces is entirely optional when features satisfy certain incoherence conditions. Although our algorithm maintains the same corruption rate of $O(\frac{n}{r^{1.5}})$ and complexity of $O(rn^2\log(\frac{1}{\epsilon}))$ as fast RPCA, we show empirically that massive gains in accuracy and speed can still be obtained. Besides, the transfer of coherence dependency from observation to features means that our algorithm is capable of dealing with highly incoherent data.\\
\indent Unavoidably, features adversely affect tolerance to corruption in IRPCA-IHT ($O(\frac{n}{d})$) compared to its predecessor AltProj ($O(\frac{n}{r})$). This is not always true with our algorithm in relation to fast RPCA. And when the underlying rank is low but features are only weakly informative, \textit{i.e.} $r\ll d$, which is often the case, our tolerance to corruption is arguably better. IRPCA-IHT also has a higher complexity of $O((dn^2 + d^
2r)\log(\frac{1}{\epsilon}))$ than that of our algorithm. Although feature-free convex and non-convex algorithms have higher asymptotic error bounds than our algorithm, we show in our experiments that this does not translate as accuracy in reality. Our algorithm still has the best performance in recovering accurately the low-rank part from highly corrupted matrices. This may be attributed to the fact that our bounds are not tight. Besides, PCPF and AltProj have much higher complexity ($O(\frac{n^3}{\sqrt{\epsilon}})$ and $O(r^2n^2\log(\frac{1}{\epsilon}))$) than ours. For PCPF, there does not exist any theoretical analysis under the deterministic sparsity model. Nonetheless, we show in our experiments that our algorithm is superior with regard to both recoverability and running time. The overall contribution of this paper is as follows:
\renewcommand{\labelitemi}{$\bullet$}
\begin{itemize}
   \item  A novel non-convex algorithm integrating features with informed sparsity is proposed in order to solve  RPCA problem.
   \item We establish theoretical guarantees of exact recovery under different assumptions regarding the incoherence of features and observation.
   \item Extensive experimental results on synthetic data indicate that the proposed algorithm is faster and more accurate in low-rank matrix recovery than the compared state-of-the-art convex and non-convex methods for RPCA (with and without features).
   \item Experiments on two real-world datasets, namely MNIST and Yale B database demonstrate the practical merits of the proposed algorithm.
 \end{itemize}

\section{Notations}
Lowercase letters denote scalars and uppercase letters denote matrices, unless otherwise stated. $\mathbf{A}i\cdot$ and $\mathbf{A}\cdot j$ represent the $i^\text{th}$ row and the $j^\text{th}$ column of $\mathbf{A}$. Projection onto support set $\Omega$ is given by $\mathbf{\Pi}_\Omega$. $|\mathbf{A}|$ is the element-wise absolute value of matrix $\mathbf{A}$. For norms of matrix $\mathbf{A}$, $\Vert \mathbf{A}\Vert_F$ is the Frobenius norm; $\Vert \mathbf{A}\Vert_*$ is the nuclear norm; $\Vert\mathbf{A}\Vert_2$ is the largest singular value; otherwise, $\Vert\mathbf{A}\Vert_p$ is the $l_p$-norm of vectorized $\mathbf{A}$; and $\Vert \mathbf{A}\Vert_{2,\infty}$ is the maximum of matrix row $l_2$-norms. Moreover, $\langle \mathbf{A},\mathbf{B}\rangle$ represents tr($\mathbf{A}^T\mathbf{B}$) for real matrices $\mathbf{A},\mathbf{B}$. Additionally, $\sigma_i$ is the $i^\text{th}$ largest singular value of a matrix.\\
\indent The Euclidean metric is not applicable here because of the non-uniqueness of the bi-factorisation $\mathbf{L}^*=\mathbf{A}^*\mathbf{B}^{*T}$, which corresponds to a manifold rather than a point. Hence, we define the following distance between $(\mathbf{A},\mathbf{B})$ and any of the optimal pair $(\mathbf{A}^*,\mathbf{B}^*)$ such that $\mathbf{L}^*=\mathbf{A}^*\mathbf{B}^{*T}$:
\begin{equation}
	d(\mathbf{A},\mathbf{B},\mathbf{A}^*,\mathbf{B}^*)=\min_{\mathbf{R}}\sqrt{\Vert \mathbf{A}-\mathbf{A}^*\mathbf{R}\Vert_F^2+\Vert\mathbf{B}-\mathbf{B}^*\mathbf{R}\Vert_F^2},
\end{equation}
where $\mathbf{R}$ is an $r\times r$ orthogonal matrix.

\section{Related Work}
RPCA concerns a known observation matrix $\mathbf{M}$ which we are seeking to decompose into matrices $\mathbf{L}^*$, $\mathbf{S}^*$ such that $\mathbf{L}^*$ is low-rank and $\mathbf{S}^*$ is sparse and of arbitrary magnitude. Conceptually, it is equivalent to solving the following optimization problem:  
\begin{equation}
\label{eq:rpca}
    \min_{\mathbf{L},\mathbf{S}}\ \ \text{rank}(\mathbf{L})+\gamma\Vert\mathbf{S}\Vert_0\quad\text{subject to}\quad\mathbf{L}+\mathbf{S}=\mathbf{M},
\end{equation}
for appropriate $\gamma$. This problem, regrettably, is NP-hard.\\
\indent PCP \citet{wright09} replaces (\ref{eq:rpca}) with convex heuristics:
\begin{equation}
\label{eq:pcp}
    \min_{\mathbf{L},\mathbf{S}}\ \ \Vert\mathbf{L}\Vert_*+\gamma\Vert\mathbf{S}\Vert_1\quad\text{subject to}\quad\mathbf{L}+\mathbf{S}=\mathbf{M},
\end{equation}
for some $\gamma$. In spite of the simplification, PCP can exactly recover the solution of RPCA under the random model \citet{candes11} and the deterministic model \citet{chandrasekaran11,hsu11}.\\
\indent If feasible feature dictionaries, $\mathbf{X}$ and $\mathbf{Y}$, regarding row and column spaces are available, PCPF \citet{chiang16} makes use of these to generalize (\ref{eq:pcp}) to the below objective:
\begin{equation}
\label{eq:pcpf}
    \min_{\mathbf{H},\mathbf{S}}\ \ \Vert\mathbf{H}\Vert_*+\gamma\Vert\mathbf{S}\Vert_1\quad\text{subject to}\quad\mathbf{X}\mathbf{H}\mathbf{Y}^T+\mathbf{S}=\mathbf{M},
\end{equation}
for the same $\gamma$ as in (\ref{eq:pcp}). Convergence to the RPCA solution has only been established for the random sparsity model.\\
\indent AltProj \citet{netrapalli14} addresses RPCA by minimizing an entirely different objective:
\begin{equation}
\begin{split}
\label{eq:altproj}
    &\min_{\mathbf{L},\mathbf{S}}\ \ \Vert\mathbf{M} - \mathbf{L} -\mathbf{S}\Vert_F\\\text{subject to}\quad&\mathbf{L}\in\text{ set of low-rank matrices}\\&\mathbf{S}\in\text{ set of sparse matrices},
\end{split}
\end{equation}
where the search consists of alternating non-convex projections. That is, during each cycle, hard-thresholding takes place first to remove large entries and projection of appropriate residuals onto the set of low-rank matrices with increasing ranks is carried out next. Exact recovery has also been established.\\
\indent Fast RPCA \citet{yi16} follows yet another non-convex approach to solve RPCA. After an initialization stage, fast RPCA updates bilinear factors $\mathbf{U}$, $\mathbf{V}$ such that $\mathbf{L}=\mathbf{U}\mathbf{V}^T$ through a series of projected gradient descent and sparse estimations, where $\mathbf{U}$, $\mathbf{V}$ minimize the following loss:
\begin{equation}
	\min_{\mathbf{U},\mathbf{V}}\frac{1}{2}\Vert\mathbf{U}\mathbf{V}^T+\mathbf{S}-\mathbf{M}\Vert_F^2+\frac{1}{8}\Vert\mathbf{U}^T\mathbf{U}-\mathbf{V}^T\mathbf{V}\Vert_F^2,
\end{equation}
for $\mathbf{U}$, $\mathbf{V}$ properly constrained. Recovery guarantee is ensured.\\
\indent IRPCA-IHT \citet{niranjan17} includes features $\mathbf{X}$, $\mathbf{Y}$ in an iterative non-convex projection algorithm. Similar to AltProj, at each step, a new sparse estimate is calculated from hard thresholding via a monotonically decreasing threshold. After that, spectral hard thresholding takes place to attain the low-rank estimate. IRPCA-IHT provably converges to the solution of RPCA.\\
\indent We also mention here several works of non-convex objectives \citet{oh15,shang17}, though exact recovery guarantees are lacking.

\section{Problem Setup}
Suppose that there is a known data matrix $\mathbf{M}\in\mathbb{R}^{n_1\times n_2}$, which can be  decomposed into a low-rank component $\mathbf{L}^*$ and a sparse error  matrix $\mathbf{S}^*$ of compatible dimensions. Our aim is to identify these underlying matrices and hence robustly recover the low-rank component with the help of available side information in the form of feature matrices $\mathbf{X}$ and $\mathbf{Y}$.\\
\indent Concretely,  let  $\mathbf{L}^*=\mathbf{U}^*\mathbf{\Sigma}^*\mathbf{V}^{*T}$be the  singular value decomposition and $\mathbf{P}^*=\mathbf{X}^T\mathbf{U}^*\mathbf{\Sigma}^{*\frac{1}{2}}$ and $\mathbf{Q}^*=\mathbf{Y}^T\mathbf{V}^*\mathbf{\Sigma}^{*\frac{1}{2}}$. $\mathbf{S}^*$ follows the random sparsity model. That is, the support of $\mathbf{S}^*$ is chosen uniformly at random from the collection of all support sets of the same size. Furthermore, let us be informed of the proportion of non-zero entries per row and column, denoted by $\alpha$. Assume that there are also available features $\mathbf{X}\in\mathbb{R}^{n_1\times d_1}$ and $\mathbf{Y}\in\mathbb{R}^{n_2\times d_2}$ such that they are feasible, i.e. col($\mathbf{X}$)$\supseteq$col($\mathbf{U}^*$) and col($\mathbf{Y}$)$\supseteq$col($\mathbf{V}^*$) where col($\mathbf{A}$) is the column space of $\mathbf{A}$ and $\mathbf{X}^T\mathbf{X}=\mathbf{Y}^T\mathbf{Y}=\mathbf{I}$\footnote{This can always achieved via orthogonalisation.}. 

\indent In this paper, we discuss robust low-rank recovery using the above mentioned features and three different incoherence conditions: (i) $\Vert\mathbf{U}^*\Vert_{2,\infty}\le\sqrt{\frac{\mu_1r}{n_1}}$ and $\Vert\mathbf{V}^*\Vert_{2,\infty}\le\sqrt{\frac{\mu_1r}{n_2}}$; (ii) $\Vert\mathbf{X}\Vert_{2,\infty}\le\sqrt{\frac{\mu_2 d_1}{n_1}}$ and $\Vert\mathbf{Y}\Vert_{2,\infty}\le\sqrt{\frac{\mu_2 d_2}{n_2}}$; (iii) both (i) and (ii), where $r$ is the given rank of $\mathbf{L}^*$ and $\mu_1$, $\mu_2$ are constants.

\section{Algorithm}
	We use a non-convex approach to achieve the above objective. The algorithm consists of an initialization phase followed by a gradient descent phase. At each stage, we keep track of the factors $\mathbf{P}$, $\mathbf{Q}$ such that $\mathbf{L}=\mathbf{X}\mathbf{P}\mathbf{Q}^T\mathbf{Y}^T$.
	
	\subsection{Hard-thresholding}
	We first introduce the sparse estimator via hard-thresholding which is used in both phases. Given a threshold $\theta$, $\mathcal{T}_\theta(\mathbf{A})$ removes elements of $\mathbf{A}$ that are not among the largest $\theta$-fraction of elements in their respective rows and columns, breaking ties arbitrarily for equal elements:
	\begin{equation}
		\mathcal{T}_\theta(\mathbf{A})_{ij} =
		\begin{cases}
			0 & \quad \text{if } |\mathbf{A}_{ij}|\le \mathbf{A}^\theta{i\cdot} \text{ and } |\mathbf{A}_{ij}|\le \mathbf{A}^\theta{\cdot j},\\
			\mathbf{A}_{ij}  & \quad \text{otherwise},\\
		\end{cases}
	\end{equation}
	where $\mathbf{A}^\theta{i\cdot},\mathbf{A}^\theta{\cdot j}$ are the $(n_2\theta)^\text{th}$ and $(n_1\theta)^\text{th}$ largest element in absolute value in row $i$ and column $j$ respectively.
	
	\subsection{Initialization}
	$\mathbf{S}$ is first initialized as $\mathbf{S}_0=\mathcal{T}_\alpha(\mathbf{M})$. Next, we obtain $\mathbf{U}_0\mathbf{\Sigma}_0\mathbf{V}^T_0$ as the $r$-truncated SVD of $\mathbf{L}_0$, which is calculated via $\mathbf{L}_0 = \mathbf{M} - \mathbf{S}_0$. We can then construct $\mathbf{P}_0=\mathbf{X}^T\mathbf{U}_0\mathbf{\Sigma}_0^{\frac{1}{2}}$ and $\mathbf{Q}_0=\mathbf{Y}^T\mathbf{V}_0\mathbf{\Sigma}_0^{\frac{1}{2}}$. Such an initialization scheme gives $\mathbf{P}$, $\mathbf{Q}$ the desirable properties for use in the second phase.
	
	\subsection{Gradient Descent}
	In case (i), we need the following sets:
	\begin{equation}
		\mathcal{P}=\{\mathbf{A}\in\mathbb{R}^{d_1\times r}|\Vert\mathbf{X}\mathbf{A}\Vert_{2,\infty}\le\sqrt\frac{2\mu_1r}{n_1}\Vert\mathbf{P}_0\Vert_2\},
	\end{equation}
	\begin{equation}
		\mathcal{Q}=\{\mathbf{A}\in\mathbb{R}^{d_2\times r}|\Vert\mathbf{Y}\mathbf{A}\Vert_{2,\infty}\le\sqrt\frac{2\mu_1r}{n_2}\Vert\mathbf{Q}_0\Vert_2\}.
	\end{equation}
	Otherwise, we can simply take $\mathcal{P}$ as $\mathbb{R}^{d_1\times r}$ and $\mathcal{Q}$ as $\mathbb{R}^{d_2\times r}$.\\
	\indent To proceed, we first regularise $\mathbf{P}_0$ and $\mathbf{Q}_0$:
	\begin{equation}
		\mathbf{P}=\mathbf{\Pi}_\mathcal{P}(\mathbf{P}_0),\,\mathbf{Q}=\mathbf{\Pi}_\mathcal{Q}(\mathbf{Q}_0).
	\end{equation}
	\indent At each iteratiion, we first update $\mathbf{S}$ with the sparse estimator using a threshold of $\alpha+\min(10\alpha+0.1)$:
	\begin{equation}
		\mathbf{S} = \mathcal{T}_{\alpha+\min(10\alpha+0.1)}(\mathbf{M} - \mathbf{X}\mathbf{P}\mathbf{Q}^T\mathbf{Y}^T).
	\end{equation}
	\indent For $\mathbf{P}$, $\mathbf{Q}$, we define the following objective function 
	\begin{equation}
		\mathcal{L}(\mathbf{P},\mathbf{Q})=\frac{1}{2}\Vert\mathbf{X}\mathbf{P}\mathbf{Q}^T\mathbf{Y}^T+\mathbf{S}-\mathbf{M}\Vert_F^2+\frac{1}{64}\Vert\mathbf{P}^T\mathbf{P}-\mathbf{Q}^T\mathbf{Q}\Vert_F^2.
	\end{equation}
	$\mathbf{P}$ and $\mathbf{Q}$ are updated by minimizing the above function subject to the constraints imposed by the sets $\mathcal{P}$ and $\mathcal{Q}$. That is,
	\begin{equation}
		\mathbf{P} = \mathbf{\Pi}_\mathcal{P}(\mathbf{P}-\eta	\nabla_\mathbf{P}\mathcal{L}),
	\end{equation}
	\begin{equation}
		\mathbf{Q} = \mathbf{\Pi}_\mathcal{Q}(\mathbf{Q}-\eta	\nabla_\mathbf{Q}\mathcal{L}),
	\end{equation}
	where the step size $\eta$ is determined analytically below. With properly initialized $\mathbf{P}$ and $\mathbf{Q}$, such an optimization design converges to $\mathbf{P}^*$ and $\mathbf{Q}^*$.
	The  procedure is summarized in Algorithm \ref{alg:algo}.
	\begin{algorithm}[tb]
		\caption{Non-convex solver for robust principal component analysis with features}
		\label{alg:algo}
		\begin{algorithmic}[1]
			\REQUIRE Observation $\mathbf{M}$, features $\mathbf{X},\mathbf{Y}$, rank $r$, corruption approximation $\alpha$ and step size $\eta$.
			\STATEx {\bfseries Initialization:} \STATE $\mathbf{S}=\mathcal{T}_\alpha(\mathbf{M})$
			\STATE $\mathbf{U}\mathbf{\Sigma}\mathbf{V}^T=r$-SVD$(\mathbf{M} - \mathbf{S})$
			\STATE $\mathbf{P}=\mathbf{X}^T\mathbf{U}\mathbf{\Sigma}^{\frac{1}{2}}$
			\STATE $\mathbf{Q}=\mathbf{Y}^T\mathbf{V}\mathbf{\Sigma}^{\frac{1}{2}}$
			\STATEx {\bfseries Gradient descent:}
			\STATE$\mathbf{P}=\mathbf{\Pi}_\mathcal{P}(\mathbf{P})$
			\STATE$\mathbf{Q}=\mathbf{\Pi}_\mathcal{Q}(\mathbf{Q})$
			\WHILE{not converged}
			\STATE $\mathbf{S} = \mathcal{T}_{\alpha+\min(10\alpha+0.1)}(\mathbf{M} - \mathbf{X}\mathbf{P}\mathbf{Q}^T\mathbf{Y}^T)$
			\STATE $\mathbf{P} =\mathbf{\Pi}_\mathcal{P}(\mathbf{P}-\eta	\nabla_\mathbf{P}\mathcal{L})$
			\STATE $\mathbf{Q} = \mathbf{\Pi}_\mathcal{Q}(\mathbf{Q}-\eta	\nabla_\mathbf{Q}\mathcal{L})$
			\ENDWHILE
			\ENSURE $\mathbf{L}=\mathbf{X}\mathbf{P}\mathbf{Q}^T\mathbf{Y}^T$, $\mathbf{S}$
		\end{algorithmic}
	\end{algorithm}

\section{Analysis}
	We first provide theoretical justification of our proposed approach. Then we evaluate its computational complexity. The proofs can be found in the supplementary material.
	
	\subsection{Convergence}
	The initialization phase provides us with the following guarantees on $\mathbf{P}$ and $\mathbf{Q}$.
	\begin{innercustomthm}
		In cases (i) and (iii), if $\alpha\le\frac{1}{16\kappa r\mu_1}$, we have
		\begin{equation}
			d(\mathbf{P}_0,\mathbf{Q}_0,\mathbf{P}^*,\mathbf{Q}^*)\le18\alpha r\mu_1\sqrt{r\kappa\sigma_1^*}.
		\end{equation}
		In case (ii), if $\alpha\le\frac{1}{16\kappa\mu_2\sqrt{d_1d_2}}$, we have
		\begin{equation}
			d(\mathbf{P}_0,\mathbf{Q}_0,\mathbf{P}^*,\mathbf{Q}^*)\le18\alpha\mu_2\sqrt{rd_1d_2\kappa\sigma_1^*},
		\end{equation}
		where $\kappa$ is the condition number of $L^*$ and $d$ is a distance metric defined in the appendix.
	\end{innercustomthm}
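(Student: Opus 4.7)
The plan is to decompose the analysis into three stages mirroring the initialization in Algorithm~\ref{alg:algo}: (a) control the spectral error introduced by the hard-thresholding $\mathbf{S}_0=\mathcal{T}_\alpha(\mathbf{M})$, (b) pass this error through the rank-$r$ truncated SVD of $\mathbf{L}_0=\mathbf{M}-\mathbf{S}_0$ to obtain a perturbation bound on the factors $(\mathbf{U}_0\mathbf{\Sigma}_0^{1/2},\mathbf{V}_0\mathbf{\Sigma}_0^{1/2})$, and finally (c) convert this perturbation into a bound on the feature-coordinate factors $\mathbf{P}_0=\mathbf{X}^T\mathbf{U}_0\mathbf{\Sigma}_0^{1/2}$ and $\mathbf{Q}_0=\mathbf{Y}^T\mathbf{V}_0\mathbf{\Sigma}_0^{1/2}$.

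First I would bound $\mathbf{E}:=\mathbf{L}_0-\mathbf{L}^*=\mathbf{S}^*-\mathbf{S}_0$ in operator norm. The sparse estimator keeps only entries that are among the top $\alpha$-fraction in both their row and column, so $\mathbf{E}$ has at most $2\alpha n_2$ nonzeros per row and $2\alpha n_1$ per column, each bounded by $\max(\|\mathbf{L}^*\|_\infty,\|\mathbf{S}^*\|_\infty)$. Under assumption (i), $|\mathbf{L}^*_{ij}|\le\|\mathbf{U}^*_{i\cdot}\|_2\sigma_1^*\|\mathbf{V}^*_{j\cdot}\|_2\le\mu_1 r\sigma_1^*/\sqrt{n_1n_2}$; under assumption (ii), using $\mathbf{L}^*=\mathbf{X}\mathbf{P}^*\mathbf{Q}^{*T}\mathbf{Y}^T$ together with $\|\mathbf{X}\|_{2,\infty}$, $\|\mathbf{Y}\|_{2,\infty}$ yields $|\mathbf{L}^*_{ij}|\le\mu_2\sqrt{d_1d_2}\sigma_1^*/\sqrt{n_1n_2}$. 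The standard inequality $\|\mathbf{E}\|_2\le\sqrt{(\text{row sparsity})\cdot(\text{col sparsity})}\,\|\mathbf{E}\|_\infty$ then produces the two regime-dependent spectral bounds, of orders $\alpha\mu_1 r\sigma_1^*$ and $\alpha\mu_2\sqrt{d_1d_2}\sigma_1^*$ respectively. The assumption on $\alpha$ is exactly what forces $\|\mathbf{E}\|_2\le \sigma_r^*/c$ for a suitable constant, which is the hypothesis needed for the perturbation bounds of the next step.

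Next I would invoke a perturbation result for the truncated SVD. Since $\mathbf{U}_0\mathbf{\Sigma}_0\mathbf{V}_0^T$ is the best rank-$r$ approximation of $\mathbf{L}_0=\mathbf{L}^*+\mathbf{E}$, Weyl plus a Wedin/polar-decomposition argument (as used in fast RPCA) gives
\begin{equation*}
\min_{\mathbf{R}}\bigl(\|\mathbf{U}_0\mathbf{\Sigma}_0^{1/2}-\mathbf{U}^*\mathbf{\Sigma}^{*1/2}\mathbf{R}\|_F^2+\|\mathbf{V}_0\mathbf{\Sigma}_0^{1/2}-\mathbf{V}^*\mathbf{\Sigma}^{*1/2}\mathbf{R}\|_F^2\bigr)\lesssim \frac{\|\mathbf{E}\|_2^2\,r}{\sigma_r^*}.
\end{equation*}
Finally, feasibility of the features (col$(\mathbf{X})\supseteq$col$(\mathbf{U}^*)$ with $\mathbf{X}^T\mathbf{X}=\mathbf{I}$) implies that $\mathbf{X}^T\mathbf{U}^*\mathbf{\Sigma}^{*1/2}=\mathbf{P}^*$, and that left-multiplication by $\mathbf{X}^T$ is an isometry on col$(\mathbf{X})$. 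Since $\mathbf{U}_0\in\text{col}(\mathbf{L}_0)\subseteq\text{col}(\mathbf{X})$ by construction of $\mathbf{L}_0$ (one verifies this from $\mathbf{M}-\mathcal{T}_\alpha(\mathbf{M})$ lying in the feature subspace after a short argument, or argues via projection onto col$(\mathbf{X})$), one gets $\|\mathbf{P}_0-\mathbf{P}^*\mathbf{R}\|_F\le\|\mathbf{U}_0\mathbf{\Sigma}_0^{1/2}-\mathbf{U}^*\mathbf{\Sigma}^{*1/2}\mathbf{R}\|_F$, and similarly for $\mathbf{Q}_0$. Combining the three ingredients with $\kappa=\sigma_1^*/\sigma_r^*$ produces the two stated bounds.

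The main obstacle I expect is tightening the constants in Step 2 so that the final prefactor collapses to $18$, because the off-the-shelf Wedin bound is loose. Overcoming this typically requires using the optimality of the rank-$r$ SVD directly, through an inequality of the form $\|\mathbf{U}_0\mathbf{\Sigma}_0\mathbf{V}_0^T-\mathbf{L}^*\|_F\le 2\sqrt{r}\|\mathbf{E}\|_2$, and then a careful polar-decomposition step to balance $\mathbf{U}_0\mathbf{\Sigma}_0^{1/2}$ against $\mathbf{V}_0\mathbf{\Sigma}_0^{1/2}$ via the optimal orthogonal $\mathbf{R}$. A secondary subtlety is that case (ii) forbids any use of the singular-vector incoherence $\mu_1$; the entry-wise bound on $\mathbf{L}^*$ must be derived purely from $\mu_2$ and $\|\mathbf{X}\|_{2,\infty},\|\mathbf{Y}\|_{2,\infty}$, which is precisely what introduces the $\sqrt{d_1d_2}$ dependency in place of $r$ and changes the threshold on $\alpha$ accordingly.
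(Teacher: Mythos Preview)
Your three-stage outline matches the paper's proof, but there are two genuine gaps in the execution.

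In stage (a), the entry-wise bound $\max(\|\mathbf{L}^*\|_\infty,\|\mathbf{S}^*\|_\infty)$ on $\mathbf{E}=\mathbf{S}^*-\mathbf{S}_0$ is useless: the model allows $\mathbf{S}^*$ to have arbitrary magnitude, so this bound gives no control on $\|\mathbf{E}\|_2$ and the rest of the argument collapses. The correct statement (the paper's Lemma~\ref{t11}, from Yi et al.) is $\|\mathbf{S}^*-\mathbf{S}_0\|_\infty\le 2\|\mathbf{L}^*\|_\infty$, independent of $\|\mathbf{S}^*\|_\infty$. The mechanism is that any entry of $\mathbf{S}^*$ missed by $\mathcal{T}_\alpha$ has $|\mathbf{M}_{ij}|$ dominated by at least $\alpha n$ other entries in its row; since at most $\alpha n$ of those lie in $\mathrm{supp}(\mathbf{S}^*)$, one of the dominating entries is a pure $\mathbf{L}^*$ entry, forcing $|\mathbf{M}_{ij}|\le\|\mathbf{L}^*\|_\infty$ and hence $|\mathbf{S}^*_{ij}|\le 2\|\mathbf{L}^*\|_\infty$. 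This is not a constant-tightening issue; without it the proof does not start.

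In stage (c), the inclusion $\mathbf{U}_0\in\mathrm{col}(\mathbf{L}_0)\subseteq\mathrm{col}(\mathbf{X})$ is false: $\mathbf{L}_0=\mathbf{L}^*+(\mathbf{S}^*-\mathbf{S}_0)$, and the sparse residual $\mathbf{S}^*-\mathbf{S}_0$ has no reason to lie in $\mathrm{col}(\mathbf{X})$, so the ``short argument'' you allude to does not exist. The paper avoids this entirely (Lemma~\ref{t17}): since $\mathbf{X}^T\mathbf{X}=\mathbf{I}$ gives $\|\mathbf{X}^T\|_2=1$, left multiplication by $\mathbf{X}^T$ is a Frobenius contraction, and therefore
\[
d(\mathbf{X}^T\mathbf{U}_0\mathbf{\Sigma}_0^{1/2},\mathbf{Y}^T\mathbf{V}_0\mathbf{\Sigma}_0^{1/2},\mathbf{P}^*,\mathbf{Q}^*)\;\le\; d(\mathbf{U}_0\mathbf{\Sigma}_0^{1/2},\mathbf{V}_0\mathbf{\Sigma}_0^{1/2},\mathbf{U}^*\mathbf{\Sigma}^{*1/2},\mathbf{V}^*\mathbf{\Sigma}^{*1/2}),
\]
with no hypothesis whatsoever on $\mathbf{U}_0$. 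This is presumably what you meant by ``argues via projection onto $\mathrm{col}(\mathbf{X})$'', but the point is that it is a one-line contraction inequality, not a statement about where $\mathbf{U}_0$ lives.

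With these two fixes your plan coincides with the paper's: Lemma~\ref{t11} plus Lemma~\ref{t12} give $\|\mathbf{M}-\mathbf{S}_0-\mathbf{L}^*\|_2\le 4\alpha n\|\mathbf{L}^*\|_\infty$; Weyl (Lemma~\ref{t16}) and the triangle inequality double this to bound $\|\mathbf{U}_0\mathbf{\Sigma}_0\mathbf{V}_0^T-\mathbf{L}^*\|_2$; the Tu et al.\ factorization lemma (Lemma~\ref{t13}) converts to the $d$-metric; and Lemma~\ref{t17} passes to $(\mathbf{P}_0,\mathbf{Q}_0)$. For case~(ii) the paper does exactly what you propose, bounding $\|\mathbf{U}^*\|_{2,\infty}\le\|\mathbf{X}\|_{2,\infty}$ via $\mathbf{U}^*=\mathbf{X}\mathbf{X}^T\mathbf{U}^*$, which produces the $\sqrt{d_1d_2}$ in place of $r$.
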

	\begin{innercustomthm}
		For $\eta\le\frac{1}{192\Vert\mathbf{L}_0\Vert_2}$, there exist constants $c_1>0$, $c_2>0$, $c_3>0$, $c_4>0$, $c_5>0$ and $c_6>0$ such that, in case (i), when $\alpha\le\frac{c_1}{\mu_1(\kappa r)^\frac{3}{2}}$, we have the following relationship
		\begin{equation}
		d(\mathbf{P}_t,\mathbf{Q}_t,\mathbf{P}^*,\mathbf{Q}^*)^2\le(1-c_2\eta\sigma_r^*)^td(\mathbf{P}_0,\mathbf{Q}_0,\mathbf{P}^*,\mathbf{Q}^*)^2,
		\end{equation}
		in case (ii), when $\alpha\le\frac{c_3}{\mu_2dr^\frac{1}{2}\kappa^\frac{3}{2}}$, we have
		\begin{equation}
		d(\mathbf{P}_t,\mathbf{Q}_t,\mathbf{P}^*,\mathbf{Q}^*)^2\le(1-c_4\eta\sigma_r^*)^td(\mathbf{P}_0,\mathbf{Q}_0,\mathbf{P}^*,\mathbf{Q}^*)^2.
		\end{equation}
		and in case (iii), when
		$\alpha\le c_5\min(\frac{1}{\mu_2d\kappa},\frac{1}{\mu_1(\kappa r)^\frac{3}{2}})$, we have
		\begin{equation}
		d(\mathbf{P}_t,\mathbf{Q}_t,\mathbf{P}^*,\mathbf{Q}^*)^2\le(1-c_6\eta\sigma_r^*)^td(\mathbf{P}_0,\mathbf{Q}_0,\mathbf{P}^*,\mathbf{Q}^*)^2.
		\end{equation}
	\end{innercustomthm}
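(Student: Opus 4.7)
The plan is to establish a one-step contraction of the form
\begin{equation*}
d(\mathbf{P}_{t+1},\mathbf{Q}_{t+1},\mathbf{P}^*,\mathbf{Q}^*)^2 \le (1-c\eta\sigma_r^*)\,d(\mathbf{P}_t,\mathbf{Q}_t,\mathbf{P}^*,\mathbf{Q}^*)^2,
\end{equation*}
and then iterate it. Theorem 1 supplies a basin-of-attraction initialization, so it suffices to prove the contraction inside a neighbourhood $\mathcal{N} = \{(\mathbf{P},\mathbf{Q}) : d(\mathbf{P},\mathbf{Q},\mathbf{P}^*,\mathbf{Q}^*)^2 \le c_0\sigma_r^*\}$ that Theorem 1 places us in.

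The first step is to decouple the sparse estimate from the factor update. I would show that for $(\mathbf{P},\mathbf{Q})\in\mathcal{N}$, the output $\mathbf{S} = \mathcal{T}_{\alpha+\min(10\alpha+0.1)}(\mathbf{M}-\mathbf{X}\mathbf{P}\mathbf{Q}^T\mathbf{Y}^T)$ of the hard-thresholding operator satisfies an entrywise bound
\begin{equation*}
|\mathbf{S}-\mathbf{S}^*| \le 2\,|\mathbf{X}\mathbf{P}\mathbf{Q}^T\mathbf{Y}^T - \mathbf{L}^*|\;\;\text{on }\;\mathrm{supp}(\mathbf{S}-\mathbf{S}^*),
\end{equation*}
because the sparsity parameter leaves enough room to contain $\mathrm{supp}(\mathbf{S}^*)$ in $\mathrm{supp}(\mathbf{S})$. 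Combining this with the incoherence bounds on $\mathbf{X}\mathbf{P}\mathbf{Q}^T\mathbf{Y}^T$ gives a bound of the form $\|\mathbf{S}-\mathbf{S}^*\|_F^2 \le C\,\alpha\,n\,\|\mathbf{X}\mathbf{P}\mathbf{Q}^T\mathbf{Y}^T-\mathbf{L}^*\|_{2,\infty}^2$, where the scaling of $n$ with $\alpha$ depends on which incoherence case we are in: in case (i) the bound is controlled through $\mu_1 r$ via $\Vert\mathbf{U}^*\Vert_{2,\infty}$, while in case (ii) the bound is controlled through $\mu_2 d_1 d_2$ via $\Vert\mathbf{X}\Vert_{2,\infty}$ and $\Vert\mathbf{Y}\Vert_{2,\infty}$. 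This accounts for the three different thresholds on $\alpha$ in the theorem statement.

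Next I would analyse one gradient step with the ideal residual $\mathbf{S}^*$. Using $\mathbf{X}^T\mathbf{X}=\mathbf{Y}^T\mathbf{Y}=\mathbf{I}$, the gradients take the explicit form $\nabla_{\mathbf{P}}\mathcal{L} = \mathbf{X}^T(\mathbf{X}\mathbf{P}\mathbf{Q}^T\mathbf{Y}^T+\mathbf{S}-\mathbf{M})\mathbf{Y}\mathbf{Q}+\tfrac{1}{16}\mathbf{P}(\mathbf{P}^T\mathbf{P}-\mathbf{Q}^T\mathbf{Q})$ and symmetrically for $\mathbf{Q}$. Expanding the squared distance to the optimum and invoking the local regularity conditions that are standard for factored matrix problems (quadratic-growth lower bound plus Lipschitz gradient upper bound) gives a descent inequality
\begin{equation*}
d_{t+1}^2 \le d_t^2 - 2\eta\langle \nabla\mathcal{L}, \mathrm{err}\rangle + \eta^2\|\nabla\mathcal{L}\|_F^2,
\end{equation*}
where the inner-product term is controlled from below by $c\sigma_r^*\,d_t^2$ using the balanced regularizer $\frac{1}{64}\|\mathbf{P}^T\mathbf{P}-\mathbf{Q}^T\mathbf{Q}\|_F^2$ (which keeps the factors well-conditioned and lets the $r$-th singular value of $\mathbf{L}_t$ be close to that of $\mathbf{L}^*$), and the squared-gradient term is bounded by $\|\mathbf{L}_0\|_2 \cdot \eta \cdot (\text{inner product})$ once $\eta \le \tfrac{1}{192\|\mathbf{L}_0\|_2}$. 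I would then add the perturbation from the previous step, showing that the sparse error contributes at most a constant times $\alpha\,\sigma_1^*$ times $d_t^2$ to the right-hand side, which the constraint on $\alpha$ absorbs into the main contraction.

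The last step is to verify that the Euclidean projections onto $\mathcal{P}$ and $\mathcal{Q}$ (convex sets, or all of $\mathbb{R}^{d_i\times r}$ in case (ii) where the incoherence on $\mathbf{X},\mathbf{Y}$ replaces the explicit projection) do not increase the distance metric, by invoking the existence of the orthogonal $\mathbf{R}$ in the definition of $d(\cdot)$ together with non-expansiveness of convex projection. The main obstacle I expect is the case-by-case bookkeeping in step one: the sparse-error bound must be converted into a bound in terms of $d_t$, and this conversion routes through $\|\mathbf{X}\mathbf{P}\mathbf{Q}^T\mathbf{Y}^T-\mathbf{L}^*\|_{2,\infty}$, which is cleanly controlled in case (i) by $\mu_1 r$ through the invariance of the projection sets, but in case (ii) requires using the $\mu_2$ incoherence of $\mathbf{X},\mathbf{Y}$ to propagate a Frobenius-norm bound on $\mathbf{P}\mathbf{Q}^T-\mathbf{P}^*\mathbf{Q}^{*T}$ to a row-wise bound — this is where the extra $d_1 d_2$ factors in the admissible $\alpha$ appear, and getting the constants to close under the stated step size is the delicate part.
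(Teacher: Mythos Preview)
Your high-level plan coincides with the paper's: establish a one-step contraction by expanding $d_{t+1}^2$, split the gradient into the data-fit part $\mathcal{H}$ and the balancing regularizer $\mathcal{G}$, lower-bound the first-order inner products, upper-bound the squared gradients, and iterate; the case split by incoherence assumption and the non-expansiveness of the projections are also handled the same way.

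There is, however, a concrete gap in your treatment of the sparse estimate. You claim that ``the sparsity parameter leaves enough room to contain $\mathrm{supp}(\mathbf{S}^*)$ in $\mathrm{supp}(\mathbf{S})$,'' and from this derive the pointwise bound $|\mathbf{S}-\mathbf{S}^*|\le 2|\mathbf{L}-\mathbf{L}^*|$ on the support. The containment is false in general: $\mathcal{T}_\theta$ retains the \emph{largest} entries of $\mathbf{M}-\mathbf{L}$, while entries of $\mathbf{S}^*$ can be arbitrarily small, so $\Omega^*\setminus\Omega$ is typically nonempty. On that set $(\mathbf{S}-\mathbf{S}^*)_{ij}=-\mathbf{S}^*_{ij}$, and there is no pointwise relation to $(\mathbf{L}-\mathbf{L}^*)_{ij}$. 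The paper never bounds $\Vert\mathbf{S}-\mathbf{S}^*\Vert_F$ directly; it bounds the inner product $\langle\mathbf{S}-\mathbf{S}^*,\mathbf{L}-\mathbf{L}^*\rangle$ that actually arises in the descent expansion, splitting it (via the argument of Yi et al.) into
\[
\Vert\Pi_\Omega(\mathbf{L}-\mathbf{L}^*)\Vert_F^2+(1+\tfrac{\beta}{2})\Vert\Pi_{\Omega^*\setminus\Omega}(\mathbf{L}-\mathbf{L}^*)\Vert_F^2+\tfrac{\alpha}{\beta\min(10\alpha,0.1)}\Vert\mathbf{L}-\mathbf{L}^*\Vert_F^2.
\]
The last term comes from a counting argument: since each row of $\Omega$ has at least $\min(10\alpha,0.1)\,n$ more entries than the corresponding row of $\Omega^*$, every missed entry in $\Omega^*\setminus\Omega$ is dominated in magnitude by that many entries of $|\mathbf{L}-\mathbf{L}^*|$ in the same row, which after averaging yields a Frobenius bound---not the entrywise one you wrote.

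A second, related issue is the form $\Vert\mathbf{S}-\mathbf{S}^*\Vert_F^2\le C\alpha n\Vert\mathbf{L}-\mathbf{L}^*\Vert_{2,\infty}^2$: the factor of $n$ here would wreck the final condition on $\alpha$. The paper avoids it by bounding each $|(\mathbf{L}-\mathbf{L}^*)_{ij}|$ through the row norms $\Vert(\mathbf{X}\Delta\mathbf{P})_{i\cdot}\Vert_2$ and $\Vert(\mathbf{Y}\Delta\mathbf{Q})_{j\cdot}\Vert_2$ (Lemma~\ref{t22}) and then summing over a support with at most $\alpha n$ entries per row and column, which produces the dimension-free bound $\Vert\Pi_\Omega(\mathbf{L}-\mathbf{L}^*)\Vert_F^2\lesssim \alpha\mu_1 r\sigma_1^*\delta$ in case (i) and $\alpha\mu_2 d\sigma_1^*\delta$ in cases (ii)--(iii). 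Correcting these two points---the handling of $\Omega^*\setminus\Omega$ and the dimension-free support-restricted Frobenius bound---is what lets the constants close under the stated constraints on $\alpha$ and $\eta$.
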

	
	\subsection{Complexity}
	From \textbf{Theorem 2}, it follows that our algorithm converges at a linear rate under assumptions (ii) and (iii). To converge below $\epsilon$ of the initial error, $O(\text{log}(\frac{1}{\epsilon}))$ iterations are needed. At each iteration, the most costly step is matrix multiplication which takes $O(rn^2)$ time. Overall, our algorithm has total running time of $O(rn^2\text{log}(\frac{1}{\epsilon}))$.
	
\section{Experimental results}
	\begin{figure}[h!]
	\centering
\includegraphics[width=0.8\linewidth]{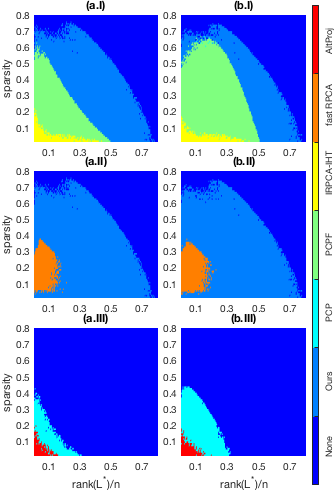}
   \caption{Domains of recovery by various algorithms: (a) for random signs and
(b) for coherent signs.}
    \label{fig:recover}
\end{figure}
	We have found that when the step size is set to 0.5, reasonable results can be obtained. For all algorithms in comparison, we run a total of 3000 iterations or until $\Vert \mathbf{M} - \mathbf{L} - \mathbf{S}\Vert_F/\Vert\mathbf{M}\Vert_F<10^{-7}$ is met.
	\subsection{Phase transition}
    Here, we vary the rank and the error sparsity to investigate the behavior of both our algorithm and existing state-of-art algorithms in terms of recoverability. True low-rank matrices are created via $\mathbf{L}^*=\mathbf{J}\mathbf{K}^T$, where $200\times r$ matrices $\mathbf{J},\mathbf{K}$ have independent elements drawn randomly from a Gaussian distribution of mean $0$ and variance $5\cdot10^{-3}$ so $r$ becomes the rank of $\mathbf{L}^*$. Next, we corrupt each column of $\mathbf{L}^*$ such that $\alpha$ of the elements are set independently with magnitude $\mathcal{U}(0,\frac{r}{40})$. However, this does not guarantee $\alpha$ row corruption. We thus select only matrices whose maximum row corruption does not exceed $\alpha+6.5\%$ but we still feed $\alpha$ to the algorithms in order to demonstrate that our algorithm does not need the exact value of corruption ratio. We consider two types of signs for error: Bernoulli $\pm1$ and $\text{sgn}(\mathbf{L}^*)$. The resulting $\mathbf{M}$ thus becomes the simulated observation. In addition, let $\mathbf{L}^*=\mathbf{U}\mathbf{\Sigma}\mathbf{V}^T$ be the SVD of $\mathbf{L}^*$. Feature $\mathbf{X}$ is formed by randomly interweaving column vectors of $\mathbf{U}$ with 5 arbitrary orthonormal bases for the null space of $\mathbf{U}^T$, while permuting the expanded columns of $\mathbf{V}$ with 5 random orthonormal bases for the kernel of $\mathbf{V}^T$ forms feature $\mathbf{Y}$. Hence, the feasibility conditions are fulfilled: col$(\mathbf{X})\supseteq$col$(\mathbf{L}_0)$, col$(\mathbf{Y})\supseteq$col$(\mathbf{L}_0^T)$. For each $(r,\alpha)$ pair, three observations are constructed. The recovery is successful if for all these three problems, 
\begin{equation}
    \frac{\Vert\mathbf{L}-\mathbf{L}^*\Vert_F}{\Vert\mathbf{L}^*\Vert_F}<10^{-3}
\end{equation}
from the recovered $\mathbf{L}$.\\

\indent Figures \ref{fig:recover}(I) plot results from algorithms incorporating features.
Besides, our algorithm contrasts with fast RPCA in Figure \ref{fig:recover}(II). Other feature-free algorithms are investigated in Figure \ref{fig:recover}(III). Figures \ref{fig:recover}(a) illustrate the random sign model and Figures \ref{fig:recover}(b) for the coherent sign model. All previous non-convex attempts fail to outperform their convex equivalents. IRPCA-IHT is unable to deal with even moderate levels of corruption. The frontier of recoverability that has been advanced by our algorithm over PCPF is phenomenal, massively ameliorating fast RPCA. The anomalous asymmetry in the two sign models is no longer observed in non-convex algorithms.

\subsection{Running Time}
\begin{figure}[h!]
\centering
\includegraphics[width=0.8\linewidth]{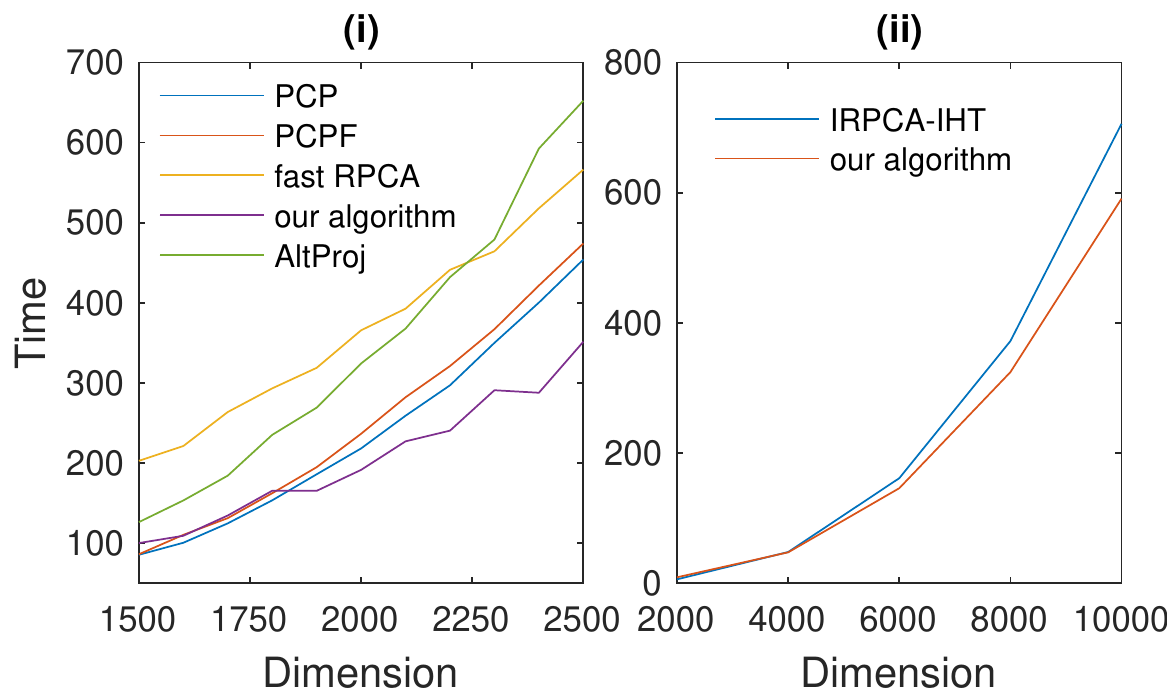}
   \caption{(i) Running times for observation matrices of increasing dimensions for (i) PCP, PCPF, fast RPCA, AltProj, our algorithm and (ii) IRPCA-IHT and our algorithm when $\frac{\Vert\mathbf{L}-\mathbf{L}^*\Vert_F}{\Vert\mathbf{L}^*\Vert_F}\le1\%$.}
    \label{fig:time}
\end{figure}
Next, we highlight the speed of our algorithm for large-scale matrices, typical of video sequences \citet{xiong16}. 1500$\times$1500 to 2500$\times$2500 random observation matrices are generated, where the rank is chosen to be 20$\%$ of the column number and random sign error corrupts 11$\%$ of the entries, with features $\mathbf{X},\mathbf{Y}$ having a dimension of 50$\%$ of the column number. The running times of all algorithms except IRPCA-IHT are plotted in \ref{fig:time} (i) because IRPCA-IHT is not able to achieve a relative error ($\frac{\Vert\mathbf{L}-\mathbf{L}^*\Vert_F}{\Vert\mathbf{L}^*\Vert_F}$) less than 1$\%$ for larger matrices. For fair comparison, we have relaxed the rank to 0.3$\%$ of the column number and error rate to 0.1$\%$ to compare our algorithm with IRPCA-IHT for matrices ranging from 2000$\times$2000 to 10000$\times$10000. We have used features $\mathbf{X},\mathbf{Y}$ having a dimension of 80$\%$ of the column number to speed up the process. The result is shown in Figure \ref{fig:time} (ii). All times are averaged over three trials. It is evident that, for large matrices, our algorithm overtakes all existing algorithms in terms of speed. Note that features in PCPF even slow down the recovery process.

\begin{table*}
\centering
\begin{tabular}{ |c|c|c|c|c|c|c|c|c| }
\hline
$\alpha$ & clean & noisy & PCP & PCPF & AltProj & IRPCA-IHT & fast RPCA & our algorithm \\
\hline
10 & & 30.45 & 82.75 & 83.35 & 81.4 & 65.2 & 81.1 & \textbf{86.9} \\
\cline{0-0}\cline{3-9}
15 & & 25.1 & 82.95 & 83.4 & 81.15 & 49.65 & 79.65 & \textbf{84.8} \\
\cline{0-0}\cline{3-9}
20 & 89.65 & 23.15 & 83.5 & 84 & 79.3 & 37.8 & 78.65 & \textbf{83.8} \\
\cline{0-0}\cline{3-9}
25 & & 18.65 & 81.35 & 82.65 & 74.05 & 30.35 & 75.3 & \textbf{83.15} \\
\cline{0-0}\cline{3-9}
30 & & 18.6 & 77.95 & 79 & 71.5 & 24.1 & 72.9 & \textbf{82.05} \\
\cline{0-0}\cline{3-9}
35 & & 16.95 & 71.2 & 73.4 & 67.75 & 21.05 & 71.45 & \textbf{79.05} \\
\hline
\end{tabular}
\caption{Classification results obtained by a linear SVM.}
\label{table:linear}
\end{table*}

\begin{table*}
\centering
\begin{tabular}{ |c|c|c|c|c|c|c|c|c| }
\hline
$\alpha$ & clean & noisy & PCP & PCPF & AltProj & IRPCA-IHT & fast RPCA & our algorithm \\
\hline
10 & & 87 & 87.25 & 87.3 & 86.45 & 89.3 & 89.25 & \textbf{90.3} \\
\cline{0-0}\cline{3-9}
15 & & 75.85 & 87.15 & 87.4 & 86.75 & 82.85 & 87.2 & \textbf{89.8} \\
\cline{0-0}\cline{3-9}
20 & 92.25 & 64.35 & 87.6 & 87.55 & 84.65 & 71.2 & 85.55 & \textbf{88.55} \\
\cline{0-0}\cline{3-9}
25 & & 55.85 & 87 & 86.95 & 79.4 & 62.35 & 82.65 & \textbf{87.8} \\
\cline{0-0}\cline{3-9}
30 & & 47.15 & 81.15 & 81.55 & 76.75 & 53.5 & 78.3 & \textbf{85.65} \\
\cline{0-0}\cline{3-9}
35 & & 40.55 & 74.8 & 75.7 & 71 & 47.4 & 76.75 & \textbf{85.15} \\
\hline
\end{tabular}
\caption{Classification results obtained by an SVM with RBF kernel.}
\label{table:kernel}
\end{table*}

\subsection{Image Classification}
\begin{figure}[t!]
\begin{center}
\includegraphics[width=0.6\linewidth]{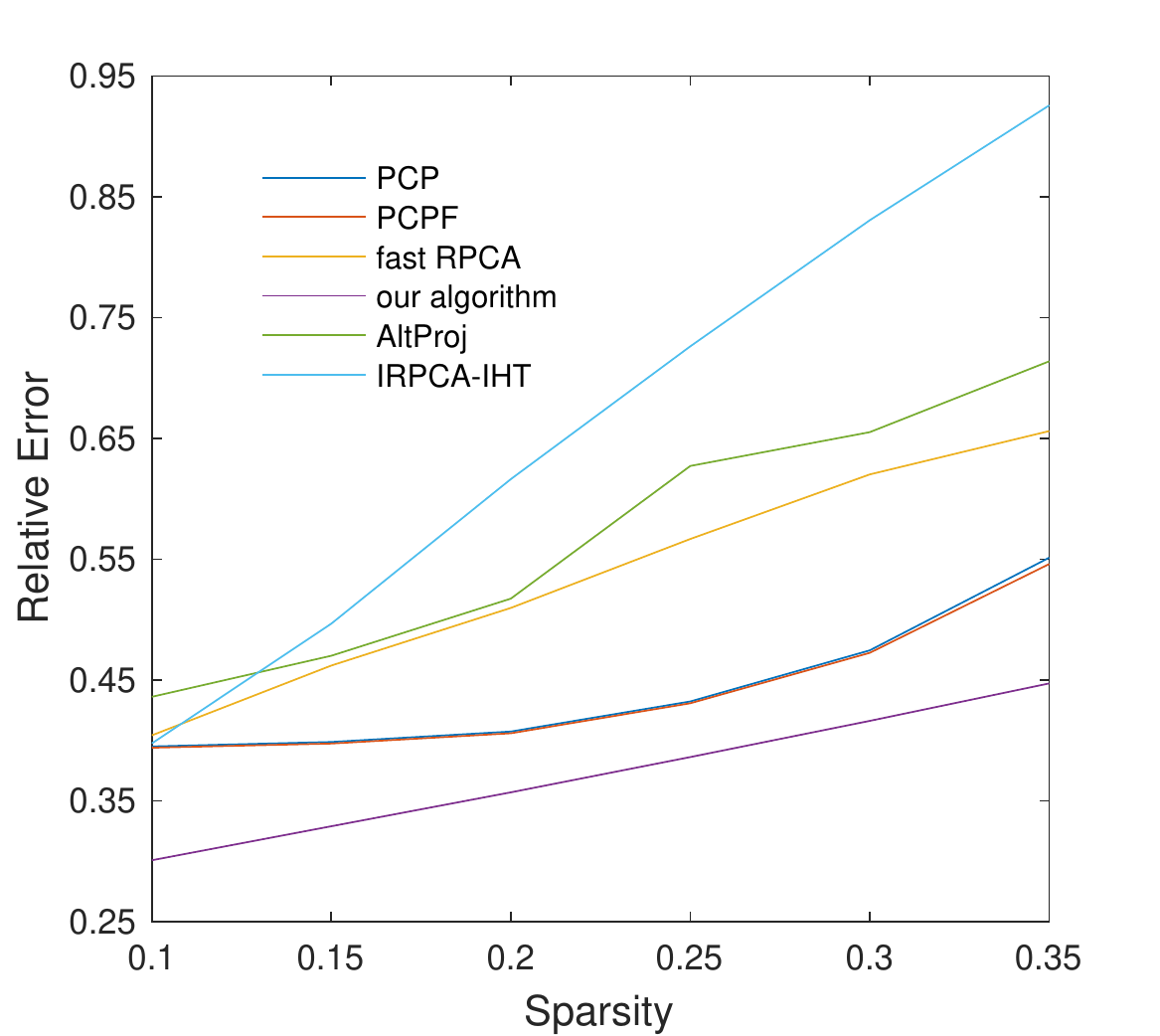}
\end{center}
   \caption{Relative error ($\frac{\Vert\mathbf{L}-\mathbf{L}^*\Vert_F}{\Vert\mathbf{L^*}\Vert_F}$) for sparsity values: 10$\%$, 15$\%$, 20$\%$, 25$\%, 
   30$\%, 35$\%$.}
    \label{fig:mnistError}
\end{figure}
Once images are denoised, classification can be performed on them. The classification results directly reflect the image denoising ability. For a set of correlated images, low-rank algorithms are normally used to remove noise that is sparse. The same classifier is thus able to compare the different low-rank models.

The MNIST dataset is such an example which contains hand-written digits divided into training and testing sets. Let the observation matrix be composed of 2000 vectorized random images from the test set stacked column-wise. In this case, the left feature obtained from the training set is also applicable to the test set because of the Eigendigit nature. This imparts our algorithm to supervised learning where there are clean related training samples available. The right feature does not posses such property and is set to the identity matrix. We add a range of sparse noise to the test set separately where the noise sets the pixel to 255. For PCPF, we take $d=300$ as in \citet{chiang16} and for IRPCA-IHT and our algorithm we use $d=150$ instead. 

The relative error between the recovered matrix by the competing algorithms and the clean test matrix is plotted in Figure \ref{fig:mnistError}. Our algorithm is most accurate in removing the added artificial noise. To evaluate how classifiers perform on the recovered matrices, we train the linear and kernel SVM using the training set and test the corresponding models on the recovered images. Table \ref{table:linear} tabulates the linear SVM. Table \ref{table:kernel} tabulates the kernel SVM. Both classifiers confirm the recovery result obtained by various models corroborating our algorithm's  pre-eminent accuracy.

\subsection{Face denoising}
\begin{figure}[h!]
\centering
\includegraphics[width=0.8\linewidth]{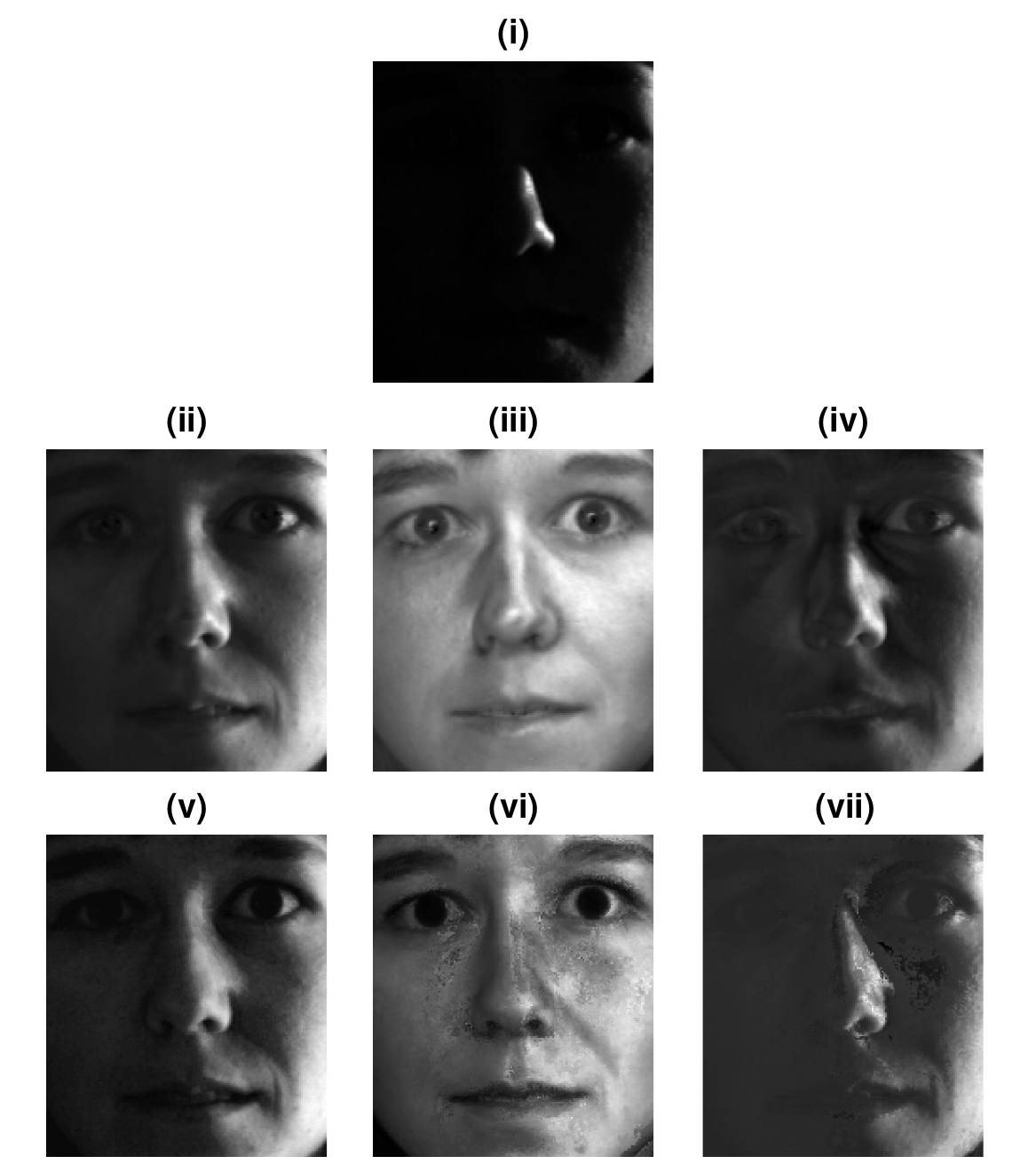}
   \caption{(i) original; (ii) PCPF; (iii) our algorithm; (iv) IRPCA-IHT; (v) PCP; (vi) fast RPCA; (vii) AltProj.}
    \label{fig:yale}
\end{figure}
It is common practice to decompose raw facial images as a low-rank component for faithful face representation and a sparse component for defects. This is because the face is a convex Lambertian surface which under distant and isotropic lighting has an underlying model that spans a 9-D linear subspace \citet{basri03}, but theoretical lighting conditions cannot be realised and there are unavoidable occlusion and albedo variations in real images. We demonstrate that there can be a substantial boost to the performance of facial denoising by leveraging dictionaries learnt from the images themselves.\\
\indent The extended Yale B database is used as our observation which consists images under different illuminations for a fixed pose. We study all 64 images of a randomly chosen person. A $32556\times64$ observation matrix is formed by vectorizing each $168\times192$ image. For fast RPCA and our algorithm, a sparsity of 0.2 is adopted. We learn the feature dictionary as in \citet{xue17}. In a nutshell, the feature learning process can be treated as a sparse encoding problem. More specifically, we simultaneously seek a dictionary $\mathbf{D}\in\mathbb{R}^{n_1\times c}$ and a sparse representation $\mathbf{B}\in\mathbb{R}^{c\times n_2}$ such that:
\begin{equation}
    \begin{split}
        &\minimise_{\mathbf{D},\mathbf{B}}\quad\Vert\mathbf{M}-\mathbf{D}\mathbf{B}\Vert_F^2\\
        &\subject\text{ to}\ \ \,\gamma_i\le t\text{  for }i=1\dots n_2,\\
    \end{split}
\end{equation}
where $c$ is the number of atoms, $\gamma_i$'s count the number of non-zero elements in each sparsity code and $t$ is the sparsity constraint factor. This can be solved by the K-SVD algorithm. Here, feature $\mathbf{X}$ is the dictionary $\mathbf{D}$, feature $\mathbf{Y}$ corresponds to a similar solution using the transpose of the observation matrix as input. We set $c$ to $40$, $t$ to $40$ and used $10$ iterations.

As a visual illustration, recovered images from all algorithms are exhibited in Figure \ref{fig:yale}. For this challenging scenario, our algorithm totally removed all shadows. PCPF is smoother than PCP but still suffers from shade. AltProj and fast RPCA both introduced extra artefacts. Although IRPCA-IHT managed to remove the shadows but brought back a severely distorted image. To quantitatively verify the improvement made by our proposed method, we examine the structural information contained within the denoised eigenfaces. Singular values of the recovered low-rank matrices from all algorithms are plotted in Figure \ref{fig:rank}. All non-convex algorithms are competent in incorporating the rank information to keep only 9 singular values, vastly outperforming convex approaches. Among them, our algorithm has the most rapid decay that is found naturally \citet{wright11}.

\begin{figure}[h!]
\centering
\includegraphics[width=0.6\linewidth]{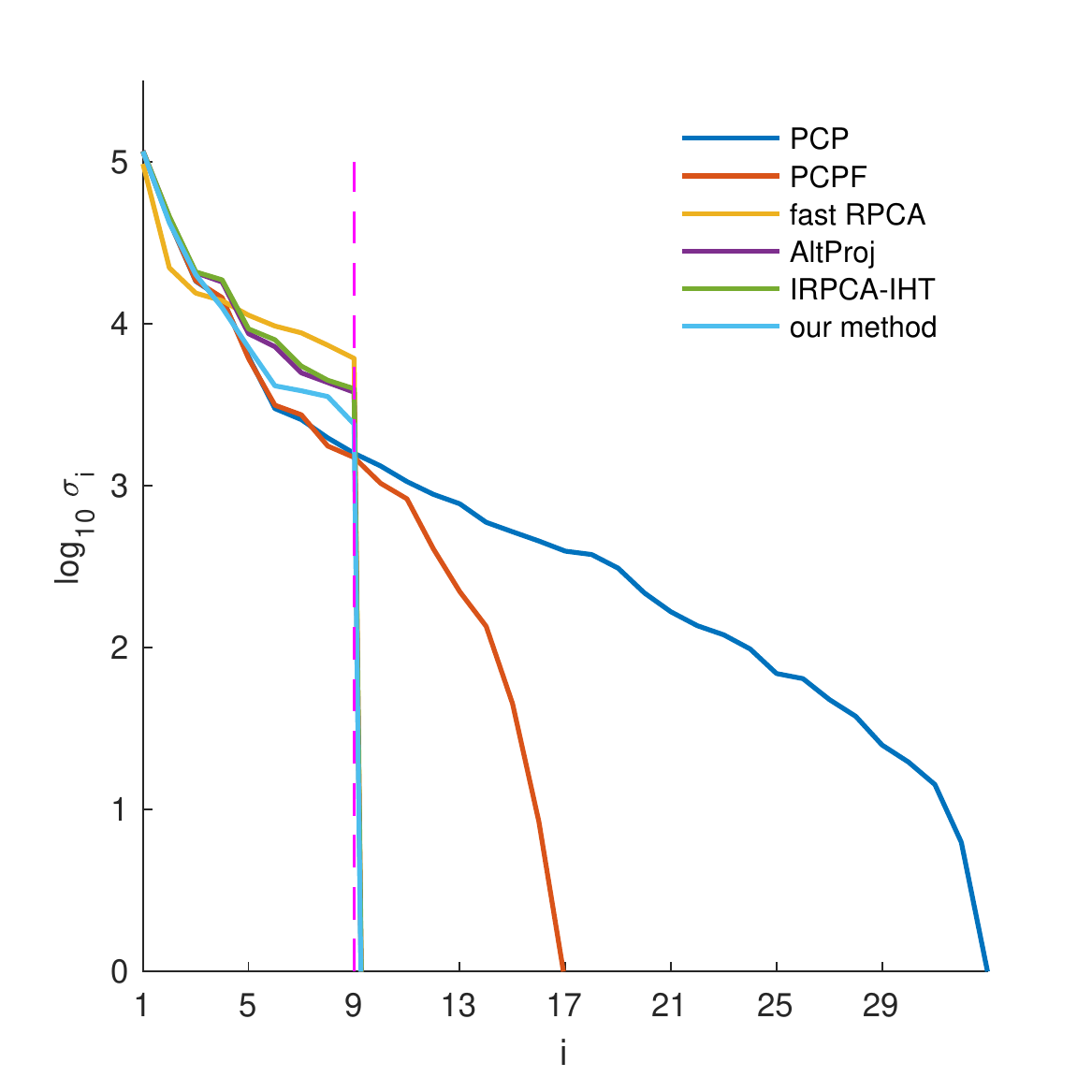}
   \caption{Log-scale singular values of the denoised matrices.}
    \label{fig:rank}
\end{figure}

\section{Conclusion}
This work proposes a new non-convex algorithm to solve RPCA with the help of features when the error sparsity is roughly known. Exact recovery guarantee has been established for three different assumptions about the incoherence conditions on features and the data observation matrix. Simulation experiments suggest that our algorithm is able to recover matrices of higher ranks corrupted by errors of higher sparsity than previous state-of-the-art approaches. Large synthetic matrices also show that our algorithm scales best with observation matrix dimension. MNIST and Yale B datasets further justify that our algorithm leads other approaches by a fair margin. Future work may involve finding a more accurate initialization scheme.	

\newpage

\begin{appendices}

\section{Convex Projection}
Given $\mathbf{P}$, the problem of finding $\mathbf{\Pi}_\mathcal{P}(\mathbf{P})$ can be seen as projection onto the intersection of a series of closed convex sets $\mathcal{P}_i$, that is $\mathcal{P}=\mathcal{P}_1\bigcap\cdots\bigcap\mathcal{P}_{d_1}$, where $\mathcal{P}_i=\{\mathbf{A}\in\mathbb{R}^{d_1\times r}||\mathbf{X}_{i\cdot}\mathbf{A}|_2\le\sqrt\frac{2\mu_1r}{n_1}\Vert\mathbf{P}_0\Vert_2\}$. We have emperically found that the Cyclic Dykstra algorithm \citet{reich12} has the fastest rate of convergence. Let $\mathbf{A}_0=\mathbf{P}$, and $\mathbf{B}_{-(d_1-1)}=\mathbf{B}_{-(d_1-2)}=\cdots=\mathbf{B}_{-1}=\mathbf{B}_0=\mathbf{0}\in\mathbb{R}^{d_1\times r}$, the Cyclic Dykstra algorithm updates, at each iteration, $\mathbf{A}_{k+1}=\mathbf{\Pi}_{\mathcal{P}_{k+1\bmod d_1}}(\mathbf{A}_k+\mathbf{B}_{k+1-d_1})$ and $\mathbf{B}_{k+1}=\mathbf{A}_k+\mathbf{B}_{k+1-d_1}-\mathbf{A}_{k+1}$.\\\\
For $\mathbf{\Pi}_{\mathcal{P}_i}(\mathbf{P})$, we formulate the equivalent optimisation problem below
\begin{equation}
    \min_\mathbf{A}\Vert\mathbf{A}-\mathbf{P}\Vert_F^2\quad\text{s.t.}\quad|\mathbf{X}_{i\cdot}\mathbf{A}|_2=\sqrt\frac{2\mu_1r}{n_1}\Vert\mathbf{P}_0\Vert_2,
\end{equation}
for $|\mathbf{X}_{i\cdot}\mathbf{P}|_2>\sqrt\frac{2\mu_1r}{n_1}\Vert\mathbf{P}_0\Vert_2$. Its solution is given by
\begin{equation}
    A = (\mathbf{I}_{d_1\times d_1}+\frac{(\frac{|\mathbf{X}_{i\cdot}\mathbf{P}|_2}{\sqrt\frac{2\mu_1r}{n_1}\Vert\mathbf{P}_0\Vert_2}-1)\mathbf{X}_{i\cdot}^T\mathbf{X}_{i\cdot}}{|\mathbf{X}_{i\cdot}|_2^2})^{-1}\mathbf{P}.
\end{equation}
For $\mathbf{Q}$, $\mathbf{\Pi}_\mathcal{Q}(\mathbf{Q})$ follows similarly.\\\\
We have also run experiments to see how much improvement can be gained by convex projection. 200$\times$200 high-incoherence matrices are created with ranks from 140 to 155 and corrupted by 10$\%$ random sign errors. Our algorithm is applied with projection several times. Each uses a different number of iterative steps ranging from 0 to 2000. Recoverability is plotted against the number of iterative projections in Figure \ref{fig:projectionFeatureSubspace}. There is hardly any noticeable improvement so we do not use convex projection in our comparison experiments. Further analysis is demanded to justify the redundency of convex projection.

\begin{figure}[h!]
    \vspace*{-0.3cm}
\begin{center}
\includegraphics[width=0.6\linewidth]{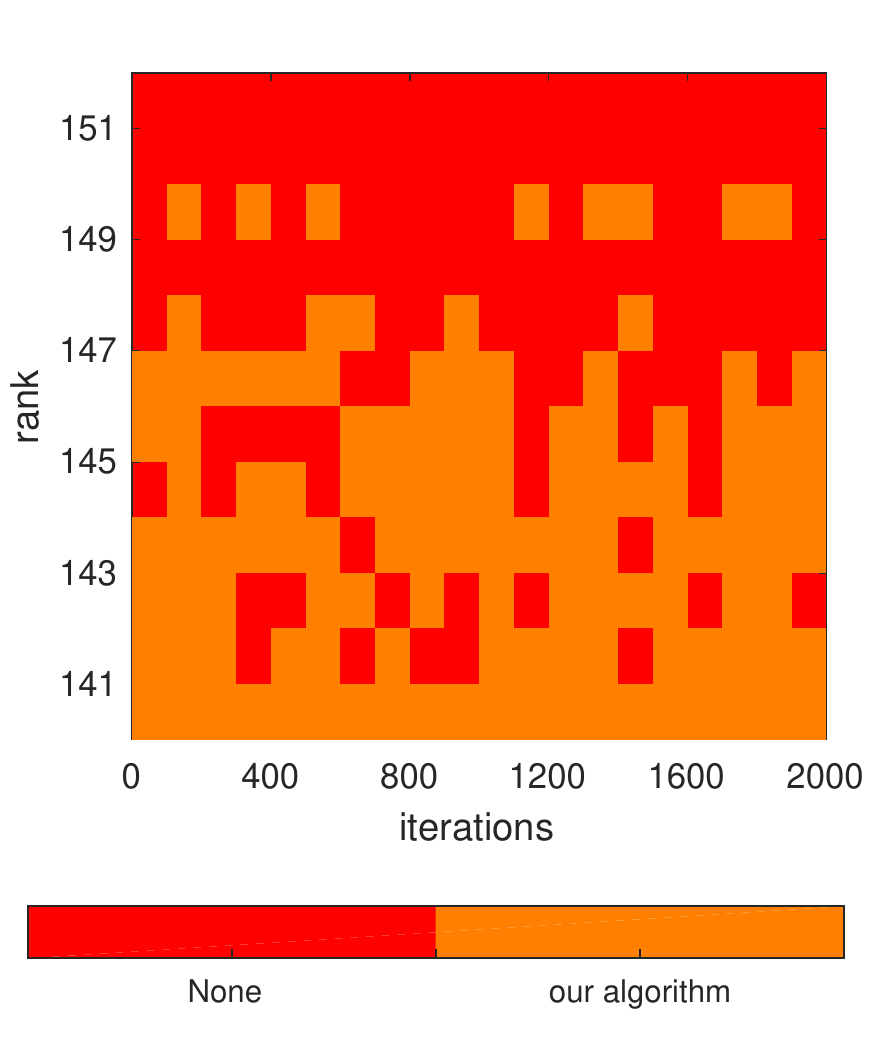}
\end{center}
    \vspace*{-0.6cm}
   \caption{Effectiveness of convex projection.}
    \label{fig:projectionFeatureSubspace}
    \vspace*{-0.3cm}
\end{figure}

\section{Proofs}
For simplicity, we assume that $n_1=n_2=n$, $d_1=d_2=d$.
\subsection{Proof of Theorem 1}
We first declare some lemmas that will be essential to our result.\\
\begin{lemma}\label{t11}
	Let $\mathbf{S}_0$ be obtained from the initialisation phase, we have
	\begin{equation}
		\Vert\mathbf{M}-\mathbf{S}_0-\mathbf{L}^*\Vert_\infty\le2\Vert \mathbf{L}^*\Vert_\infty.
	\end{equation}
\end{lemma}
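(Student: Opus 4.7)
My plan starts from the identity $\mathbf{M}-\mathbf{S}_0-\mathbf{L}^*=\mathbf{S}^*-\mathbf{S}_0$, which reduces the claim to the entrywise bound $|\mathbf{S}^*_{ij}-\mathbf{S}_{0,ij}|\le 2\Vert\mathbf{L}^*\Vert_\infty$. I split on whether $(i,j)\in\mathrm{supp}(\mathbf{S}_0)$ and whether $(i,j)\in\Omega^*:=\mathrm{supp}(\mathbf{S}^*)$. Two of the resulting sub-cases are immediate. If $(i,j)\in\mathrm{supp}(\mathbf{S}_0)$ then $\mathbf{S}_{0,ij}=\mathbf{M}_{ij}$, so $\mathbf{S}^*_{ij}-\mathbf{S}_{0,ij}=-\mathbf{L}^*_{ij}$ has magnitude at most $\Vert\mathbf{L}^*\Vert_\infty$. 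If $(i,j)\notin\mathrm{supp}(\mathbf{S}_0)$ and $(i,j)\notin\Omega^*$, then $\mathbf{S}^*_{ij}-\mathbf{S}_{0,ij}=0$.

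\noindent The only substantive case is $(i,j)\in\Omega^*$ with $(i,j)\notin\mathrm{supp}(\mathbf{S}_0)$. Here I aim to establish $|\mathbf{M}_{ij}|\le\Vert\mathbf{L}^*\Vert_\infty$, after which the triangle inequality
\[
|\mathbf{S}^*_{ij}|=|\mathbf{M}_{ij}-\mathbf{L}^*_{ij}|\le|\mathbf{M}_{ij}|+|\mathbf{L}^*_{ij}|\le 2\Vert\mathbf{L}^*\Vert_\infty
\]
delivers the lemma. The vehicle is a short pigeonhole argument on row $i$. Since $(i,j)$ was zeroed by $\mathcal{T}_\alpha$, it satisfies in particular $|\mathbf{M}_{ij}|\le\mathbf{M}^\alpha_{i\cdot}$; hence row $i$ contains a set $T_i$ of $\alpha n_2$ indices $(i,l)$, distinct from $(i,j)$, whose magnitudes dominate $|\mathbf{M}_{ij}|$. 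Under the assumed random sparsity model, row $i$ carries $\alpha n_2$ corruptions altogether; since $(i,j)$ itself is corrupted but sits outside $T_i$, at most $\alpha n_2-1$ of the $\alpha n_2$ members of $T_i$ can be corrupted. Thus at least one $(i,l)\in T_i$ is uncorrupted, giving $|\mathbf{M}_{il}|=|\mathbf{L}^*_{il}|\le\Vert\mathbf{L}^*\Vert_\infty$ and hence $|\mathbf{M}_{ij}|\le|\mathbf{M}_{il}|\le\Vert\mathbf{L}^*\Vert_\infty$, as required.

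\noindent The main obstacle I anticipate is purely bookkeeping, around tie-breaking in the definition of $\mathcal{T}_\alpha$: when $|\mathbf{M}_{ij}|$ equals $\mathbf{M}^\alpha_{i\cdot}$ exactly, it is not immediate that the ``top $\alpha n_2$'' positions of row $i$ can be chosen disjoint from $(i,j)$. I would resolve this by appealing directly to the rule that $\mathcal{T}_\alpha$ keeps exactly $\alpha n_2$ entries per row (with ties broken in favour of keeping); because $(i,j)$ is zeroed, those kept positions are automatically distinct from $(i,j)$ and still have magnitude at least $|\mathbf{M}_{ij}|$, which is all the counting step needs. The column threshold $\mathbf{M}^\alpha_{\cdot j}$ plays no role here; it is available as a symmetric fallback but is not required for the bound.
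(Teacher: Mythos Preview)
Your argument is correct and is exactly the standard pigeonhole argument underlying the cited result; the paper itself does not give a self-contained proof here but simply defers to \cite{yi16}, Theorem~1, so your write-up in fact supplies what the paper omits. Your handling of the tie-breaking edge case is also the right one: since $(i,j)$ is zeroed it cannot belong to the row's selected top-$\alpha n_2$ set $T_i$, so $T_i$ furnishes $\alpha n_2$ indices distinct from $j$ each with magnitude at least $|\mathbf{M}_{ij}|$, and the counting goes through.
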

\begin{proof}
	See \citet{yi16} theorem 1.
\end{proof}
\begin{lemma}\label{t12}
	For any matrix $\mathbf{A}\in\mathbb{R}^{n\times n}$ for which the proportion of non-zero entries per row and column is $\beta$, we have
	\begin{equation}
		\Vert\mathbf{A}\Vert_2\le\beta n\Vert\mathbf{A}\Vert_\infty.
	\end{equation}
\end{lemma}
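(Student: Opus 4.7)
The plan is to control $\Vert\mathbf{A}\Vert_2$ by means of the classical interpolation inequality
\[
\Vert\mathbf{A}\Vert_2 \;\le\; \sqrt{\,\Vert\mathbf{A}\Vert_{1\to1}\,\Vert\mathbf{A}\Vert_{\infty\to\infty}\,},
\]
where $\Vert\mathbf{A}\Vert_{1\to1}=\max_j\sum_i|\mathbf{A}_{ij}|$ is the maximum absolute column sum and $\Vert\mathbf{A}\Vert_{\infty\to\infty}=\max_i\sum_j|\mathbf{A}_{ij}|$ is the maximum absolute row sum. This inequality can be invoked as a standard fact (Riesz--Thorin), or derived inline in two lines: for unit vectors $\mathbf{x},\mathbf{y}\in\mathbb{R}^n$ and using Cauchy--Schwarz with the split $|\mathbf{A}_{ij}|=|\mathbf{A}_{ij}|^{1/2}\cdot|\mathbf{A}_{ij}|^{1/2}$,
\[
|\mathbf{x}^T\mathbf{A}\mathbf{y}|\le\Bigl(\sum_{i,j}|\mathbf{A}_{ij}|x_i^2\Bigr)^{1/2}\Bigl(\sum_{i,j}|\mathbf{A}_{ij}|y_j^2\Bigr)^{1/2}\le\sqrt{\Vert\mathbf{A}\Vert_{\infty\to\infty}\,\Vert\mathbf{A}\Vert_{1\to1}},
\]
taking the supremum over $\mathbf{x},\mathbf{y}$ on the unit sphere.

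Next I would convert the hypothesis on the row/column sparsity into bounds on these induced norms. Since each row of $\mathbf{A}$ contains at most $\beta n$ non-zero entries and every entry is bounded in absolute value by $\Vert\mathbf{A}\Vert_\infty$,
\[
\max_i\sum_j|\mathbf{A}_{ij}|\;\le\;\beta n\,\Vert\mathbf{A}\Vert_\infty,
\]
and the analogous bound holds columnwise, giving $\Vert\mathbf{A}\Vert_{1\to1}\le\beta n\Vert\mathbf{A}\Vert_\infty$ as well. Substituting into the interpolation bound produces
\[
\Vert\mathbf{A}\Vert_2\;\le\;\sqrt{(\beta n\Vert\mathbf{A}\Vert_\infty)(\beta n\Vert\mathbf{A}\Vert_\infty)}\;=\;\beta n\,\Vert\mathbf{A}\Vert_\infty,
\]
which is the claim.

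I do not expect a real obstacle here; the argument is essentially a one-step combinatorial estimate combined with a textbook inequality. The only point that deserves explicit mention is the reading of ``the proportion of non-zero entries per row and column is $\beta$'' as the uniform upper bound $\beta n$ on the number of non-zeros in each row and each column, which is consistent with how the hard-thresholding operator $\mathcal{T}_\alpha$ is defined in the paper and how this lemma will be applied to the sparse iterates $\mathbf{S}_t$.
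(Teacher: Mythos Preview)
Your argument is correct and self-contained: the Schur/Riesz--Thorin bound $\Vert\mathbf{A}\Vert_2\le\sqrt{\Vert\mathbf{A}\Vert_{1\to1}\Vert\mathbf{A}\Vert_{\infty\to\infty}}$ together with the $\beta n$ row/column sparsity immediately gives the claim. The paper itself does not prove this lemma at all --- it simply cites \citet{netrapalli14}, Lemma~4 --- so your write-up is in fact more complete than the paper's treatment, and your reading of the sparsity hypothesis matches how the result is used downstream.
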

\begin{proof}
	See \citet{netrapalli14} lemma 4.
\end{proof}
\begin{lemma}\label{t13}
	For two rank $r$ matrices $\mathbf{L}_1$ and $\mathbf{L}_2$ of the same dimension whose compact SVDs are $\mathbf{L}_1=\mathbf{U}_1\mathbf{\Sigma}_1\mathbf{V}_1^T$ and $\mathbf{L}_2=\mathbf{U}_2\mathbf{\Sigma}_2\mathbf{V}_2^T$, we have
	\begin{equation}
		d(\mathbf{U}_1\mathbf{\Sigma}_1^{\frac{1}{2}},\mathbf{V}_1\mathbf{\Sigma}_1^{\frac{1}{2}},\mathbf{U}_2\mathbf{\Sigma}_2^{\frac{1}{2}},\mathbf{V}_2\mathbf{\Sigma}_2^{\frac{1}{2}})^2\le\frac{2}{\sqrt{2}-1}\frac{\Vert \mathbf{L}_1-\mathbf{L}_2\Vert_F^2}{\sigma_r(\mathbf{L}_2)},
	\end{equation}
	provided $\Vert\mathbf{L}_1-\mathbf{L}_2\Vert_2\le\frac{1}{2}\sigma_r(\mathbf{L}_2)$.
\end{lemma}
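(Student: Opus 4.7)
My plan is to reduce the statement to a standard factor-to-product Procrustes inequality via a symmetric-dilation trick. Define the stacked matrices
\[
\mathbf{Z}_i := \begin{bmatrix}\mathbf{U}_i\mathbf{\Sigma}_i^{1/2} \\ \mathbf{V}_i\mathbf{\Sigma}_i^{1/2}\end{bmatrix},\qquad i=1,2,
\]
so that $d(\mathbf{U}_1\mathbf{\Sigma}_1^{1/2},\mathbf{V}_1\mathbf{\Sigma}_1^{1/2},\mathbf{U}_2\mathbf{\Sigma}_2^{1/2},\mathbf{V}_2\mathbf{\Sigma}_2^{1/2})^2 = \min_{\mathbf{R}}\Vert\mathbf{Z}_1-\mathbf{Z}_2\mathbf{R}\Vert_F^2$, where $\mathbf{R}$ ranges over $r\times r$ orthogonal matrices. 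A direct calculation using the SVD orthonormality gives $\mathbf{Z}_i^T\mathbf{Z}_i = 2\mathbf{\Sigma}_i$ and hence $\sigma_r(\mathbf{Z}_i)^2 = 2\sigma_r(\mathbf{L}_i)$. With this in hand I would invoke the standard factor-to-product Procrustes bound (Tu et al., \emph{Low-rank solutions of linear matrix equations via Procrustes flow}, Lemma~5.4): whenever $\Vert\mathbf{Z}_1\mathbf{Z}_1^T-\mathbf{Z}_2\mathbf{Z}_2^T\Vert_2 \le \tfrac{1}{2}\sigma_r(\mathbf{Z}_2)^2$,
\[
\min_{\mathbf{R}}\Vert\mathbf{Z}_1-\mathbf{Z}_2\mathbf{R}\Vert_F^2 \le \frac{1}{2(\sqrt{2}-1)\sigma_r(\mathbf{Z}_2)^2}\Vert\mathbf{Z}_1\mathbf{Z}_1^T-\mathbf{Z}_2\mathbf{Z}_2^T\Vert_F^2.
\]
This is where the factor $\sqrt{2}-1$ originates; what remains is to convert the PSD-product difference on the right into a multiple of $\Vert\mathbf{L}_1-\mathbf{L}_2\Vert_F^2$ and to verify the operator-norm precondition.

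Next I would identify $\mathbf{Z}_i\mathbf{Z}_i^T$ with twice the positive part of the Jordan--Wielandt dilation of $\mathbf{L}_i$. Let $\mathbf{H}_i := \begin{bmatrix}0 & \mathbf{L}_i \\ \mathbf{L}_i^T & 0\end{bmatrix}$; an eigen-computation shows that $\mathbf{H}_i$ has eigenvalues $\pm\sigma_k(\mathbf{L}_i)$, and summing only the positive spectral projectors gives the identity $\mathbf{Z}_i\mathbf{Z}_i^T = 2\mathbf{H}_i^+$, where $\mathbf{H}_i^+$ denotes the PSD part. Since $\mathbf{H}_i^+$ is precisely the Frobenius-norm projection of the Hermitian $\mathbf{H}_i$ onto the closed convex cone of PSD matrices, and such projections are $1$-Lipschitz in both Frobenius and operator norms,
\[
\Vert\mathbf{Z}_1\mathbf{Z}_1^T-\mathbf{Z}_2\mathbf{Z}_2^T\Vert_F \le 2\Vert\mathbf{H}_1-\mathbf{H}_2\Vert_F,\qquad \Vert\mathbf{Z}_1\mathbf{Z}_1^T-\mathbf{Z}_2\mathbf{Z}_2^T\Vert_2 \le 2\Vert\mathbf{H}_1-\mathbf{H}_2\Vert_2.
\]
A two-line computation gives $\Vert\mathbf{H}_1-\mathbf{H}_2\Vert_F^2 = 2\Vert\mathbf{L}_1-\mathbf{L}_2\Vert_F^2$ and $\Vert\mathbf{H}_1-\mathbf{H}_2\Vert_2 = \Vert\mathbf{L}_1-\mathbf{L}_2\Vert_2$, yielding $\Vert\mathbf{Z}_1\mathbf{Z}_1^T-\mathbf{Z}_2\mathbf{Z}_2^T\Vert_F^2 \le 8\Vert\mathbf{L}_1-\mathbf{L}_2\Vert_F^2$ and $\Vert\mathbf{Z}_1\mathbf{Z}_1^T-\mathbf{Z}_2\mathbf{Z}_2^T\Vert_2 \le 2\Vert\mathbf{L}_1-\mathbf{L}_2\Vert_2$.

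To close the argument, the hypothesis $\Vert\mathbf{L}_1-\mathbf{L}_2\Vert_2 \le \tfrac{1}{2}\sigma_r(\mathbf{L}_2)$ translates via the operator-norm bound to $\Vert\mathbf{Z}_1\mathbf{Z}_1^T-\mathbf{Z}_2\mathbf{Z}_2^T\Vert_2 \le \sigma_r(\mathbf{L}_2) = \tfrac{1}{2}\sigma_r(\mathbf{Z}_2)^2$, verifying the precondition, and substituting the Frobenius bound together with $\sigma_r(\mathbf{Z}_2)^2 = 2\sigma_r(\mathbf{L}_2)$ into the factor-to-product inequality gives
\[
d^2 \le \frac{8\Vert\mathbf{L}_1-\mathbf{L}_2\Vert_F^2}{2(\sqrt{2}-1)\cdot 2\sigma_r(\mathbf{L}_2)} = \frac{2}{\sqrt{2}-1}\cdot\frac{\Vert\mathbf{L}_1-\mathbf{L}_2\Vert_F^2}{\sigma_r(\mathbf{L}_2)},
\]
as claimed. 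The main obstacle is recognizing the dilation identity $\mathbf{Z}_i\mathbf{Z}_i^T = 2\mathbf{H}_i^+$ and marrying it with the non-expansiveness of PSD-cone projection; this is what produces the exact constant $8$ needed so that the factor-to-product inequality yields the sharp numerical factor $\tfrac{2}{\sqrt{2}-1}$ (a more naive Kittaneh-type bound $\Vert|\mathbf{A}|-|\mathbf{B}|\Vert_F \le \sqrt{2}\Vert\mathbf{A}-\mathbf{B}\Vert_F$ would forfeit the crucial $\sqrt{2}$ factor).
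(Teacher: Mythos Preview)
The paper does not actually prove this lemma; its entire ``proof'' is the one-line citation ``See \citet{tu16} lemma 5.14.'' Your argument is a correct self-contained derivation: the dilation identity $\mathbf{Z}_i\mathbf{Z}_i^T = 2\mathbf{H}_i^+$ is right, the $1$-Lipschitzness of the PSD-part map in both Frobenius and operator norm is a known fact, and the arithmetic that collapses the constants to $\tfrac{2}{\sqrt{2}-1}$ is clean. In effect you have unpacked the cited result by reducing the asymmetric statement to the symmetric factor-to-product lemma (Tu et al., Lemma~5.4) via the Jordan--Wielandt dilation, which is exactly the route Tu et al.\ themselves take to obtain their Lemma~5.14; so your proof and the paper's citation point to the same underlying argument, with yours being explicit rather than delegated.
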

\begin{proof}
	See \citet{tu16} lemma 5.14.
\end{proof}
\begin{lemma}\label{t14}
	For any matrices $\mathbf{A}$ and $\mathbf{B}$ of consistent sizes, we have
	\begin{equation}
		\Vert\mathbf{A}\mathbf{B}\Vert_{2,\infty}\le\Vert\mathbf{A}\Vert_{2,\infty}\Vert\mathbf{B}\Vert_2.
	\end{equation}
\end{lemma}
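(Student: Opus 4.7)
The plan is to reduce the claim to the standard operator-norm inequality on vectors by unpacking the $\Vert \cdot \Vert_{2,\infty}$ norm row-by-row. By the definition given in the Notations section, $\Vert \mathbf{A}\mathbf{B} \Vert_{2,\infty}$ equals the maximum over $i$ of the $\ell_2$-norm of the $i$-th row of $\mathbf{A}\mathbf{B}$. Since the $i$-th row of $\mathbf{A}\mathbf{B}$ is exactly $\mathbf{A}_{i\cdot}\mathbf{B}$, I can work with each row independently and never have to reason about the product as a whole.

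For a fixed index $i$, I would transpose to convert the row expression into a matrix--vector product: $(\mathbf{A}_{i\cdot}\mathbf{B})^T = \mathbf{B}^T \mathbf{A}_{i\cdot}^T$. The spectral norm $\Vert\cdot\Vert_2$ is precisely the operator norm induced by the Euclidean vector norm, so the compatibility inequality gives $\Vert \mathbf{B}^T \mathbf{A}_{i\cdot}^T \Vert_2 \le \Vert \mathbf{B}^T \Vert_2 \Vert \mathbf{A}_{i\cdot}^T \Vert_2$. Using the standard facts $\Vert \mathbf{B}^T \Vert_2 = \Vert \mathbf{B} \Vert_2$ and $\Vert \mathbf{A}_{i\cdot}^T \Vert_2 = \Vert \mathbf{A}_{i\cdot} \Vert_2$ yields the per-row bound $\Vert \mathbf{A}_{i\cdot}\mathbf{B} \Vert_2 \le \Vert \mathbf{B} \Vert_2 \Vert \mathbf{A}_{i\cdot} \Vert_2$.

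The final step is to take the maximum over $i$ on both sides. The factor $\Vert \mathbf{B} \Vert_2$ does not depend on $i$ and hence pulls out of the maximum, while $\max_i \Vert \mathbf{A}_{i\cdot} \Vert_2 = \Vert \mathbf{A} \Vert_{2,\infty}$ by definition, delivering the stated submultiplicativity $\Vert \mathbf{A}\mathbf{B} \Vert_{2,\infty} \le \Vert \mathbf{A} \Vert_{2,\infty} \Vert \mathbf{B} \Vert_2$. I do not anticipate any real obstacle: this is a one-line corollary of the operator-norm/vector-norm compatibility, and the authors presumably isolate it as a lemma only because it will be invoked repeatedly in the incoherence-control estimates that follow (e.g., when bounding quantities like $\Vert \mathbf{X}\mathbf{P} \Vert_{2,\infty}$ in terms of $\Vert \mathbf{X} \Vert_{2,\infty}$ and $\Vert \mathbf{P} \Vert_2$ in the proofs of Theorems~1 and~2).
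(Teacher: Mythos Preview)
Your argument is correct: unpacking $\Vert\cdot\Vert_{2,\infty}$ row-by-row and applying the operator-norm/vector-norm compatibility $\Vert \mathbf{B}^T\mathbf{A}_{i\cdot}^T\Vert_2\le\Vert\mathbf{B}\Vert_2\Vert\mathbf{A}_{i\cdot}\Vert_2$ is exactly the right (and essentially only) way to prove this. The paper itself does not give a proof at all---it simply defers to \citet{liu17}, Lemma~4.2---so your self-contained derivation is more informative than what the paper provides, and coincides with the standard argument one would expect behind that citation.
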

\begin{proof}
	See \citet{liu17} lemma 4.2.
\end{proof}
\begin{lemma}\label{t15}
	For any matrix $\mathbf{A}$ with compact SVD $\mathbf{A}=\mathbf{U}\mathbf{\Sigma}\mathbf{V}^T$,
	\begin{equation}
		\Vert\mathbf{A}\Vert_\infty\le\Vert\mathbf{\Sigma}\Vert_2\Vert\mathbf{U}\Vert_{2,\infty}\Vert\mathbf{V}\Vert_{2,\infty}.
	\end{equation}
\end{lemma}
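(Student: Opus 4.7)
The plan is to bound each entry of $\mathbf{A}$ elementwise and then take the maximum. Writing $\mathbf{A} = \mathbf{U}\mathbf{\Sigma}\mathbf{V}^T$, a single entry factors as $\mathbf{A}_{ij} = \mathbf{U}_{i\cdot}\,\mathbf{\Sigma}\,\mathbf{V}_{j\cdot}^T$, so the entire proof reduces to bounding this scalar bilinear form in terms of the row $l_2$-norms of $\mathbf{U}$ and $\mathbf{V}$ and the operator norm of $\mathbf{\Sigma}$.

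First I would apply Cauchy--Schwarz to $\mathbf{A}_{ij} = \langle \mathbf{\Sigma}^T\mathbf{U}_{i\cdot}^T,\, \mathbf{V}_{j\cdot}^T\rangle$ to get $|\mathbf{A}_{ij}| \le \|\mathbf{\Sigma}\mathbf{U}_{i\cdot}^T\|_2 \cdot \|\mathbf{V}_{j\cdot}\|_2$. Next I would use the standard inequality $\|\mathbf{\Sigma}\mathbf{U}_{i\cdot}^T\|_2 \le \|\mathbf{\Sigma}\|_2 \|\mathbf{U}_{i\cdot}\|_2$ (operator norm bound on a linear map), yielding
\begin{equation}
|\mathbf{A}_{ij}| \le \|\mathbf{\Sigma}\|_2 \, \|\mathbf{U}_{i\cdot}\|_2 \, \|\mathbf{V}_{j\cdot}\|_2.
\end{equation}
Finally I would maximize over $i$ and $j$ separately: since $\|\mathbf{U}_{i\cdot}\|_2 \le \|\mathbf{U}\|_{2,\infty}$ and $\|\mathbf{V}_{j\cdot}\|_2 \le \|\mathbf{V}\|_{2,\infty}$ by definition of the $\|\cdot\|_{2,\infty}$ norm, taking $\max_{i,j}$ on the left gives $\|\mathbf{A}\|_\infty$ (the vectorized $l_\infty$ norm, i.e.\ the maximum absolute entry) and delivers the claimed bound.

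There is essentially no main obstacle here; the argument is a two-line application of Cauchy--Schwarz followed by the submultiplicative property of the spectral norm. The only point that requires any care is notational: confirming that $\|\mathbf{A}\|_\infty$ in this paper means the entrywise maximum (as specified in the Notations section, where $\|\cdot\|_p$ without subscripts denotes the $l_p$ norm of the vectorization) rather than an induced operator $l_\infty \to l_\infty$ norm. Once this is fixed, the bound follows immediately, and the compactness of the SVD (so that $\mathbf{\Sigma}$ is $r\times r$ with $\|\mathbf{\Sigma}\|_2 = \sigma_1(\mathbf{A})$) plays no essential role beyond ensuring the matrix product is well-defined.
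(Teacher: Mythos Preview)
Your argument is correct and complete: the entrywise expansion $\mathbf{A}_{ij}=\mathbf{U}_{i\cdot}\mathbf{\Sigma}\mathbf{V}_{j\cdot}^T$, Cauchy--Schwarz, and the spectral-norm bound $\|\mathbf{\Sigma}^T\mathbf{u}\|_2\le\|\mathbf{\Sigma}\|_2\|\mathbf{u}\|_2$ give exactly the stated inequality, and your reading of $\|\cdot\|_\infty$ as the entrywise maximum matches the paper's Notations section.

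The paper does not actually supply a proof of this lemma; it simply points to \citet{yi16}, Theorem~1, where the same inequality is used. Your two-line derivation is the standard one and is almost certainly what underlies that citation, so there is no substantive difference in approach---you have merely written out what the paper outsources.
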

\begin{proof}
	See \citet{yi16} theorem 1.
\end{proof}
\begin{lemma}\label{t16}
	Let $\mathbf{U}_0,\mathbf{V}_0, \mathbf{\Sigma}_0, \mathbf{S}_0$ be obtained from the initialisation phase, we have
	\begin{equation}
		\Vert\mathbf{U}_0\mathbf{\Sigma}_0\mathbf{V}_0^T-\mathbf{M}+\mathbf{S}_0\Vert_2\le\Vert\mathbf{M}-\mathbf{S}_0-\mathbf{L}^*\Vert_2
	\end{equation}
\end{lemma}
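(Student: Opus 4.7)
The statement is essentially a restatement of the Eckart--Young--Mirsky optimality of truncated SVD in the spectral norm, so the plan is short and direct. My approach is to identify the left-hand side with the spectral-norm error of the best rank-$r$ approximation of $\mathbf{L}_0 := \mathbf{M}-\mathbf{S}_0$, and then use the fact that $\mathbf{L}^*$ itself is a rank-$r$ competitor to bound this error.

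More concretely, from the initialization step we have $\mathbf{L}_0 = \mathbf{M}-\mathbf{S}_0$ and $\mathbf{U}_0\mathbf{\Sigma}_0\mathbf{V}_0^T$ is, by construction, the $r$-truncated SVD of $\mathbf{L}_0$. The first step is therefore to rewrite the left-hand side as
\begin{equation}
\Vert \mathbf{U}_0\mathbf{\Sigma}_0\mathbf{V}_0^T - \mathbf{L}_0 \Vert_2,
\end{equation}
which is the spectral-norm rank-$r$ approximation error of $\mathbf{L}_0$.

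The second step is to invoke Eckart--Young--Mirsky: among all matrices $\mathbf{Z}$ with $\mathrm{rank}(\mathbf{Z})\le r$, the truncated SVD minimizes $\Vert \mathbf{L}_0 - \mathbf{Z}\Vert_2$. Since $\mathbf{L}^*$ itself has rank at most $r$, plugging $\mathbf{Z}=\mathbf{L}^*$ into this optimality yields
\begin{equation}
\Vert \mathbf{U}_0\mathbf{\Sigma}_0\mathbf{V}_0^T - \mathbf{L}_0 \Vert_2 \;\le\; \Vert \mathbf{L}^* - \mathbf{L}_0\Vert_2 \;=\; \Vert \mathbf{M}-\mathbf{S}_0-\mathbf{L}^*\Vert_2,
\end{equation}
which is the desired inequality after substituting $\mathbf{L}_0=\mathbf{M}-\mathbf{S}_0$ back on the left.

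There is essentially no obstacle here beyond recognizing the structure: the argument is a single application of the classical Eckart--Young--Mirsky theorem, and the only thing to be careful about is ensuring the truncation level matches $\mathrm{rank}(\mathbf{L}^*)\le r$, which holds by the problem setup where the target low-rank component has rank exactly $r$ and the algorithm is given this $r$. No incoherence, corruption bound, or property of $\mathbf{S}_0$ is needed for this particular lemma; those will enter only when Lemma~\ref{t16} is combined with Lemma~\ref{t11} and Lemma~\ref{t12} in the proof of Theorem~1 to bound the initial distance $d(\mathbf{P}_0,\mathbf{Q}_0,\mathbf{P}^*,\mathbf{Q}^*)$.
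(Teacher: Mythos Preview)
Your argument is correct. The paper's proof takes a slightly different but essentially equivalent route: it applies Weyl's inequality for singular values, $|\sigma_i(\mathbf{L}^*)-\sigma_i(\mathbf{M}-\mathbf{S}_0)|\le\Vert\mathbf{M}-\mathbf{S}_0-\mathbf{L}^*\Vert_2$, at $i=r+1$, noting that $\sigma_{r+1}(\mathbf{L}^*)=0$ and that $\sigma_{r+1}(\mathbf{M}-\mathbf{S}_0)$ is exactly the spectral norm of the residual after $r$-truncated SVD. Your use of Eckart--Young--Mirsky packages both of these observations into a single optimality statement; the two arguments are really the same fact viewed from opposite ends, and neither offers a material advantage over the other here.
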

\begin{proof}
	Weyl's theorem tells us that, for $1\le i\le n$, $|\sigma_i(\mathbf{L}^*)-\sigma_i(\mathbf{M}-\mathbf{S}_0)|\le\Vert\mathbf{M}-\mathbf{S}_0-\mathbf{L}^*\Vert_2$. When $i=r+1$, $\sigma_i(\mathbf{L}^*)=0$ and $\sigma_i(\mathbf{M}-\mathbf{S}_0)=\Vert\mathbf{U}_0\mathbf{\Sigma}_0\mathbf{V}_0^T-\mathbf{M}+\mathbf{S}_0\Vert_2$ because $\mathbf{L}^*$ has rank $r$ and $\mathbf{U}_0\mathbf{\Sigma}_0\mathbf{V}_0^T=r$-SVD$(\mathbf{M}-\mathbf{S}_0)$.
\end{proof}
\begin{lemma}\label{t17}
	For $\mathbf{A},\mathbf{B},\mathbf{C},\mathbf{D}\in\mathbb{R}^{d\times r}$
	\begin{equation}
		d(\mathbf{X}^T\mathbf{A},\mathbf{Y}^T\mathbf{B},\mathbf{X}^T\mathbf{C},\mathbf{Y}^T\mathbf{D})\le d(\mathbf{A},\mathbf{B},\mathbf{C},\mathbf{D}).
	\end{equation}
\end{lemma}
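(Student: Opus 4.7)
The plan is to reduce the claim to the fact that left-multiplication by $\mathbf{X}^T$ (and by $\mathbf{Y}^T$) is a non-expansive operator in the Frobenius norm, so that the pointwise-in-$\mathbf{R}$ inequality survives the min in the definition of $d$. The only nontrivial ingredient is the assumption $\mathbf{X}^T\mathbf{X}=\mathbf{Y}^T\mathbf{Y}=\mathbf{I}$ stated in Section~4.

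First I would show the scalar contraction step. From $\mathbf{X}^T\mathbf{X}=\mathbf{I}$ it follows that $\mathbf{X}\mathbf{X}^T$ is an orthogonal projection (symmetric, idempotent) onto $\mathrm{col}(\mathbf{X})$. Hence for any matrix $\mathbf{Z}$ of compatible size,
\begin{equation*}
\Vert \mathbf{X}^T\mathbf{Z}\Vert_F^2 = \mathrm{tr}(\mathbf{Z}^T\mathbf{X}\mathbf{X}^T\mathbf{Z}) = \Vert \mathbf{X}\mathbf{X}^T\mathbf{Z}\Vert_F^2 \le \Vert \mathbf{Z}\Vert_F^2,
\end{equation*}
because an orthogonal projection has operator norm $1$. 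The identical bound holds with $\mathbf{Y}$ replacing $\mathbf{X}$.

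Next I would fix an arbitrary $r\times r$ orthogonal matrix $\mathbf{R}$, apply the contraction with $\mathbf{Z}=\mathbf{A}-\mathbf{C}\mathbf{R}$ and with $\mathbf{Z}=\mathbf{B}-\mathbf{D}\mathbf{R}$, and add the two inequalities to get
\begin{equation*}
\Vert \mathbf{X}^T\mathbf{A}-\mathbf{X}^T\mathbf{C}\mathbf{R}\Vert_F^2 + \Vert \mathbf{Y}^T\mathbf{B}-\mathbf{Y}^T\mathbf{D}\mathbf{R}\Vert_F^2 \;\le\; \Vert \mathbf{A}-\mathbf{C}\mathbf{R}\Vert_F^2 + \Vert \mathbf{B}-\mathbf{D}\mathbf{R}\Vert_F^2.
\end{equation*}
Now pick $\mathbf{R}=\mathbf{R}^\star$ achieving the right-hand minimum in the definition of $d(\mathbf{A},\mathbf{B},\mathbf{C},\mathbf{D})$. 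The left-hand side is an upper bound on $d(\mathbf{X}^T\mathbf{A},\mathbf{Y}^T\mathbf{B},\mathbf{X}^T\mathbf{C},\mathbf{Y}^T\mathbf{D})^2$ by optimality of its own minimiser, so taking square roots yields the stated inequality.

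There is no real obstacle: the proof is a two-line application of the non-expansiveness of orthogonal projections together with the monotonicity of an infimum under a pointwise bound. I would only flag that the dimensional declaration $\mathbf{A},\mathbf{B},\mathbf{C},\mathbf{D}\in\mathbb{R}^{d\times r}$ appears inconsistent with forming $\mathbf{X}^T\mathbf{A}$; the intended setting is $\mathbf{A},\mathbf{C}\in\mathbb{R}^{n\times r}$ and $\mathbf{B},\mathbf{D}\in\mathbb{R}^{n\times r}$ (so that $\mathbf{X}^T\mathbf{A},\mathbf{Y}^T\mathbf{B}\in\mathbb{R}^{d\times r}$), which matches how the lemma will be used to pass between $(\mathbf{U},\mathbf{V})$-coordinates and $(\mathbf{P},\mathbf{Q})$-coordinates in the proof of Theorem~1.
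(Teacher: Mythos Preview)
Your proof is correct and follows essentially the same approach as the paper's: both arguments boil down to the non-expansiveness $\Vert\mathbf{X}^T\mathbf{Z}\Vert_F\le\Vert\mathbf{Z}\Vert_F$ (the paper phrases it via $\Vert\mathbf{X}^T\Vert_2=1$, you via the orthogonal projection $\mathbf{X}\mathbf{X}^T$), applied pointwise in $\mathbf{R}$ and then pushed through the minimum. Your observation about the dimensional typo in the lemma statement is also accurate and matches how the lemma is actually used in the proof of Theorem~1.
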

\begin{proof}
	\begin{equation}
		\begin{split}
			&\quad\ d(\mathbf{X}^T\mathbf{A},\mathbf{Y}^T\mathbf{B},\mathbf{X}^T\mathbf{C},\mathbf{Y}^T\mathbf{D})\\&=\min_{\mathbf{R}}\sqrt{\Vert\mathbf{X}^T(\mathbf{A}-\mathbf{C}\mathbf{R})\Vert_F^2+\Vert\mathbf{Y}^T(\mathbf{B}-\mathbf{D}\mathbf{R})\Vert_F^2}\\&\le\min_{\mathbf{R}}\sqrt{\Vert\mathbf{X}^T\Vert_2^2\Vert(\mathbf{A}-\mathbf{C}\mathbf{R})\Vert_F^2+\Vert\mathbf{Y}^T\Vert_2^2\Vert(\mathbf{B}-\mathbf{D}\mathbf{R})\Vert_F^2}\\&=\min_{\mathbf{R}}\sqrt{\Vert(\mathbf{A}-\mathbf{C}\mathbf{R})\Vert_F^2+\Vert(\mathbf{B}-\mathbf{D}\mathbf{R})\Vert_F^2}\\&=d(\mathbf{A},\mathbf{B},\mathbf{C},\mathbf{D}).
		\end{split}
	\end{equation}
\end{proof}
\noindent\\We begin by deriving a bound on $\Vert \mathbf{M}-\mathbf{S}_0-\mathbf{L}^*\Vert_2$,
\begin{equation}
	\label{eq:result1}
	\begin{split}
		\Vert \mathbf{M}-\mathbf{S}_0-\mathbf{L}^*\Vert_2&\le2\alpha n\Vert \mathbf{M}-\mathbf{S}_0-\mathbf{L}^*\Vert_\infty\le4\alpha n\Vert\mathbf{L}^*\Vert_\infty\\&\le4\alpha n\Vert\mathbf{\Sigma}^*\Vert_2\Vert\mathbf{U}^*\Vert_{2,\infty}\Vert\mathbf{V}^*\Vert_{2,\infty},
	\end{split}
\end{equation}
where the first inequality follows from \textbf{Lemma \ref{t12}} with $\beta=2\alpha$, the second from \textbf{Lemma \ref{t11}} and the third from \textbf{Lemma \ref{t15}}. Next, we look at $\Vert\mathbf{U}_0\mathbf{\Sigma}_0\mathbf{V}_0^T-\mathbf{L}^*\Vert_2$:
\begin{equation}
	\label{eq:result2}
	\begin{split}
		&\quad\ \Vert\mathbf{U}_0\mathbf{\Sigma}_0\mathbf{V}_0^T-\mathbf{L}^*\Vert_2\\&\le\Vert\mathbf{U}_0\mathbf{\Sigma}_0\mathbf{V}_0^T-\mathbf{M}+\mathbf{S}_0\Vert_2+\Vert\mathbf{M}-\mathbf{S}_0-\mathbf{L}^*\Vert_2\\&\le2\Vert\mathbf{M}-\mathbf{S}_0-\mathbf{L}^*\Vert_2\\&\le8\alpha n\Vert\mathbf{\Sigma}^*\Vert_2\Vert\mathbf{U}^*\Vert_{2,\infty}\Vert\mathbf{V}^*\Vert_{2,\infty},
	\end{split}
\end{equation}
where we have used \textbf{Lemma \ref{t16}} and (\ref{eq:result1}).\\
In cases (i) and (iii), the condition $\alpha\le\frac{1}{16\kappa\mu_1 r}$ gives $\Vert\mathbf{U}_0\mathbf{\Sigma}_0\mathbf{V}_0^T-\mathbf{L}^*\Vert_2\le\frac{1}{2}\sigma_r(\mathbf{L}^*)$ and we have
\begin{equation}
	\begin{split}
	&\quad\ d(\mathbf{P}_0,\mathbf{Q}_0,\mathbf{P}^*,\mathbf{Q}^*)^2\\&
		=d(\mathbf{X}^T\mathbf{U}_0\mathbf{\Sigma}_0^{\frac{1}{2}},\mathbf{Y}^T\mathbf{V}_0\mathbf{\Sigma}_0^{\frac{1}{2}},\mathbf{X}^T\mathbf{U}^*\mathbf{\Sigma}^{*\frac{1}{2}}, \mathbf{Y}^T\mathbf{V}^*\mathbf{\Sigma}^{*\frac{1}{2}})^2\\&\le d(\mathbf{U}_0\mathbf{\Sigma}_0^{\frac{1}{2}},\mathbf{V}_0\mathbf{\Sigma}_0^{\frac{1}{2}},\mathbf{U}^*\mathbf{\Sigma}^{*\frac{1}{2}}, \mathbf{V}^*\mathbf{\Sigma}^{*\frac{1}{2}})^2\\&\le\frac{2}{\sqrt{2}-1}\frac{\Vert\mathbf{U}_0\mathbf{\Sigma}_0\mathbf{V}_0^T-\mathbf{L}^*\Vert_F^2}{\sigma_r(\mathbf{L}^*)}\\&\le\frac{2r}{\sqrt{2}-1}\frac{\Vert\mathbf{U}_0\mathbf{\Sigma}_0\mathbf{V}_0^T-\mathbf{L}^*\Vert_2^2}{\sigma_r(\mathbf{L}^*)}\le\frac{128r^3\alpha^2\kappa\sigma_1^*\mu_1^2}{\sqrt{2}-1},
	\end{split}
\end{equation}
using \textbf{Lemma \ref{t17}}, \textbf{Lemma \ref{t13}} and (\ref{eq:result2}). So, we have
\begin{equation}\label{result3}
	d(\mathbf{P}_0,\mathbf{Q}_0,\mathbf{P}^*,\mathbf{Q}^*)\le18\mu_1\alpha r\sqrt{r\kappa\sigma_1^*}.
\end{equation}
In case (ii), we have
\begin{equation}
	\Vert\mathbf{U}^*\Vert_{2,\infty}=\Vert\mathbf{X}\mathbf{X}^T\mathbf{U}^*\Vert_{2,\infty}\le\Vert\mathbf{X}^T\mathbf{U}^*\Vert_2\Vert\mathbf{X}\Vert_{2,\infty}\le \sqrt\frac{\mu_2d}{n},
\end{equation}
\begin{equation}
	\Vert\mathbf{V}^*\Vert_{2,\infty}=\Vert\mathbf{Y}\mathbf{Y}^T\mathbf{V}^*\Vert_{2,\infty}\le\Vert\mathbf{Y}^T\mathbf{V}^*\Vert_2\Vert\mathbf{Y}\Vert_{2,\infty}\le \sqrt\frac{\mu_2d}{n}.
\end{equation}
The condition $\alpha\le\frac{1}{16\kappa\mu_2 d}$ gives $\Vert\mathbf{U}_0\mathbf{\Sigma}_0\mathbf{V}_0^T-\mathbf{L}^*\Vert_2\le\frac{1}{2}\sigma_r(\mathbf{L}^*)$ and we have similar to (\ref{result3})
\begin{equation}
	d(\mathbf{P}_0,\mathbf{Q}_0,\mathbf{P}^*,\mathbf{Q}^*)\le18\mu_2\alpha d\sqrt{r\kappa\sigma_1^*}.
\end{equation}

\subsection{Proof of Theorem 2}
To ease our exposition, we define the following auxiliary quantities.\\\\
Let the solution set be
\begin{equation}
	\mathcal{E}=\{(\mathbf{A},\mathbf{B})\in\mathbb{R}^{d\times r}\times\mathbb{R}^{d\times r}|d(\mathbf{A},\mathbf{B},\mathbf{P}^*,\mathbf{Q}^*)=0\}.
\end{equation}
For any $(\mathbf{P},\mathbf{Q})\in\mathbb{R}^{d\times r}\times\mathbb{R}^{d\times r}$, the corresponding solution is given by
\begin{equation}
	(\mathbf{P}^\dagger,\mathbf{Q}^\dagger)\in\arg\min_{(\mathbf{A},\mathbf{B})\in\mathcal{E}}\Vert\mathbf{P}-\mathbf{A}\Vert_F^2+\Vert\mathbf{Q}-\mathbf{B}\Vert_F^2.
\end{equation}
Let $\Delta\mathbf{P}=\mathbf{P}-\mathbf{P}^\dagger$, $\Delta\mathbf{Q}=\mathbf{Q}-\mathbf{Q}^\dagger$ and $\delta=\Vert\Delta\mathbf{P}\Vert_F^2+\Vert\Delta\mathbf{Q}\Vert_F^2$, from which we have
\begin{equation}
	\begin{split}
		2\Vert\Delta\mathbf{P}\Vert_F\Vert\Delta\mathbf{Q}\Vert_F&\le\Vert\Delta\mathbf{P}\Vert_F^2+\Vert\Delta\mathbf{Q}\Vert_F^2,\\
		\Vert\Delta\mathbf{P}\Vert_F+\Vert\Delta\mathbf{Q}\Vert_F&\le\sqrt{2\delta},\\
		\Vert\Delta\mathbf{P}\Vert_F^2+\Vert\Delta\mathbf{Q}\Vert_F\Vert\Delta\mathbf{P}\Vert_F&\le\sqrt{2\delta}\Vert\Delta\mathbf{P}\Vert_F,\\
		\Vert\Delta\mathbf{Q}\Vert_F^2+\Vert\Delta\mathbf{Q}\Vert_F\Vert\Delta\mathbf{P}\Vert_F&\le\sqrt{2\delta}\Vert\Delta\mathbf{Q}\Vert_F,\\
		4\Vert\Delta\mathbf{Q}\Vert_F\Vert\Delta\mathbf{P}\Vert_F&\le\Vert\Delta\mathbf{P}\Vert_F^2+\Vert\Delta\mathbf{Q}\Vert_F^2+2\Vert\Delta\mathbf{Q}\Vert_F\Vert\Delta\mathbf{P}\Vert_F\\&\le\sqrt{2\delta}(\Vert\Delta\mathbf{Q}\Vert_F+\Vert\Delta\mathbf{P}\Vert_F).
	\end{split}
\end{equation}
Let $\mathcal{H}=\frac{1}{2}\Vert\mathbf{X}\mathbf{P}\mathbf{Q}^T\mathbf{Y}^T+\mathbf{S}-\mathbf{M}\Vert_F^2$ and $\Delta\mathbf{M}=\nabla_\mathbf{L}\mathcal{H}(\mathbf{P},\mathbf{Q})$, we have
\begin{equation}
	\nabla_\mathbf{L}\mathcal{H}(\mathbf{P},\mathbf{Q})=\mathbf{X}\mathbf{P}\mathbf{Q}^T\mathbf{Y}^T+\mathbf{S}-\mathbf{M}=\mathbf{L} + \mathbf{S}-\mathbf{L}^*-\mathbf{S}^*.
\end{equation}
We also have
\begin{equation}
	\nabla_\mathbf{P}\mathcal{H}(\mathbf{P},\mathbf{Q})=\mathbf{X}^T\nabla_\mathbf{L}\mathcal{H}\mathbf{Y}\mathbf{Q},
\end{equation}
\begin{equation}
	\nabla_\mathbf{Q}\mathcal{H}(\mathbf{P},\mathbf{Q})=(\mathbf{X}^T\nabla_\mathbf{L}\mathcal{H}\mathbf{Y})^T\mathbf{P}.
\end{equation}
Let $\mathcal{G}(\mathbf{P},\mathbf{Q})=\frac{1}{64}\Vert\mathbf{P}^T\mathbf{P}-\mathbf{Q}^T\mathbf{Q}\Vert_F^2$, we have
\begin{equation}
	\nabla_\mathbf{P}\mathcal{G}(\mathbf{P},\mathbf{Q})=\frac{1}{16}\mathbf{P}(\mathbf{P}^T\mathbf{P}-\mathbf{Q}^T\mathbf{Q}),
\end{equation}
\begin{equation}
	\nabla_\mathbf{Q}\mathcal{G}(\mathbf{P},\mathbf{Q})=\frac{1}{16}\mathbf{Q}(\mathbf{Q}^T\mathbf{Q}-\mathbf{P}^T\mathbf{P}).
\end{equation}
Let $\mathbf{F} = \begin{bmatrix}
\mathbf{P} \\
\mathbf{Q}
\end{bmatrix}$, $\mathbf{F}^\dagger = \begin{bmatrix}
\mathbf{P}^\dagger \\
\mathbf{Q}^\dagger
\end{bmatrix}$ and $\Delta\mathbf{F}=\mathbf{F}-\mathbf{F}^\dagger$, then we have $\delta=\Vert\Delta\mathbf{F}\Vert_F^2$.\\\\
We now state several lemmas that will help us construct the proof.\\
\begin{lemma}\label{t21}
	For any $\mathbf{P}\in\mathbb{R}^{d\times r}$ and $\mathbf{Q}\in\mathbb{R}^{d\times r}$, we have
	\begin{equation}
		\Vert\mathbf{L}-\mathbf{L}^*\Vert_F^2\le2\delta(\sqrt{\sigma_1^*}+\frac{\sqrt{2\delta}}{4})^2.
	\end{equation}
	\begin{proof}
		\begin{equation}
			\begin{split}
				\Vert\mathbf{L}-\mathbf{L}^*\Vert_F&=\Vert\mathbf{X}\mathbf{P}\mathbf{Q}^T\mathbf{Y}^T-\mathbf{X}\mathbf{P}^\dagger\mathbf{Q}^{\dagger T}\mathbf{Y}^T\Vert_F\\&=\Vert\mathbf{X}(\mathbf{P}^\dagger\Delta\mathbf{Q}^T+\Delta\mathbf{P}\mathbf{Q}^{\dagger T}+\Delta\mathbf{P}\Delta\mathbf{Q}^T)\mathbf{Y}^T\Vert_F\\&\le\Vert\mathbf{P}^\dagger\Delta\mathbf{Q}^T\Vert_F+\Vert\Delta\mathbf{P}\mathbf{Q}^{\dagger T}\Vert_F+\Vert\Delta\mathbf{P}\Delta\mathbf{Q}^T\Vert_F\\&\le\Vert\mathbf{P}^\dagger\Vert_2\Vert\Delta\mathbf{Q}\Vert_F+\Vert\Delta\mathbf{P}\Vert_F\Vert\mathbf{Q}^\dagger\Vert_2+\Vert\Delta\mathbf{P}\Vert_F\Vert\Delta\mathbf{Q}\Vert_F\\&\le \sqrt{\sigma_1^*}\Vert\Delta\mathbf{Q}\Vert_F+\sqrt{\sigma_1^*}\Vert\Delta\mathbf{P}\Vert_F+\frac{\sqrt{2\delta}}{4}(\Vert\Delta\mathbf{Q}\Vert_F+\Vert\Delta\mathbf{P}\Vert_F)\\&\le(\sqrt{\sigma_1^*}+\frac{\sqrt{2\delta}}{4})(\Vert\Delta\mathbf{Q}\Vert_F+\Vert\Delta\mathbf{P}\Vert_F)\le(\sqrt{\sigma_1^*}+\frac{\sqrt{2\delta}}{4})\sqrt{2\delta}.
			\end{split}
		\end{equation}
	\end{proof}
\end{lemma}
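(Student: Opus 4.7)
The plan is to expand $\mathbf{L}-\mathbf{L}^*$ around the optimal factorization and then estimate each of the resulting three terms using the spectral structure of $(\mathbf{P}^\dagger,\mathbf{Q}^\dagger)$ together with the algebraic inequalities already collected just before the lemma.

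First I would observe that, since $(\mathbf{P}^\dagger,\mathbf{Q}^\dagger)\in\mathcal{E}$, we have $\mathbf{X}\mathbf{P}^\dagger\mathbf{Q}^{\dagger T}\mathbf{Y}^T=\mathbf{L}^*$. Substituting $\mathbf{P}=\mathbf{P}^\dagger+\Delta\mathbf{P}$ and $\mathbf{Q}=\mathbf{Q}^\dagger+\Delta\mathbf{Q}$ and canceling the matched term yields
\[
\mathbf{L}-\mathbf{L}^*=\mathbf{X}\bigl(\mathbf{P}^\dagger\Delta\mathbf{Q}^T+\Delta\mathbf{P}\,\mathbf{Q}^{\dagger T}+\Delta\mathbf{P}\,\Delta\mathbf{Q}^T\bigr)\mathbf{Y}^T.
\]
Because $\mathbf{X}^T\mathbf{X}=\mathbf{Y}^T\mathbf{Y}=\mathbf{I}$, the maps $\mathbf{A}\mapsto\mathbf{X}\mathbf{A}$ and $\mathbf{A}\mapsto\mathbf{A}\mathbf{Y}^T$ are Frobenius-norm isometries, so the $\mathbf{X},\mathbf{Y}$ factors can be dropped. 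A triangle inequality followed by the standard bound $\Vert\mathbf{A}\mathbf{B}\Vert_F\le\Vert\mathbf{A}\Vert_2\Vert\mathbf{B}\Vert_F$ then gives
\[
\Vert\mathbf{L}-\mathbf{L}^*\Vert_F\le\Vert\mathbf{P}^\dagger\Vert_2\Vert\Delta\mathbf{Q}\Vert_F+\Vert\mathbf{Q}^\dagger\Vert_2\Vert\Delta\mathbf{P}\Vert_F+\Vert\Delta\mathbf{P}\Vert_F\Vert\Delta\mathbf{Q}\Vert_F.
\]

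Next I would bound $\Vert\mathbf{P}^\dagger\Vert_2,\Vert\mathbf{Q}^\dagger\Vert_2\le\sqrt{\sigma_1^*}$. Any element of $\mathcal{E}$ is of the form $(\mathbf{P}^*\mathbf{R},\mathbf{Q}^*\mathbf{R})$ for some orthogonal $\mathbf{R}$, and $\Vert\mathbf{P}^*\mathbf{R}\Vert_2=\Vert\mathbf{P}^*\Vert_2=\Vert\mathbf{X}^T\mathbf{U}^*\mathbf{\Sigma}^{*1/2}\Vert_2\le\Vert\mathbf{\Sigma}^{*1/2}\Vert_2=\sqrt{\sigma_1^*}$ (using $\Vert\mathbf{X}^T\Vert_2\le 1$ and $\Vert\mathbf{U}^*\Vert_2=1$); the same argument applies to $\mathbf{Q}^\dagger$. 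For the cross term, I would invoke the preamble inequality $4\Vert\Delta\mathbf{P}\Vert_F\Vert\Delta\mathbf{Q}\Vert_F\le\sqrt{2\delta}(\Vert\Delta\mathbf{P}\Vert_F+\Vert\Delta\mathbf{Q}\Vert_F)$, which lets the bound be rewritten as
\[
\Vert\mathbf{L}-\mathbf{L}^*\Vert_F\le\Bigl(\sqrt{\sigma_1^*}+\tfrac{\sqrt{2\delta}}{4}\Bigr)\bigl(\Vert\Delta\mathbf{P}\Vert_F+\Vert\Delta\mathbf{Q}\Vert_F\bigr).
\]
Finally, applying $\Vert\Delta\mathbf{P}\Vert_F+\Vert\Delta\mathbf{Q}\Vert_F\le\sqrt{2\delta}$ and squaring yields the claimed inequality.

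The only step that is not purely mechanical is the spectral-norm bound on $\mathbf{P}^\dagger$ and $\mathbf{Q}^\dagger$: one must verify that zero $d$-distance forces the relation $(\mathbf{P}^\dagger,\mathbf{Q}^\dagger)=(\mathbf{P}^*\mathbf{R},\mathbf{Q}^*\mathbf{R})$ for an orthogonal $\mathbf{R}$, and that right-orthogonal multiplication preserves the operator norm. Once that is in place, everything else is a clean application of the triangle inequality, submultiplicativity, and the pre-computed elementary inequalities in $\delta$.
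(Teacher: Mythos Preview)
Your proposal is correct and follows essentially the same route as the paper: expand $\mathbf{L}-\mathbf{L}^*$ via $\mathbf{P}=\mathbf{P}^\dagger+\Delta\mathbf{P}$, $\mathbf{Q}=\mathbf{Q}^\dagger+\Delta\mathbf{Q}$, drop the $\mathbf{X},\mathbf{Y}$ factors, apply triangle inequality and submultiplicativity, then use $\Vert\mathbf{P}^\dagger\Vert_2,\Vert\mathbf{Q}^\dagger\Vert_2\le\sqrt{\sigma_1^*}$ together with the preamble inequalities $4\Vert\Delta\mathbf{P}\Vert_F\Vert\Delta\mathbf{Q}\Vert_F\le\sqrt{2\delta}(\Vert\Delta\mathbf{P}\Vert_F+\Vert\Delta\mathbf{Q}\Vert_F)$ and $\Vert\Delta\mathbf{P}\Vert_F+\Vert\Delta\mathbf{Q}\Vert_F\le\sqrt{2\delta}$. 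Your added justifications (the isometry property of $\mathbf{A}\mapsto\mathbf{X}\mathbf{A}\mathbf{Y}^T$ and the explicit derivation of $\Vert\mathbf{P}^\dagger\Vert_2\le\sqrt{\sigma_1^*}$ from the structure of $\mathcal{E}$) fill in details the paper leaves implicit, but the argument is the same.
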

\begin{lemma}\label{t22}
	For $1\le i,j \le n$, in case (i), if $\Vert\mathbf{X}\mathbf{P}\Vert_{2,\infty}\le\sqrt\frac{3\mu_1r\sigma_1^*}{2n}$ and $\Vert\mathbf{Y}\mathbf{Q}\Vert_{2,\infty}\le\sqrt\frac{3\mu_1r\sigma_1^*}{2n}$, then
	\begin{equation}
		|(\mathbf{L}-\mathbf{L}^*)_{ij}|\le\frac{1}{2}\sqrt\frac{\mu_1r\sigma_1^*}{n}(3+\sqrt\frac{3}{2})(\Vert(\mathbf{X}\Delta\mathbf{P})_{i\cdot}\Vert_2+\Vert(\mathbf{Y}\Delta\mathbf{Q})_{j\cdot}\Vert_2)
	\end{equation}
	and in cases (ii) and (iii), if $\Vert\mathbf{X}\mathbf{P}\Vert_{2,\infty}\le \sqrt\frac{3\mu_2d\sigma_1^*}{2n}$ and $\Vert\mathbf{Y}\mathbf{Q}\Vert_{2,\infty}\le \sqrt\frac{3\mu_2d\sigma_1^*}{2n}$, then
	\begin{equation}
		|(\mathbf{L}-\mathbf{L}^*)_{ij}|\le\frac{1}{2}\sqrt\frac{\mu_2d\sigma_1^*}{n}(3+\sqrt\frac{3}{2})(\Vert(\mathbf{X}\Delta\mathbf{P})_{i\cdot}\Vert_2+\Vert(\mathbf{Y}\Delta\mathbf{Q})_{j\cdot}\Vert_2).
	\end{equation}
	\begin{proof}
		\begin{equation}
			\begin{split}
				&\quad\ |(\mathbf{L}-\mathbf{L}^*)_{ij}|\\&=|(\mathbf{X}\mathbf{P}\mathbf{Q}^T\mathbf{Y}^T-\mathbf{X}\mathbf{P}^\dagger\mathbf{Q}^{\dagger T}\mathbf{Y}^T)_{ij}|\\&\le|(\mathbf{X}\mathbf{P}^\dagger)_{i\cdot}(\mathbf{Y}\Delta\mathbf{Q})^T_{\cdot j}|+|(\mathbf{X}\Delta\mathbf{P})_{i\cdot}(\mathbf{Y}\mathbf{Q}^\dagger)^T_{\cdot j}|+|(\mathbf{X}\Delta\mathbf{P})_{i\cdot}(\mathbf{Y}\Delta\mathbf{Q})^T_{\cdot j}|\\&\le\Vert(\mathbf{X}\mathbf{P}^\dagger)_{i\cdot}\Vert_2\Vert(\mathbf{Y}\Delta\mathbf{Q})_{j\cdot}\Vert_2+\Vert(\mathbf{X}\Delta\mathbf{P})_{i\cdot}\Vert_2\Vert(\mathbf{Y}\mathbf{Q}^\dagger)_{j\cdot}\Vert_2+\Vert(\mathbf{X}\Delta\mathbf{P})_{i\cdot}\Vert_2\Vert(\mathbf{Y}\Delta\mathbf{Q})_{j\cdot}\Vert_2\\&\le\Vert\mathbf{X}\mathbf{P}^\dagger\Vert_{2,\infty}\Vert(\mathbf{Y}\Delta\mathbf{Q})_{j\cdot}\Vert_2+\Vert\mathbf{Y}\mathbf{Q}^\dagger\Vert_{2,\infty}\Vert(\mathbf{X}\Delta\mathbf{P})_{i\cdot}\Vert_2\\&\quad+\frac{1}{2}\Vert\mathbf{X}\Delta\mathbf{P}\Vert_{2,\infty}\Vert(\mathbf{Y}\Delta\mathbf{Q})_{j\cdot}\Vert_2+\frac{1}{2}\Vert\mathbf{Y}\Delta\mathbf{Q}\Vert_{2,\infty}\Vert(\mathbf{X}\Delta\mathbf{P})_{i\cdot}\Vert_2\\&\le\frac{1}{2}((3\Vert\mathbf{X}\mathbf{P}^\dagger\Vert_{2,\infty}+\Vert\mathbf{X}\mathbf{P}\Vert_{2,\infty})\Vert(\mathbf{Y}\Delta\mathbf{Q})_{j\cdot}\Vert_2\\&\quad+(3\Vert\mathbf{Y}\mathbf{Q}^\dagger\Vert_{2,\infty}+\Vert\mathbf{Y}\mathbf{Q}\Vert_{2,\infty})\Vert(\mathbf{X}\Delta\mathbf{P})_{i\cdot}\Vert_2),
			\end{split}
		\end{equation}
		where we have used $\Vert\mathbf{X}\Delta\mathbf{P}\Vert_{2,\infty}\le\Vert\mathbf{X}\mathbf{P}^\dagger\Vert_{2,\infty}+\Vert\mathbf{X}\mathbf{P}\Vert_{2,\infty}$ and $\Vert\mathbf{Y}\Delta\mathbf{Q}\Vert_{2,\infty}\le\Vert\mathbf{Y}\mathbf{Q}^\dagger\Vert_{2,\infty}+\Vert\mathbf{Y}\mathbf{Q}\Vert_{2,\infty}$.\\\\
		In case (i),
		\begin{equation}
			\begin{split}
				|(\mathbf{L}-\mathbf{L}^*)_{ij}|&\le\frac{1}{2}(3\sqrt\frac{\mu_1r\sigma_1^*}{n}+\sqrt\frac{3\mu_1r\sigma_1^*}{2n})(\Vert(\mathbf{X}\Delta\mathbf{P})_{i\cdot}\Vert_2+\Vert(\mathbf{Y}\Delta\mathbf{Q})_{j\cdot}\Vert_2)\\&=\frac{1}{2}\sqrt\frac{\mu_1r\sigma_1^*}{n}(3+\sqrt\frac{3}{2})(\Vert(\mathbf{X}\Delta\mathbf{P})_{i\cdot}\Vert_2+\Vert(\mathbf{Y}\Delta\mathbf{Q})_{j\cdot}\Vert_2).
			\end{split}
		\end{equation}
		In cases (ii) and (iii),
		\begin{equation}
			\begin{split}
				|(\mathbf{L}-\mathbf{L}^*)_{ij}|&\le\frac{1}{2}(3\sqrt\frac{\mu_2d\sigma_1^*}{n}+\sqrt\frac{3\mu_2d\sigma_1^*}{2n})(\Vert(\mathbf{X}\Delta\mathbf{P})_{i\cdot}\Vert_2+\Vert(\mathbf{Y}\Delta\mathbf{Q})_{j\cdot}\Vert_2)\\&=\frac{1}{2}\sqrt\frac{\mu_2d\sigma_1^*}{n}(3+\sqrt\frac{3}{2})(\Vert(\mathbf{X}\Delta\mathbf{P})_{i\cdot}\Vert_2+\Vert(\mathbf{Y}\Delta\mathbf{Q})_{j\cdot}\Vert_2).
			\end{split}
		\end{equation}
	\end{proof}
\end{lemma}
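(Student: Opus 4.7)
The plan is to expand $(\mathbf{L}-\mathbf{L}^*)_{ij}$ using the bilinear factorization and reduce it to quantities we can control by incoherence. Recall that $\mathbf{L} = \mathbf{X}\mathbf{P}\mathbf{Q}^T\mathbf{Y}^T$ and $\mathbf{L}^* = \mathbf{X}\mathbf{P}^\dagger\mathbf{Q}^{\dagger T}\mathbf{Y}^T$, so the standard difference-of-products identity gives
\[
\mathbf{P}\mathbf{Q}^T - \mathbf{P}^\dagger\mathbf{Q}^{\dagger T} = \mathbf{P}^\dagger\Delta\mathbf{Q}^T + \Delta\mathbf{P}\mathbf{Q}^{\dagger T} + \Delta\mathbf{P}\Delta\mathbf{Q}^T.
\]
Reading off the $(i,j)$ entry after multiplying by $\mathbf{X}$ on the left and $\mathbf{Y}^T$ on the right yields three inner products between the $i$th row of $\mathbf{X}(\cdot)$ and the $j$th row of $\mathbf{Y}(\cdot)$, to which I would apply Cauchy--Schwarz row-wise.

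Next I would bound the row norms that arise. For the two ``clean'' factors $\mathbf{X}\mathbf{P}^\dagger$ and $\mathbf{Y}\mathbf{Q}^\dagger$, I would use the fact that since $(\mathbf{P}^\dagger,\mathbf{Q}^\dagger)\in\mathcal{E}$, we have $\mathbf{X}\mathbf{P}^\dagger = \mathbf{U}^*\mathbf{\Sigma}^{*1/2}\mathbf{R}$ for some orthogonal $\mathbf{R}$, and similarly for $\mathbf{Y}\mathbf{Q}^\dagger$. In case (i) this gives $\Vert\mathbf{X}\mathbf{P}^\dagger\Vert_{2,\infty}\le\sqrt{\mu_1 r\sigma_1^*/n}$ directly from the incoherence of $\mathbf{U}^*,\mathbf{V}^*$; in cases (ii) and (iii) I would instead use Lemma~\ref{t14} together with $\Vert\mathbf{X}\Vert_{2,\infty}\le\sqrt{\mu_2 d/n}$ and $\Vert\mathbf{P}^\dagger\Vert_2=\sqrt{\sigma_1^*}$ to obtain $\Vert\mathbf{X}\mathbf{P}^\dagger\Vert_{2,\infty}\le\sqrt{\mu_2 d\sigma_1^*/n}$. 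For the ``error'' factor $\mathbf{X}\Delta\mathbf{P}$, I would not try to bound it uniformly but instead keep the $i$th row norm $\Vert(\mathbf{X}\Delta\mathbf{P})_{i\cdot}\Vert_2$ intact on one side of each Cauchy--Schwarz application, using the assumed bounds on $\Vert\mathbf{X}\mathbf{P}\Vert_{2,\infty}$ only to control the third (quadratic) cross term via the triangle inequality $\Vert\mathbf{X}\Delta\mathbf{P}\Vert_{2,\infty}\le\Vert\mathbf{X}\mathbf{P}^\dagger\Vert_{2,\infty}+\Vert\mathbf{X}\mathbf{P}\Vert_{2,\infty}$ (analogously for $\mathbf{Y}\Delta\mathbf{Q}$).

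Combining, the three terms collapse to a linear combination of $\Vert(\mathbf{X}\Delta\mathbf{P})_{i\cdot}\Vert_2$ and $\Vert(\mathbf{Y}\Delta\mathbf{Q})_{j\cdot}\Vert_2$ with coefficients of the form $\tfrac{1}{2}(3\Vert\mathbf{X}\mathbf{P}^\dagger\Vert_{2,\infty}+\Vert\mathbf{X}\mathbf{P}\Vert_{2,\infty})$, from which the stated numerical factor $\tfrac{1}{2}(3+\sqrt{3/2})$ drops out after plugging in the two incoherence bounds.

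The main obstacle is the incoherence bound on $\mathbf{X}\mathbf{P}^\dagger$ in case (i): it relies crucially on the observation that the manifold of minimizers $\mathcal{E}$ is parametrized by an orthogonal rotation of $(\mathbf{X}^T\mathbf{U}^*\mathbf{\Sigma}^{*1/2},\mathbf{Y}^T\mathbf{V}^*\mathbf{\Sigma}^{*1/2})$, which forces $\mathbf{X}\mathbf{P}^\dagger$ to equal $\mathbf{U}^*\mathbf{\Sigma}^{*1/2}\mathbf{R}$ (rather than merely projecting onto $\text{col}(\mathbf{X})$). Once this structural identity is in place, the rest of the argument is routine triangle-inequality bookkeeping.
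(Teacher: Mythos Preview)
Your proposal is correct and follows essentially the same route as the paper: the same difference-of-products expansion, row-wise Cauchy--Schwarz, the symmetric split of the cross term $\Vert(\mathbf{X}\Delta\mathbf{P})_{i\cdot}\Vert_2\Vert(\mathbf{Y}\Delta\mathbf{Q})_{j\cdot}\Vert_2$ into two halves each bounded via the $\Vert\cdot\Vert_{2,\infty}$ norm, and the triangle inequality $\Vert\mathbf{X}\Delta\mathbf{P}\Vert_{2,\infty}\le\Vert\mathbf{X}\mathbf{P}^\dagger\Vert_{2,\infty}+\Vert\mathbf{X}\mathbf{P}\Vert_{2,\infty}$. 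Your observation that $\mathbf{X}\mathbf{P}^\dagger=\mathbf{U}^*\mathbf{\Sigma}^{*1/2}\mathbf{R}$ (via $\mathbf{X}\mathbf{X}^T\mathbf{U}^*=\mathbf{U}^*$) is exactly what the paper uses implicitly when it plugs in $\Vert\mathbf{X}\mathbf{P}^\dagger\Vert_{2,\infty}\le\sqrt{\mu_1 r\sigma_1^*/n}$ in case (i), and your appeal to Lemma~\ref{t14} with $\Vert\mathbf{P}^\dagger\Vert_2=\sqrt{\sigma_1^*}$ for cases (ii)--(iii) matches as well.
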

\begin{lemma}\label{t23}
	For any $\alpha\in(0,1)$, suppose the support index set $\Omega\subseteq[n]\times[n]$ satisfies $|\Omega_{i\cdot}|\le\alpha n$ for all $i\in[n]$ and $|\Omega_{\cdot j}|\le\alpha n$ for all $j\in[n]$ where $\Omega_{i\cdot}=\{(i,j)\in\Omega|j\in[n]\}$ and $\Omega_{\cdot j}=\{(i,j)\in\Omega|i\in[n]\}$. In case (i), we have
\end{lemma}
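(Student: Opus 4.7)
The conclusion of the lemma is cut off, but its placement in the analysis of Theorem 2 and its hypotheses (an index set with row/column sparsity $\alpha$) strongly suggest the bound takes the form
\begin{equation*}
\Vert\mathbf{\Pi}_\Omega(\mathbf{L}-\mathbf{L}^*)\Vert_F^2 \;\le\; C\,\mu_1 r\,\alpha\,\sigma_1^*\,\delta,
\end{equation*}
i.e.\ the Frobenius mass of $\mathbf{L}-\mathbf{L}^*$ restricted to a sparse support is controlled by the iterate distance $\delta=\Vert\Delta\mathbf{P}\Vert_F^2+\Vert\Delta\mathbf{Q}\Vert_F^2$. This is exactly what is needed later to pass from the pointwise entrywise estimate of Lemma \ref{t22} to a global bound on the gradient driven by the sparse estimation step.

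The plan is to combine Lemma \ref{t22} with a summation argument that exploits the sparsity structure of $\Omega$. First, I would apply Lemma \ref{t22} entrywise to each $(i,j)\in\Omega$ and square, using $(a+b)^2\le 2(a^2+b^2)$ to obtain
\begin{equation*}
|(\mathbf{L}-\mathbf{L}^*)_{ij}|^2 \;\le\; C_1\,\frac{\mu_1 r\sigma_1^*}{n}\bigl(\Vert(\mathbf{X}\Delta\mathbf{P})_{i\cdot}\Vert_2^2 + \Vert(\mathbf{Y}\Delta\mathbf{Q})_{j\cdot}\Vert_2^2\bigr),
\end{equation*}
with $C_1=(3+\sqrt{3/2})^2/2$. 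Summing over $\Omega$ and swapping the order of summation turns the first term into $\sum_i |\Omega_{i\cdot}|\Vert(\mathbf{X}\Delta\mathbf{P})_{i\cdot}\Vert_2^2$, which by the row-sparsity hypothesis is at most $\alpha n\,\Vert\mathbf{X}\Delta\mathbf{P}\Vert_F^2 = \alpha n\,\Vert\Delta\mathbf{P}\Vert_F^2$ after invoking $\mathbf{X}^T\mathbf{X}=\mathbf{I}$. The second term is handled symmetrically via the column-sparsity bound on $|\Omega_{\cdot j}|$ and $\mathbf{Y}^T\mathbf{Y}=\mathbf{I}$. Adding the two contributions yields $C_1\,\mu_1 r\sigma_1^*\alpha\,\delta$ as claimed. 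For cases (ii) and (iii) the same argument goes through verbatim, swapping the factor $\mu_1 r$ for $\mu_2 d$ because Lemma \ref{t22} provides exactly this substitution under the corresponding incoherence hypothesis.

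The step that is mechanically routine is the summation manipulation; the genuine obstacle is to verify that the incoherence hypotheses of Lemma \ref{t22}, namely $\Vert\mathbf{X}\mathbf{P}\Vert_{2,\infty}\le\sqrt{3\mu_1 r\sigma_1^*/(2n)}$ and the analogue for $\mathbf{Y}\mathbf{Q}$, actually hold at every iterate $(\mathbf{P}_t,\mathbf{Q}_t)$. This needs an auxiliary invariant maintained along the gradient-descent loop: the projection onto $\mathcal{P}$ enforces $\Vert\mathbf{X}\mathbf{P}\Vert_{2,\infty}\le\sqrt{2\mu_1 r/n}\,\Vert\mathbf{P}_0\Vert_2$ by definition, so it remains to bound $\Vert\mathbf{P}_0\Vert_2^2$ in terms of $\sigma_1^*$. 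This in turn follows from Theorem 1 together with a Weyl-type comparison of the singular values of $\mathbf{L}_0$ and $\mathbf{L}^*$ (in the spirit of Lemma \ref{t16}). Once this invariant is isolated, the Frobenius bound stated above drops out essentially for free.
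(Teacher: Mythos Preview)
Your proposal is correct and follows essentially the same route as the paper: apply Lemma~\ref{t22} entrywise, square via $(a+b)^2\le 2(a^2+b^2)$, sum over $\Omega$, and use the row/column sparsity hypothesis together with $\mathbf{X}^T\mathbf{X}=\mathbf{I}$, $\mathbf{Y}^T\mathbf{Y}=\mathbf{I}$ to collapse the sums to $\alpha\,\mu_1 r\,\sigma_1^*\,\delta/2\cdot(3+\sqrt{3/2})^2$ (resp.\ with $\mu_2 d$ in cases (ii)--(iii)). Your closing paragraph on verifying the $\Vert\mathbf{X}\mathbf{P}\Vert_{2,\infty}$ hypothesis of Lemma~\ref{t22} along the iterates is a valid concern, but it belongs to the outer induction of Theorem~2 rather than to the proof of this lemma; the paper treats that invariant separately, exactly along the lines you sketch.
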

\begin{equation}
	\Vert\Pi_\Omega(\mathbf{L}-\mathbf{L}^*)\Vert_F^2\le\frac{\alpha\mu_1r\sigma_1^*}{2}(3+\sqrt\frac{3}{2})^2(\Vert\Delta\mathbf{P}\Vert_F^2+\Vert\Delta\mathbf{Q}\Vert_F^2),
\end{equation}
and in cases (ii) and (iii), we have
\begin{equation}
	\Vert\Pi_\Omega(\mathbf{L}-\mathbf{L}^*)\Vert_F^2\le\frac{\alpha\mu_2d\sigma_1^*}{2}(3+\sqrt\frac{3}{2})^2(\Vert\Delta\mathbf{P}\Vert_F^2+\Vert\Delta\mathbf{Q}\Vert_F^2).
\end{equation}
\begin{proof}
	\begin{equation}
		\Vert\Pi_\Omega(\mathbf{L}-\mathbf{L}^*)\Vert_F^2=\sum_{i,j\in\mathbf{\Omega}}|(\mathbf{L}-\mathbf{L}^*)_{ij}|^2.
	\end{equation}
	Using Lemma \ref{t22}, in case (i),
	\begin{equation}
		\begin{split}
			&\quad\ \sum_{i,j\in\mathbf{\Omega}}|(\mathbf{L}-\mathbf{L}^*)_{ij}|^2\\&\le\sum_{i,j\in\mathbf{\Omega}}\frac{\mu_1r\sigma_1^*}{4n}(3+\sqrt\frac{3}{2})^2(\Vert(\mathbf{X}\Delta\mathbf{P})_{i\cdot}\Vert_2+\Vert(\mathbf{Y}\Delta\mathbf{Q})_{j\cdot}\Vert_2)^2\\&\le\sum_{i,j\in\mathbf{\Omega}}\frac{\mu_1r\sigma_1^*}{2n}(3+\sqrt\frac{3}{2})^2(\Vert(\mathbf{X}\Delta\mathbf{P})_{i\cdot}\Vert_2^2+\Vert(\mathbf{Y}\Delta\mathbf{Q})_{j\cdot}\Vert_2^2)\\&\le\frac{\mu_1r\sigma_1^*}{2n}(3+\sqrt\frac{3}{2})^2(\sum_{i,j\in\mathbf{\Omega}}\Vert(\mathbf{X}\Delta\mathbf{P})_{i\cdot}\Vert_2^2+\sum_{i,j\in\mathbf{\Omega}}\Vert(\mathbf{Y}\Delta\mathbf{Q})_{j\cdot}\Vert_2^2)\\&\le\frac{\mu_1r\sigma_1^*}{2n}(3+\sqrt\frac{3}{2})^2(\sum_i\sum_{j\in\mathbf{\Omega}_{i\cdot}}\Vert(\mathbf{X}\Delta\mathbf{P})_{i\cdot}\Vert_2^2+\sum_j\sum_{i\in\mathbf{\Omega}_{\cdot j}}\Vert(\mathbf{Y}\Delta\mathbf{Q})_{j\cdot}\Vert_2^2)\\&\le\frac{\alpha\mu_1r\sigma_1^*}{2}(3+\sqrt\frac{3}{2})^2(\Vert\mathbf{X}\Delta\mathbf{P}\Vert_F^2+\Vert\mathbf{Y}\Delta\mathbf{Q}\Vert_F^2)\\&\le\frac{\alpha\mu_1r\sigma_1^*}{2}(3+\sqrt\frac{3}{2})^2(\Vert\Delta\mathbf{P}\Vert_F^2+\Vert\Delta\mathbf{Q}\Vert_F^2).
		\end{split}
	\end{equation}
	and in cases (ii) and (iii),
	\begin{equation}
		\begin{split}
			&\quad\ \sum_{i,j\in\mathbf{\Omega}}|(\mathbf{L}-\mathbf{L}^*)_{ij}|^2\\&\le\sum_{i,j\in\mathbf{\Omega}}\frac{\mu_2d\sigma_1^*}{4n}(3+\sqrt\frac{3}{2})^2(\Vert(\mathbf{X}\Delta\mathbf{P})_{i\cdot}\Vert_2+\Vert(\mathbf{Y}\Delta\mathbf{Q})_{j\cdot}\Vert_2)^2\\&\le\sum_{i,j\in\mathbf{\Omega}}\frac{\mu_2d\sigma_1^*}{2n}(3+\sqrt\frac{3}{2})^2(\Vert(\mathbf{X}\Delta\mathbf{P})_{i\cdot}\Vert_2^2+\Vert(\mathbf{Y}\Delta\mathbf{Q})_{j\cdot}\Vert_2^2)\\&\le\frac{\mu_2d\sigma_1^*}{2n}(3+\sqrt\frac{3}{2})^2(\sum_{i,j\in\mathbf{\Omega}}\Vert(\mathbf{X}\Delta\mathbf{P})_{i\cdot}\Vert_2^2+\sum_{i,j\in\mathbf{\Omega}}\Vert(\mathbf{Y}\Delta\mathbf{Q})_{j\cdot}\Vert_2^2)\\&\le\frac{\mu_2d\sigma_1^*}{2n}(3+\sqrt\frac{3}{2})^2(\sum_i\sum_{j\in\mathbf{\Omega}_{i\cdot}}\Vert(\mathbf{X}\Delta\mathbf{P})_{i\cdot}\Vert_2^2+\sum_j\sum_{i\in\mathbf{\Omega}_{\cdot j}}\Vert(\mathbf{Y}\Delta\mathbf{Q})_{j\cdot}\Vert_2^2)\\&\le\frac{\alpha\mu_2d\sigma_1^*}{2}(3+\sqrt\frac{3}{2})^2(\Vert\mathbf{X}\Delta\mathbf{P}\Vert_F^2+\Vert\mathbf{Y}\Delta\mathbf{Q}\Vert_F^2)\\&\le\frac{\alpha\mu_2d\sigma_1^*}{2}(3+\sqrt\frac{3}{2})^2(\Vert\Delta\mathbf{P}\Vert_F^2+\Vert\Delta\mathbf{Q}\Vert_F^2).
		\end{split}
	\end{equation}
\end{proof}
\begin{lemma}\label{t24}
	Given that $\mathbf{S}=\mathcal{T}_{\alpha+\min(10\alpha,0.1)}(\mathbf{M}-\mathbf{X}\mathbf{P}\mathbf{Q}^T\mathbf{Y}^T)$, we have in case (i)
	\begin{multline}
		\langle\mathbf{X}^T\nabla_\mathbf{L}\mathcal{H}(\mathbf{P},\mathbf{Q})\mathbf{Y},\mathbf{P}\mathbf{Q}^T-\mathbf{P}^\dagger\mathbf{Q}^{\dagger T}+\Delta\mathbf{P}\Delta\mathbf{Q}^T\rangle\ge\Vert\mathbf{L}-\mathbf{L}^*\Vert_F^2\\-\frac{\mu_1r\sigma_1^*\delta}{4}((4+\beta)\alpha+2\min(10\alpha,0.1))(3+\sqrt\frac{3}{2})^2-\frac{2\alpha\delta}{\beta\min(10\alpha,0.1)}(\sqrt{\sigma_1^*}+\frac{\sqrt{2\delta}}{4})^2\\-\frac{\sqrt{2}+2\sqrt{\frac{\alpha}{\min(10\alpha,0.1)}}}{2}\sqrt{\delta^3}(\sqrt{\sigma_1^*}+\frac{\sqrt{2\delta}}{4}),
	\end{multline}
	and in cases (ii) and (iii)
	\begin{multline}
		\langle\mathbf{X}^T\nabla_\mathbf{L}\mathcal{H}(\mathbf{P},\mathbf{Q})\mathbf{Y},\mathbf{P}\mathbf{Q}^T-\mathbf{P}^\dagger\mathbf{Q}^{\dagger T}+\Delta\mathbf{P}\Delta\mathbf{Q}^T\rangle\ge\Vert\mathbf{L}-\mathbf{L}^*\Vert_F^2\\-\frac{\mu_2d\sigma_1^*\delta}{4}((4+\beta)\alpha+2\min(10\alpha, 0.1))(3+\sqrt\frac{3}{2})^2-\frac{2\alpha\delta}{\beta\min(10\alpha,0.1)}(\sqrt{\sigma_1^*}+\frac{\sqrt{2\delta}}{4})^2\\-\frac{\sqrt{2}+2\sqrt{\frac{\alpha}{\min(10\alpha,0.1)}}}{2}\sqrt{\delta^3}(\sqrt{\sigma_1^*}+\frac{\sqrt{2\delta}}{4}).
	\end{multline}
\end{lemma}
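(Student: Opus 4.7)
The plan is to algebraically expand the target inner product and lower-bound the three pieces that emerge. Using $\mathbf{P}\mathbf{Q}^T = \mathbf{P}^\dagger\mathbf{Q}^{\dagger T} + \mathbf{P}^\dagger\Delta\mathbf{Q}^T + \Delta\mathbf{P}\mathbf{Q}^{\dagger T} + \Delta\mathbf{P}\Delta\mathbf{Q}^T$, one checks $\mathbf{X}(\mathbf{P}\mathbf{Q}^T - \mathbf{P}^\dagger\mathbf{Q}^{\dagger T} + \Delta\mathbf{P}\Delta\mathbf{Q}^T)\mathbf{Y}^T = (\mathbf{L}-\mathbf{L}^*) + \mathbf{X}\Delta\mathbf{P}\Delta\mathbf{Q}^T\mathbf{Y}^T$, so combining with $\nabla_{\mathbf{L}}\mathcal{H} = \mathbf{L}+\mathbf{S}-\mathbf{L}^*-\mathbf{S}^*$ turns the target inner product into $\|\mathbf{L}-\mathbf{L}^*\|_F^2 + \langle \mathbf{S}-\mathbf{S}^*, \mathbf{L}-\mathbf{L}^*\rangle + \langle \nabla_{\mathbf{L}}\mathcal{H}, \mathbf{X}\Delta\mathbf{P}\Delta\mathbf{Q}^T\mathbf{Y}^T\rangle$. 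The $\|\mathbf{L}-\mathbf{L}^*\|_F^2$ term is retained verbatim; the remaining work is to lower-bound the two corrections.

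For the sparsity cross term, write $\gamma = \alpha + \min(10\alpha,0.1)$, $\Omega = \mathrm{supp}(\mathbf{S})$, $\Omega^* = \mathrm{supp}(\mathbf{S}^*)$, and $A = \mathbf{S}^* + \mathbf{L}^* - \mathbf{L}$, so that $\mathbf{S} = \mathcal{T}_\gamma(A)$. I split the sum by support. On $\Omega$, $\mathbf{S}_{ij} = A_{ij}$ gives $(\mathbf{S}-\mathbf{S}^*)_{ij} = -(\mathbf{L}-\mathbf{L}^*)_{ij}$, producing a clean $-\|\Pi_\Omega(\mathbf{L}-\mathbf{L}^*)\|_F^2$ that Lemma \ref{t23} with sparsity level $\gamma$ upper-bounds, contributing the $2\alpha + 2\min(10\alpha,0.1)$ part of the final coefficient. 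On $\Omega^* \setminus \Omega$, $(\mathbf{S}-\mathbf{S}^*)_{ij} = -\mathbf{S}^*_{ij}$, and since the entry was thresholded away, $|\mathbf{S}^*_{ij}| \le T^{\mathrm{row}}_i + |(\mathbf{L}-\mathbf{L}^*)_{ij}|$ where $T^{\mathrm{row}}_i$ is the row-$i$ threshold of $A$. The quantitative key is that $|\Omega^*_{i\cdot}| \le \alpha n$ while the top $\gamma n$ entries per row are kept by the thresholder, so at least $(\gamma-\alpha)n$ of the retained entries in row $i$ lie outside $\Omega^*_{i\cdot}$ and on them $A_{ij} = -(\mathbf{L}-\mathbf{L}^*)_{ij}$, whose absolute value exceeds $T^{\mathrm{row}}_i$; Cauchy--Schwarz then yields $(T^{\mathrm{row}}_i)^2 \le \|(\mathbf{L}-\mathbf{L}^*)_{i\cdot}\|_2^2/((\gamma-\alpha)n)$.

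Armed with this estimate I apply a weighted AM-GM with free parameter $\beta$ to the residual $\sum_{(i,j)\in\Omega^*\setminus\Omega} T^{\mathrm{row}}_i \,|(\mathbf{L}-\mathbf{L}^*)_{ij}|$. The $(T^{\mathrm{row}}_i)^2$ side, combined with the per-row count $|\Omega^*_{i\cdot}|\le\alpha n$ and then Lemma \ref{t21}, yields the $\frac{2\alpha\delta}{\beta\min(10\alpha,0.1)}(\sqrt{\sigma_1^*}+\frac{\sqrt{2\delta}}{4})^2$ term. The other AM-GM side, together with the leftover $\|\Pi_{\Omega^*\setminus\Omega}(\mathbf{L}-\mathbf{L}^*)\|_F^2$ produced by the $|(\mathbf{L}-\mathbf{L}^*)_{ij}|$ piece of the $|\mathbf{S}^*_{ij}|$ bound, is handled by Lemma \ref{t23} with the $\alpha$-sparse $\Omega^*$ and supplies the remaining $(\beta+2)\alpha$ in the bracketed coefficient, summing to $(4+\beta)\alpha + 2\min(10\alpha,0.1)$. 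Cases (ii) and (iii) are identical after Lemma \ref{t23}'s $\mu_1 r$ is replaced by $\mu_2 d$.

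For the higher-order term I use Cauchy--Schwarz, $\langle \nabla_{\mathbf{L}}\mathcal{H}, \mathbf{X}\Delta\mathbf{P}\Delta\mathbf{Q}^T\mathbf{Y}^T\rangle \ge -\|\nabla_{\mathbf{L}}\mathcal{H}\|_F \cdot \|\Delta\mathbf{P}\Delta\mathbf{Q}^T\|_F$, with $\|\Delta\mathbf{P}\Delta\mathbf{Q}^T\|_F \le \|\Delta\mathbf{P}\|_F\|\Delta\mathbf{Q}\|_F \le \delta/2$. The saving observation is that $\nabla_{\mathbf{L}}\mathcal{H} = -\Pi_{\Omega^c}A$ vanishes on $\Omega$; splitting $\Omega^c = (\Omega^*\setminus\Omega) \cup (\Omega\cup\Omega^*)^c$ and re-using $|A_{ij}|\le T^{\mathrm{row}}_i$ on the first slice and $A_{ij} = -(\mathbf{L}-\mathbf{L}^*)_{ij}$ on the second gives $\|\nabla_{\mathbf{L}}\mathcal{H}\|_F^2 \le (1 + \alpha/(\gamma-\alpha))\|\mathbf{L}-\mathbf{L}^*\|_F^2$; the elementary bound $\sqrt{1+x}\le 1 + \sqrt{x}$ with a final $\sqrt{2}\le 2$ slack, followed by Lemma \ref{t21}, produces the closing $\frac{\sqrt{2} + 2\sqrt{\alpha/\min(10\alpha,0.1)}}{2}\sqrt{\delta^3}(\sqrt{\sigma_1^*}+\frac{\sqrt{2\delta}}{4})$ term. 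The principal obstacle will be the coefficient bookkeeping: tuning $\beta$ and the weights in the AM-GM so that the $(\gamma-\alpha)^{-1}$, $\beta^{-1}$, and $(3+\sqrt{3/2})^2$ factors land exactly in the positions stated, and justifying that the per-row/per-column cardinality of $\mathrm{supp}(\mathcal{T}_\gamma(\cdot))$ is at most $\gamma n$, which is implicitly required when invoking Lemma \ref{t23} on $\Omega$.
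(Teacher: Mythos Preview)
Your proposal is correct and follows essentially the same route as the paper's proof: the identical algebraic reduction to $\Vert\mathbf{L}-\mathbf{L}^*\Vert_F^2 + \langle\mathbf{S}-\mathbf{S}^*,\mathbf{L}-\mathbf{L}^*\rangle + \langle\nabla_\mathbf{L}\mathcal{H},\mathbf{X}\Delta\mathbf{P}\Delta\mathbf{Q}^T\mathbf{Y}^T\rangle$, followed by the same support decomposition of the two correction terms and the same applications of Lemmas \ref{t21} and \ref{t23}. The only real difference is cosmetic: the paper invokes \citet{yi16} Lemma 2 as a black box for the two threshold estimates (the bound on $|\langle\mathbf{S}-\mathbf{S}^*,\mathbf{L}-\mathbf{L}^*\rangle|$ and on $\Vert\Pi_{\Omega^*\cap\Omega^c}(\Delta\mathbf{M})\Vert_F$), whereas you re-derive those estimates from scratch via the counting argument on $(T_i^{\mathrm{row}})^2$; this actually gives you slightly sharper constants in places (e.g.\ your $\sqrt{\alpha/(\gamma-\alpha)}$ versus the paper's $\sqrt{2\alpha/(\gamma-\alpha)}$), which is why you need the $\sqrt{2}\le 2$ slack at the end. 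Your flagged concern about the per-row/per-column cardinality of $\Omega=\mathrm{supp}(\mathcal{T}_\gamma(\cdot))$ being at most $\gamma n$ is well spotted --- the paper simply asserts this when applying Lemma \ref{t23}, so you are not missing anything that the paper provides.
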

\begin{proof}
	\begin{equation}
		\begin{split}
			&\quad\ \langle\mathbf{X}^T\nabla_\mathbf{L}\mathcal{H}(\mathbf{P},\mathbf{Q})\mathbf{Y},\mathbf{P}\mathbf{Q}^T-\mathbf{P}^\dagger\mathbf{Q}^{\dagger T}+\Delta\mathbf{P}\Delta\mathbf{Q}^T\rangle\\&=\langle\mathbf{L}+\mathbf{S}-\mathbf{L}^*-\mathbf{S}^*,\mathbf{L}-\mathbf{L}^*+\mathbf{X}\Delta\mathbf{P}\Delta\mathbf{Q}^T\mathbf{Y}^T\rangle\\&\ge\Vert\mathbf{L}-\mathbf{L}^*\Vert_F^2-|\langle\mathbf{S}-\mathbf{S}^*,\mathbf{L}-\mathbf{L}^*\rangle|-|\langle\mathbf{L}+\mathbf{S}-\mathbf{L}^*-\mathbf{S}^*,\mathbf{X}\Delta\mathbf{P}\Delta\mathbf{Q}^T\mathbf{Y}^T\rangle|.
		\end{split}
	\end{equation}
	Following \citet{yi16} lemma 2, we have
	\begin{equation}
		|\langle\mathbf{S}-\mathbf{S}^*,\mathbf{L}-\mathbf{L}^*\rangle|\le\Vert\mathbf{\Pi}_\mathbf{\Omega}(\mathbf{L}-\mathbf{L}^*)\Vert_F^2+(1+\frac{\beta}{2})\Vert\mathbf{\Pi}_{\mathbf{\Omega}^*\backslash\mathbf{\Omega}}(\mathbf{L}-\mathbf{L}^*)\Vert_F^2+\frac{\alpha}{\beta\min(10\alpha,0.1)}\Vert\mathbf{L}-\mathbf{L}^*\Vert_F^2,
	\end{equation}
	where $\beta>0$, $\mathbf{\Omega}$ and $\mathbf{\Omega}^*$ are supports of $\mathbf{S}$ and $\mathbf{S}^*$ respectively.\\\\
	On the other hand,
	\begin{multline}
		|\langle\mathbf{L}+\mathbf{S}-\mathbf{L}^*-\mathbf{S}^*,\mathbf{X}\Delta\mathbf{P}\Delta\mathbf{Q}^T\mathbf{Y}^T\rangle|\le|\langle\mathbf{\Pi}_{\mathbf{\Omega}^{*c}\cap\mathbf{\Omega}^c}(\mathbf{L}-\mathbf{L}^*),\mathbf{X}\Delta\mathbf{P}\Delta\mathbf{Q}^T\mathbf{Y}^T\rangle|\\+|\langle\mathbf{\Pi}_{\mathbf{\Omega^{*}\cap\mathbf{\Omega}^c}}(\Delta\mathbf{M}),\mathbf{X}\Delta\mathbf{P}\Delta\mathbf{Q}^T\mathbf{Y}^T\rangle|,
	\end{multline}
	because $\Delta\mathbf{M}$ has support $\mathbf{\Omega}^c$.
	From Cauchy-Swartz inequality, we have
	\begin{equation}
		\begin{split}
			&\quad\ |\langle\mathbf{\Pi}_{\mathbf{\Omega}^{*c}\cap\mathbf{\Omega}^c}(\mathbf{L}-\mathbf{L}^*),\mathbf{X}\Delta\mathbf{P}\Delta\mathbf{Q}^T\mathbf{Y}^T\rangle|\\&\le\Vert\mathbf{\Pi}_{\mathbf{\Omega}^{*c}\cap\mathbf{\Omega}^c}(\mathbf{L}-\mathbf{L}^*)\Vert_F\Vert\mathbf{X}\Delta\mathbf{P}\Delta\mathbf{Q}^T\mathbf{Y}^T\Vert_F\\&\le\Vert\mathbf{L}-\mathbf{L}^*\Vert_F\Vert\mathbf{X}\Delta\mathbf{P}\Delta\mathbf{Q}^T\mathbf{Y}^T\Vert_F\\&\le\Vert\mathbf{L}-\mathbf{L}^*\Vert_F\Vert\Delta\mathbf{P}\Vert_F\Vert\Delta\mathbf{Q}\Vert_F\\&\le\frac{\delta}{2}\Vert\mathbf{L}-\mathbf{L}^*\Vert_F.
		\end{split}
	\end{equation}
	From \citet{yi16} lemma 2, we have
	\begin{equation}
		\begin{split}
			|\langle\mathbf{\Pi}_{\mathbf{\Omega^{*}\cap\mathbf{\Omega}^c}}(\Delta\mathbf{M}),\mathbf{X}\Delta\mathbf{P}\Delta\mathbf{Q}^T\mathbf{Y}^T\rangle|&\le\sqrt{\frac{2\alpha}{\min(10\alpha,0.1)}}\Vert\mathbf{L}-\mathbf{L}^*\Vert_F\Vert\mathbf{X}\Delta\mathbf{P}\Delta\mathbf{Q}^T\mathbf{Y}^T\Vert_F\\&\le\delta\sqrt{\frac{\alpha}{2\min(10\alpha,0.1)}}\Vert\mathbf{L}-\mathbf{L}^*\Vert_F.
		\end{split}
	\end{equation}
	So,
	\begin{equation}
		\begin{split}
			|\langle\mathbf{L}+\mathbf{S}-\mathbf{L}^*-\mathbf{S}^*,\mathbf{X}\Delta\mathbf{P}\Delta\mathbf{Q}^T\mathbf{Y}^T\rangle|&\le\frac{\delta}{2}\Vert\mathbf{L}-\mathbf{L}^*\Vert_F+\delta\sqrt{\frac{\alpha}{2\min(10\alpha,0.1)}}\Vert\mathbf{L}-\mathbf{L}^*\Vert_F\\&\le\frac{\delta}{2}(1+\sqrt{\frac{2\alpha}{\min(10\alpha,0.1)}})\Vert\mathbf{L}-\mathbf{L}^*\Vert_F.
		\end{split}
	\end{equation}
	Together, we have
	\begin{multline}
		\langle\mathbf{X}^T\nabla_\mathbf{L}\mathcal{H}(\mathbf{P},\mathbf{Q})\mathbf{Y},\mathbf{P}\mathbf{Q}^T-\mathbf{P}^\dagger\mathbf{Q}^{\dagger T}+\Delta\mathbf{P}\Delta\mathbf{Q}^T\rangle\ge\Vert\mathbf{L}-\mathbf{L}^*\Vert_F^2-\Vert\mathbf{\Pi}_\mathbf{\Omega}(\mathbf{L}-\mathbf{L}^*)\Vert_F^2\\-(1+\frac{\beta}{2})\Vert\mathbf{\Pi}_{\mathbf{\Omega}^*\backslash\mathbf{\Omega}}(\mathbf{L}-\mathbf{L}^*)\Vert_F^2-\frac{\alpha}{\beta\min(10\alpha,0.1)}\Vert\mathbf{L}-\mathbf{L}^*\Vert_F^2-\frac{\delta}{2}(1+\sqrt{\frac{2\alpha}{\min(10\alpha,0.1)}})\Vert\mathbf{L}-\mathbf{L}^*\Vert_F.
	\end{multline}
	From \textbf{Lemma \ref{t21}}, we have
	\begin{multline}
		\langle\mathbf{X}^T\nabla_\mathbf{L}\mathcal{H}(\mathbf{P},\mathbf{Q})\mathbf{Y},\mathbf{P}\mathbf{Q}^T-\mathbf{P}^\dagger\mathbf{Q}^{\dagger T}+\Delta\mathbf{P}\Delta\mathbf{Q}^T\rangle\ge\Vert\mathbf{L}-\mathbf{L}^*\Vert_F^2-\Vert\mathbf{\Pi}_\mathbf{\Omega}(\mathbf{L}-\mathbf{L}^*)\Vert_F^2\\-(1+\frac{\beta}{2})\Vert\mathbf{\Pi}_{\mathbf{\Omega}^*\backslash\mathbf{\Omega}}(\mathbf{L}-\mathbf{L}^*)\Vert_F^2-\frac{2\alpha\delta}{\beta\min(10\alpha,0.1)}(\sqrt{\sigma_1^*}+\frac{\sqrt{2\delta}}{4})^2-\frac{\sqrt{2}+2\sqrt{\frac{\alpha}{\min(10\alpha,0.1)}}}{2}\sqrt{\delta^3}(\sqrt{\sigma_1^*}+\frac{\sqrt{2\delta}}{4}).
	\end{multline}
	Since $\mathbf{\Pi}_\mathbf{\Omega}(\mathbf{L}-\mathbf{L}^*)$ and $\mathbf{\Pi}_{\mathbf{\Omega}^*\backslash\mathbf{\Omega}}(\mathbf{L}-\mathbf{L}^*)$ have at most $\alpha+\min(10\alpha,0.1)$-fraction and $\alpha$-fraction non-zero entries per row and column respectively, from \textbf{Lemma \ref{t23}}, we have in case (i) 
	\begin{equation}
		\begin{split}
			&\quad\ \langle\mathbf{X}^T\nabla_\mathbf{L}\mathcal{H}(\mathbf{P},\mathbf{Q})\mathbf{Y},\mathbf{P}\mathbf{Q}^T-\mathbf{P}^\dagger\mathbf{Q}^{\dagger T}+\Delta\mathbf{P}\Delta\mathbf{Q}^T\rangle\\&\ge\Vert\mathbf{L}-\mathbf{L}^*\Vert_F^2-\frac{\alpha\mu_1r\sigma_1^*}{2}(3+\sqrt\frac{3}{2})^2(\Vert\Delta\mathbf{P}\Vert_F^2+\Vert\Delta\mathbf{Q}\Vert_F^2)\\&\quad-\frac{\min(10\alpha,0.1)\mu_1r\sigma_1^*}{2}(3+\sqrt\frac{3}{2})^2(\Vert\Delta\mathbf{P}\Vert_F^2+\Vert\Delta\mathbf{Q}\Vert_F^2)\\&\quad-\frac{\alpha\mu_1r\sigma_1^*}{2}(1+\frac{\beta}{2})(3+\sqrt\frac{3}{2})^2(\Vert\Delta\mathbf{P}\Vert_F^2+\Vert\Delta\mathbf{Q}\Vert_F^2)\\&\quad-\frac{2\alpha\delta}{\beta\min(10\alpha,0.1)}(\sqrt{\sigma_1^*}+\frac{\sqrt{2\delta}}{4})^2-\frac{\sqrt{2}+2\sqrt{\frac{\alpha}{\min(10\alpha,0.1)}}}{2}\sqrt{\delta^3}(\sqrt{\sigma_1^*}+\frac{\sqrt{2\delta}}{4})\\&\ge\Vert\mathbf{L}-\mathbf{L}^*\Vert_F^2-\frac{\mu_1r\sigma_1^*\delta}{4}((4+\beta)\alpha+2\min(10\alpha,0.1))(3+\sqrt\frac{3}{2})^2\\&\quad-\frac{2\alpha\delta}{\beta\min(10\alpha,0.1)}(\sqrt{\sigma_1^*}+\frac{\sqrt{2\delta}}{4})^2-\frac{\sqrt{2}+2\sqrt{\frac{\alpha}{\min(10\alpha,0.1)}}}{2}\sqrt{\delta^3}(\sqrt{\sigma_1^*}+\frac{\sqrt{2\delta}}{4}),
		\end{split}
	\end{equation}
	and in cases (ii) and (iii)
	\begin{equation}
		\begin{split}
			&\quad\ \langle\mathbf{X}^T\nabla_\mathbf{L}\mathcal{H}(\mathbf{P},\mathbf{Q})\mathbf{Y},\mathbf{P}\mathbf{Q}^T-\mathbf{P}^\dagger\mathbf{Q}^{\dagger T}+\Delta\mathbf{P}\Delta\mathbf{Q}^T\rangle\\&\ge\Vert\mathbf{L}-\mathbf{L}^*\Vert_F^2-\frac{\alpha\mu_2d\sigma_1^*}{2}(3+\sqrt\frac{3}{2})^2(\Vert\Delta\mathbf{P}\Vert_F^2+\Vert\Delta\mathbf{Q}\Vert_F^2)\\&\quad-\frac{\min(10\alpha, 0.1)\mu_2d\sigma_1^*}{2}(3+\sqrt\frac{3}{2})^2(\Vert\Delta\mathbf{P}\Vert_F^2+\Vert\Delta\mathbf{Q}\Vert_F^2)\\&\quad-\frac{\alpha\mu_2d\sigma_1^*}{2}(1+\frac{\beta}{2})(3+\sqrt\frac{3}{2})^2(\Vert\Delta\mathbf{P}\Vert_F^2+\Vert\Delta\mathbf{Q}\Vert_F^2)\\&\quad-\frac{2\min(10\alpha,0.1)\alpha\delta}{\beta}(\sqrt{\sigma_1^*}+\frac{\sqrt{2\delta}}{4})^2-\frac{\sqrt{2}+2\sqrt{\frac{\alpha}{\min(10\alpha,0.1)}}}{2}\sqrt{\delta^3}(\sqrt{\sigma_1^*}+\frac{\sqrt{2\delta}}{4})\\&\ge\Vert\mathbf{L}-\mathbf{L}^*\Vert_F^2-\frac{\mu_2d\sigma_1^*\delta}{4}((4+\beta)\alpha+2\min(10\alpha,0.1))(3+\sqrt\frac{3}{2})^2\\&\quad-\frac{2\min(10\alpha,0.1)\alpha\delta}{\beta}(\sqrt{\sigma_1^*}+\frac{\sqrt{2\delta}}{4})^2-\frac{\sqrt{2}+2\sqrt{\frac{\alpha}{\min(10\alpha,0.1)}}}{2}\sqrt{\delta^3}(\sqrt{\sigma_1^*}+\frac{\sqrt{2\delta}}{4}).
		\end{split}
	\end{equation}
\end{proof}
\begin{lemma}\label{t25}
	When $\Vert\mathbf{F}-\mathbf{F}^\dagger\Vert_2\le\sqrt{2\sigma_r^*}$, given that $\Vert\mathbf{P}\Vert_2\le\sqrt\frac{3\sigma_1^*}{2}$ and $\Vert\mathbf{Q}\Vert_2\le\sqrt\frac{3\sigma_1^*}{2}$ we have
	\begin{multline}
		\langle\nabla_\mathbf{P}\mathcal{G}(\mathbf{P},\mathbf{Q}),\mathbf{P}-\mathbf{P}^\dagger\rangle+\langle\nabla_\mathbf{Q}\mathcal{G}(\mathbf{P},\mathbf{Q}),\mathbf{Q}-\mathbf{Q}^\dagger\rangle\ge\frac{1}{64}\Vert\mathbf{P}^T\mathbf{P}-\mathbf{Q}^T\mathbf{Q}\Vert_F^2\\+\frac{1}{64}(2\sqrt{\sigma_r^*\delta} - \delta)^2-\frac{1}{16}\Vert\mathbf{L}-\mathbf{L}^*\Vert_F^2-\frac{\sqrt{2}+\sqrt{3}}{32}\sqrt{\sigma_1^*\delta^3}.
	\end{multline}
\end{lemma}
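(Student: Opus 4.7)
The plan is to rewrite the LHS explicitly in terms of $D := \mathbf{P}^T\mathbf{P}-\mathbf{Q}^T\mathbf{Q}$, split off $\tfrac{1}{64}\Vert D\Vert_F^2$ as the stated first term, and then use a Tu/Zheng--Lafferty style restricted-strong-convexity bound together with a Cauchy--Schwarz cross-term estimate to manufacture the remaining three pieces.

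First I would substitute the gradient formulas $\nabla_\mathbf{P}\mathcal{G}=\tfrac{1}{16}\mathbf{P}D$ and $\nabla_\mathbf{Q}\mathcal{G}=-\tfrac{1}{16}\mathbf{Q}D$ to write $\mathrm{LHS}=\tfrac{1}{16}\langle D,\mathbf{P}^T\Delta\mathbf{P}-\mathbf{Q}^T\Delta\mathbf{Q}\rangle$. The key algebraic fact is the balance at the optimum, $\mathbf{P}^{\dagger T}\mathbf{P}^\dagger=\mathbf{Q}^{\dagger T}\mathbf{Q}^\dagger$: since $\mathrm{col}(\mathbf{X})\supseteq\mathrm{col}(\mathbf{U}^*)$ and $\mathbf{X}^T\mathbf{X}=\mathbf{I}$ give $\mathbf{X}\mathbf{X}^T\mathbf{U}^*=\mathbf{U}^*$, one has $\mathbf{P}^{*T}\mathbf{P}^*=\mathbf{\Sigma}^{*1/2}\mathbf{U}^{*T}\mathbf{X}\mathbf{X}^T\mathbf{U}^*\mathbf{\Sigma}^{*1/2}=\mathbf{\Sigma}^*$, and likewise $\mathbf{Q}^{*T}\mathbf{Q}^*=\mathbf{\Sigma}^*$; the common orthogonal rotation $\mathbf{R}$ connecting $(\mathbf{P}^*,\mathbf{Q}^*)$ to $(\mathbf{P}^\dagger,\mathbf{Q}^\dagger)$ preserves this. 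Expanding $\mathbf{P}=\mathbf{P}^\dagger+\Delta\mathbf{P}$, $\mathbf{Q}=\mathbf{Q}^\dagger+\Delta\mathbf{Q}$, using this cancellation, and taking symmetric parts (permissible since $D$ is symmetric) yields the clean identity $\mathrm{LHS}=\tfrac{1}{32}\Vert D\Vert_F^2+\tfrac{1}{32}\langle D,\Delta\mathbf{P}^T\Delta\mathbf{P}-\Delta\mathbf{Q}^T\Delta\mathbf{Q}\rangle$.

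Next I would split $\tfrac{1}{32}\Vert D\Vert_F^2=\tfrac{1}{64}\Vert D\Vert_F^2+\tfrac{1}{64}\Vert D\Vert_F^2$, keeping the first copy as the stated first term. For the second, I would apply a Tu-style restricted-strong-convexity inequality of the form
\begin{equation*}
\tfrac{1}{64}\Vert D\Vert_F^2+\tfrac{1}{16}\Vert\mathbf{L}-\mathbf{L}^\dagger\Vert_F^2\ge\tfrac{1}{64}\bigl(2\sqrt{\sigma_r^*\delta}-\delta\bigr)^2+\tfrac{\delta^2}{32}-\tfrac{\sqrt{3}-\sqrt{2}}{32}\sqrt{\sigma_1^*\delta^3},
\end{equation*}
which holds under $\Vert\mathbf{F}-\mathbf{F}^\dagger\Vert_2\le\sqrt{2\sigma_r^*}$. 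Because $\mathbf{X}^T\mathbf{X}=\mathbf{Y}^T\mathbf{Y}=\mathbf{I}$, we have $\Vert\mathbf{L}-\mathbf{L}^\dagger\Vert_F=\Vert\mathbf{P}\mathbf{Q}^T-\mathbf{P}^\dagger\mathbf{Q}^{\dagger T}\Vert_F$, so the standard asymmetric factorization argument (Zheng--Lafferty / Tu et al.\ Lemma 5.4) transfers verbatim, with the constants traced through the block-SVD of $\mathbf{F}\mathbf{F}^T-\mathbf{F}^\dagger\mathbf{F}^{\dagger T}$.

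Finally, for the cross term, Cauchy--Schwarz together with $\Vert\Delta\mathbf{P}^T\Delta\mathbf{P}-\Delta\mathbf{Q}^T\Delta\mathbf{Q}\Vert_F\le\Vert\Delta\mathbf{P}\Vert_F^2+\Vert\Delta\mathbf{Q}\Vert_F^2=\delta$ gives $\tfrac{1}{32}|\langle D,\Delta\mathbf{P}^T\Delta\mathbf{P}-\Delta\mathbf{Q}^T\Delta\mathbf{Q}\rangle|\le\tfrac{\Vert D\Vert_F\,\delta}{32}$. Expanding $D$ around $(\mathbf{P}^\dagger,\mathbf{Q}^\dagger)$ and using $\Vert\mathbf{P}^\dagger\Vert_2,\Vert\mathbf{Q}^\dagger\Vert_2\le\sqrt{\sigma_1^*}$ together with the hypothesis $\Vert\mathbf{P}\Vert_2,\Vert\mathbf{Q}\Vert_2\le\sqrt{3\sigma_1^*/2}$ yields $\Vert D\Vert_F\le 2\sqrt{2\sigma_1^*\delta}+\delta$, so the cross term is at most $\tfrac{2\sqrt{2}}{32}\sqrt{\sigma_1^*\delta^3}+\tfrac{\delta^2}{32}$. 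Adding the $\tfrac{\delta^2}{32}$ reserved from the RSC step cancels the $\tfrac{\delta^2}{32}$ here, and the coefficient $\tfrac{2\sqrt{2}}{32}+\tfrac{\sqrt{3}-\sqrt{2}}{32}=\tfrac{\sqrt{2}+\sqrt{3}}{32}$ appears as stated.

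The main obstacle is the Tu-style inequality in the second step: while the overall architecture is classical, coaxing the RSC bound into the exact shape $(2\sqrt{\sigma_r^*\delta}-\delta)^2$ (rather than the more usual linear $c\,\sigma_r^*\delta$ form) requires a careful block-SVD alignment that treats the $-\delta$ as a lower-order shift of order $\sqrt{\sigma_r^*\delta}\cdot\sqrt{\delta/\sigma_r^*}$; its constants must then dovetail with the cross-term bound so the final coefficient on $\sqrt{\sigma_1^*\delta^3}$ lands exactly on $\tfrac{\sqrt{2}+\sqrt{3}}{32}$, with $\sqrt{2}$ coming from $\Vert\Delta\mathbf{P}\Vert_F+\Vert\Delta\mathbf{Q}\Vert_F\le\sqrt{2\delta}$ and $\sqrt{3}$ from the $\sqrt{3/2}$ spectral bound on $\mathbf{P},\mathbf{Q}$ propagating through the RSC correction.
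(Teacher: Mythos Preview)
Your structural skeleton matches the paper's proof exactly: the balance identity $\mathbf{P}^{\dagger T}\mathbf{P}^\dagger=\mathbf{Q}^{\dagger T}\mathbf{Q}^\dagger$, the clean rewriting $\mathrm{LHS}=\tfrac{1}{32}\Vert D\Vert_F^2+\tfrac{1}{32}\langle D,\Delta\mathbf{P}^T\Delta\mathbf{P}-\Delta\mathbf{Q}^T\Delta\mathbf{Q}\rangle$, the $\tfrac{1}{64}+\tfrac{1}{64}$ split, and the appeal to a Yi/Tu-type lower bound for the second $\tfrac{1}{64}\Vert D\Vert_F^2$ are all exactly what the paper does.

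Where you diverge is in the bookkeeping of the cross term and the RSC step, and that is where your proposal has a genuine gap. You bound $\Vert D\Vert_F\le 2\sqrt{2\sigma_1^*\delta}+\delta$ via the quadratic expansion around $(\mathbf{P}^\dagger,\mathbf{Q}^\dagger)$, which produces an extra $\tfrac{\delta^2}{32}$ in the cross term; you then ask the RSC inequality to supply both a matching $+\tfrac{\delta^2}{32}$ and a $-\tfrac{\sqrt{3}-\sqrt{2}}{32}\sqrt{\sigma_1^*\delta^3}$ correction. That augmented RSC bound is not a standard Tu/Zheng--Lafferty statement, and you do not prove it --- you yourself flag it as the ``main obstacle.'' In particular the extra $+\tfrac{\delta^2}{32}$ on the right side would \emph{strengthen} the Yi et al.\ inequality, and there is no reason it should hold in the form you wrote.

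The paper sidesteps this entirely by bounding $\Vert D\Vert_F$ differently. It expands $\mathbf{P}^T\mathbf{P}-\mathbf{P}^{\dagger T}\mathbf{P}^\dagger=\mathbf{P}^T\Delta\mathbf{P}+\Delta\mathbf{P}^T\mathbf{P}^\dagger$ and uses \emph{both} hypotheses $\Vert\mathbf{P}\Vert_2\le\sqrt{3\sigma_1^*/2}$ and $\Vert\mathbf{P}^\dagger\Vert_2\le\sqrt{\sigma_1^*}$ (and likewise for $\mathbf{Q}$), giving
\[
\Vert D\Vert_F\le\bigl(\sqrt{\sigma_1^*}+\sqrt{\tfrac{3\sigma_1^*}{2}}\bigr)(\Vert\Delta\mathbf{P}\Vert_F+\Vert\Delta\mathbf{Q}\Vert_F)\le(\sqrt{2}+\sqrt{3})\sqrt{\sigma_1^*\delta},
\]
so the cross term is at most $\tfrac{\sqrt{2}+\sqrt{3}}{32}\sqrt{\sigma_1^*\delta^3}$ in one stroke, with no $\delta^2$ residue. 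The RSC step is then the unadorned Yi et al.\ Lemma~3 bound
\[
\tfrac{1}{64}\Vert D\Vert_F^2\ge\tfrac{1}{64}\bigl(\sqrt{2}\,\Vert\Delta\mathbf{F}\,\mathbf{F}^{\dagger T}\Vert_F-\delta\bigr)^2-\tfrac{1}{16}\Vert\mathbf{L}-\mathbf{L}^*\Vert_F^2,
\]
followed by the observation (via the explicit SVD of $\mathbf{F}^\dagger$) that $\sigma_r(\mathbf{F}^\dagger)=\sqrt{2\sigma_r^*}$, whence $\sqrt{2}\,\Vert\Delta\mathbf{F}\,\mathbf{F}^{\dagger T}\Vert_F\ge 2\sqrt{\sigma_r^*\delta}$. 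No extra terms are needed. Your intuition about the origin of $\sqrt{2}$ and $\sqrt{3}$ is correct, but both constants arise already in the cross-term bound; nothing needs to be manufactured inside the RSC step.
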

\begin{proof}
	\begin{equation}
		\begin{split}
			\mathbf{P}^{\dagger T}\mathbf{P}^\dagger&=(\mathbf{X}^T\mathbf{U}^*\mathbf{\Sigma}^{*\frac{1}{2}}\mathbf{R})^T(\mathbf{X}^T\mathbf{U}^*\mathbf{\Sigma}^{*\frac{1}{2}}\mathbf{R})\\&=\mathbf{R}^T\mathbf{\Sigma}^{*\frac{1}{2}T}\mathbf{U}^{*T}\mathbf{X}\mathbf{X}^T\mathbf{U}^*\mathbf{\Sigma}^{*\frac{1}{2}}\mathbf{R}\\&=\mathbf{R}^T\mathbf{\Sigma}^{*\frac{1}{2}T}\mathbf{U}^{*T}\mathbf{U}^*\mathbf{\Sigma}^{*\frac{1}{2}}\mathbf{R}\\&=\mathbf{R}^T\mathbf{\Sigma}^{*\frac{1}{2}T}\mathbf{\Sigma}^{*\frac{1}{2}}\mathbf{R}\\&=\mathbf{R}^T\mathbf{\Sigma}^{*\frac{1}{2}T}\mathbf{V}^{*T}\mathbf{V}^*\mathbf{\Sigma}^{*\frac{1}{2}}\mathbf{R}\\&=\mathbf{R}^T\mathbf{\Sigma}^{*\frac{1}{2}T}\mathbf{V}^{*T}\mathbf{Y}\mathbf{Y}^T\mathbf{V}^*\mathbf{\Sigma}^{*\frac{1}{2}}\mathbf{R}\\&=(\mathbf{Y}^T\mathbf{V}^*\mathbf{\Sigma}^{*\frac{1}{2}}\mathbf{R})^T(\mathbf{Y}^T\mathbf{V}^*\mathbf{\Sigma}^{*\frac{1}{2}}\mathbf{R})\\&=\mathbf{Q}^{\dagger T}\mathbf{Q}^\dagger.
		\end{split}
	\end{equation}
	Then, following \citet{yi16} lemma 3, we have
	\begin{multline}
		\langle\nabla_\mathbf{P}\mathcal{G}(\mathbf{P},\mathbf{Q}),\mathbf{P}-\mathbf{P}^\dagger\rangle+\langle\nabla_\mathbf{Q}\mathcal{G}(\mathbf{P},\mathbf{Q}),\mathbf{Q}-\mathbf{Q}^\dagger\rangle\\=\frac{1}{32}\Vert\mathbf{P}^T\mathbf{P}-\mathbf{Q}^T\mathbf{Q}\Vert_F^2+\frac{1}{32}\langle\mathbf{P}^T\mathbf{P}-\mathbf{Q}^T\mathbf{Q},\Delta\mathbf{P}^T\Delta\mathbf{P}-\Delta\mathbf{Q}^T\Delta\mathbf{Q}\rangle.
	\end{multline}
	\begin{equation}
		\begin{split}
			&\quad\ \frac{1}{32}\langle\mathbf{P}^T\mathbf{P}-\mathbf{Q}^T\mathbf{Q},\Delta\mathbf{P}^T\Delta\mathbf{P}-\Delta\mathbf{Q}^T\Delta\mathbf{Q}\rangle\\&\le\frac{1}{32}|\langle\mathbf{P}^T\mathbf{P}-\mathbf{Q}^T\mathbf{Q},\Delta\mathbf{P}^T\Delta\mathbf{P}-\Delta\mathbf{Q}^T\Delta\mathbf{Q}\rangle|\\&\le\frac{1}{32}\Vert\mathbf{P}^T\mathbf{P}-\mathbf{Q}^T\mathbf{Q}\Vert_F\Vert\Delta\mathbf{P}^T\Delta\mathbf{P}-\Delta\mathbf{Q}^T\Delta\mathbf{Q}\Vert_F\\&\le\frac{1}{32}\Vert\mathbf{P}^T\mathbf{P}-\mathbf{Q}^T\mathbf{Q}\Vert_F(\Vert\Delta\mathbf{P}\Vert_F^2+\Vert\Delta\mathbf{Q}\Vert_F^2)\\&\le\frac{1}{32}\Vert\mathbf{P}^T\mathbf{P}-\mathbf{P}^{\dagger T}\mathbf{P}^\dagger+\mathbf{Q}^{\dagger T}\mathbf{Q}^\dagger-\mathbf{Q}^T\mathbf{Q}\Vert_F\delta\\&\le\frac{1}{32}(\Vert\mathbf{P}^T\mathbf{P}-\mathbf{P}^{\dagger T}\mathbf{P}^\dagger\Vert_F+\Vert\mathbf{Q}^T\mathbf{Q}-\mathbf{Q}^{\dagger T}\mathbf{Q}^\dagger\Vert_F)\delta\\&\le\frac{1}{32}(\Vert\mathbf{P}^T\mathbf{P}-\mathbf{P}^T\mathbf{P}^\dagger+\mathbf{P}^T\mathbf{P}^\dagger-\mathbf{P}^{\dagger T}\mathbf{P}^\dagger\Vert_F+\Vert\mathbf{Q}^T\mathbf{Q}-\mathbf{Q}^T\mathbf{Q}^\dagger+\mathbf{Q}^T\mathbf{Q}^\dagger-\mathbf{Q}^{\dagger T}\mathbf{Q}^\dagger\Vert_F)\delta\\&\le\frac{1}{32}(\Vert\mathbf{P}^T\Delta\mathbf{P}+\Delta\mathbf{P}^T\mathbf{P}^\dagger\Vert_F+\Vert\mathbf{Q}^T\Delta\mathbf{Q}+\Delta\mathbf{Q}^T\mathbf{Q}^\dagger\Vert_F)\delta\\&\le\frac{1}{32}((\Vert\mathbf{P}\Vert_2+\Vert\mathbf{P}^\dagger\Vert_2)\Vert\Delta\mathbf{P}\Vert_F+(\Vert\mathbf{Q}\Vert_2+\Vert\mathbf{Q}^\dagger\Vert_2)\Vert\Delta\mathbf{Q}\Vert_F)\delta\\&\le\frac{1}{32}(\sqrt{\sigma_1^*}+\sqrt\frac{3\sigma_1^*}{2})(\Vert\Delta P\Vert_F+\Vert\Delta Q\Vert_F)\delta\\&\le\frac{\sqrt{2}+\sqrt{3}}{32}\sqrt{\sigma_1^*\delta^3}.
		\end{split}
	\end{equation}
	Following \citet{yi16} lemma 3, we have
	\begin{equation}
		\frac{1}{32}\Vert\mathbf{P}^T\mathbf{P}-\mathbf{Q}^T\mathbf{Q}\Vert_F^2\ge\frac{1}{64}\Vert\mathbf{P}^T\mathbf{P}-\mathbf{Q}^T\mathbf{Q}\Vert_F^2+\frac{1}{64}(\sqrt{2}\Vert\Delta\mathbf{F}\mathbf{F}^{\dagger T}\Vert_F - \delta)^2-\frac{1}{16}\Vert\mathbf{L}-\mathbf{L}^*\Vert_F^2,
	\end{equation}
	where we have used the fact that $-\Vert\mathbf{P}\mathbf{Q}^T-\mathbf{P}^\dagger\mathbf{Q}^{\dagger T}\Vert_F^2\ge-\Vert\mathbf{X}\mathbf{P}\mathbf{Q}^T\mathbf{Y}^T-\mathbf{X}\mathbf{P}^\dagger\mathbf{Q}^{\dagger T}\mathbf{Y}^T\Vert_F^2.$\\
	We know that $\mathbf{F}^\dagger = \begin{bmatrix}
	\mathbf{P}^\dagger \\
	\mathbf{Q}^\dagger
	\end{bmatrix}= \begin{bmatrix}
	\mathbf{X}^T\mathbf{U}^* \\
	\mathbf{Y}^T\mathbf{V}^*
	\end{bmatrix}\mathbf{\Sigma}^{*\frac{1}{2}}\mathbf{R}$. If we let $\mathbf{E} = \begin{bmatrix}
	\mathbf{X}^T\mathbf{U}^* \\
	\mathbf{Y}^T\mathbf{V}^*
	\end{bmatrix}$, then $\mathbf{E}^T\mathbf{E}=[\mathbf{U}^{*T}X\ \ \mathbf{V}^{*T}Y]\begin{bmatrix}
	\mathbf{X}^T\mathbf{U}^* \\
	\mathbf{Y}^T\mathbf{V}^*
	\end{bmatrix}=2\mathbf{I}\in\mathbb{R}^{r\times r}$. So
	\begin{equation}
		\mathbf{F}^\dagger = (\frac{\sqrt{2}}{2}\begin{bmatrix}
			\mathbf{X}^T\mathbf{U}^* \\
			\mathbf{Y}^T\mathbf{V}^*
		\end{bmatrix})(\sqrt{2}\mathbf{\Sigma}^{*\frac{1}{2}})\mathbf{R},
	\end{equation}
	is the SVD of $\mathbf{F}^\dagger$. Therefore,
	\begin{equation}
		\frac{1}{32}\Vert\mathbf{P}^T\mathbf{P}-\mathbf{Q}^T\mathbf{Q}\Vert_F^2\ge\frac{1}{64}\Vert\mathbf{P}^T\mathbf{P}-\mathbf{Q}^T\mathbf{Q}\Vert_F^2+\frac{1}{64}(2\sqrt{\sigma_r^*\delta} - \delta)^2-\frac{1}{16}\Vert\mathbf{L}-\mathbf{L}^*\Vert_F^2.
	\end{equation}
	Thus, altogether we have
	\begin{multline}
		\langle\nabla_\mathbf{P}\mathcal{G}(\mathbf{P},\mathbf{Q}),\mathbf{P}-\mathbf{P}^\dagger\rangle+\langle\nabla_\mathbf{Q}\mathcal{G}(\mathbf{P},\mathbf{Q}),\mathbf{Q}-\mathbf{Q}^\dagger\rangle\ge\frac{1}{64}\Vert\mathbf{P}^T\mathbf{P}-\mathbf{Q}^T\mathbf{Q}\Vert_F^2\\+\frac{1}{64}(2\sqrt{\sigma_r^*\delta} - \delta)^2-\frac{1}{16}\Vert\mathbf{L}-\mathbf{L}^*\Vert_F^2-\frac{\sqrt{2}+\sqrt{3}}{32}\sqrt{\sigma_1^*\delta^3}.
	\end{multline}
\end{proof}
\begin{lemma}\label{t26}
	When $\mathbf{S}=\mathcal{T}_{\alpha+\min(10\alpha,0.1)}(\mathbf{M}-\mathbf{X}\mathbf{P}\mathbf{Q}^T\mathbf{Y}^T)$, given that $\Vert\mathbf{P}\Vert_2\le\sqrt\frac{3\sigma_1^*}{2}$ and $\Vert\mathbf{Q}\Vert_2\le\sqrt\frac{3\sigma_1^*}{2}$, we have
	\begin{equation}
		\Vert\nabla_\mathbf{L}\mathcal{H}(\mathbf{P},\mathbf{Q})\Vert_F^2\le(1+\sqrt{\frac{2\alpha}{\min(10\alpha,0.1)}})^2\Vert\mathbf{L}-\mathbf{L}^*\Vert_F^2,
	\end{equation}
	\begin{equation}
		\Vert\nabla_\mathbf{P}\mathcal{G}(\mathbf{P},\mathbf{Q})\Vert_F^2+\Vert\nabla_\mathbf{Q}\mathcal{G}(\mathbf{P},\mathbf{Q})\Vert_F^2\le\frac{3\sigma_1^*}{256}\Vert\mathbf{P}^T\mathbf{P}-\mathbf{Q}^T\mathbf{Q}\Vert_F^2.
	\end{equation}
\end{lemma}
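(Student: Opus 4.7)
The plan is to handle the two claimed inequalities separately. The first, on $\Vert\nabla_\mathbf{L}\mathcal{H}\Vert_F^2$, is the substantive one because it must account for the interplay between the hard-thresholded estimate $\mathbf{S}$ and the ground-truth $\mathbf{S}^*$, and will reuse the counting argument of Yi et al.\ lemma 2. The second, on $\Vert\nabla_\mathbf{P}\mathcal{G}\Vert_F^2+\Vert\nabla_\mathbf{Q}\mathcal{G}\Vert_F^2$, is essentially a one-line operator-norm bound using the hypotheses $\Vert\mathbf{P}\Vert_2^2,\Vert\mathbf{Q}\Vert_2^2\le 3\sigma_1^*/2$.

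For the first bound, I would let $\mathbf{\Omega}$ and $\mathbf{\Omega}^*$ denote the supports of $\mathbf{S}$ and $\mathbf{S}^*$. Since $\mathbf{S}=\mathcal{T}_{\alpha+\min(10\alpha,0.1)}(\mathbf{M}-\mathbf{X}\mathbf{P}\mathbf{Q}^T\mathbf{Y}^T)=\mathcal{T}_{\alpha+\min(10\alpha,0.1)}(\mathbf{L}^*+\mathbf{S}^*-\mathbf{L})$, on $\mathbf{\Omega}$ we have $\mathbf{S}=\mathbf{L}^*+\mathbf{S}^*-\mathbf{L}$, while on $\mathbf{\Omega}^c$ we have $\mathbf{S}=0$. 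Hence $\nabla_\mathbf{L}\mathcal{H}=\mathbf{L}+\mathbf{S}-\mathbf{L}^*-\mathbf{S}^*$ vanishes on $\mathbf{\Omega}$ and equals $\mathbf{L}-\mathbf{L}^*-\mathbf{S}^*$ on $\mathbf{\Omega}^c$, so
\[
\Vert\nabla_\mathbf{L}\mathcal{H}\Vert_F\le\Vert\mathbf{\Pi}_{\mathbf{\Omega}^c}(\mathbf{L}-\mathbf{L}^*)\Vert_F+\Vert\mathbf{\Pi}_{\mathbf{\Omega}^c\cap\mathbf{\Omega}^*}(\mathbf{S}^*)\Vert_F.
\]
The first term is at most $\Vert\mathbf{L}-\mathbf{L}^*\Vert_F$. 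The second term is where the hard-thresholding argument enters: any coordinate of $\mathbf{S}^*$ discarded by $\mathcal{T}$ is necessarily smaller in magnitude than an entry that was retained, which forces it to be controlled by a corresponding entry of $\mathbf{L}-\mathbf{L}^*$ on $\mathbf{\Omega}$; a row/column counting argument, together with the fact that both $\mathbf{S}^*$ and $\mathbf{S}$ have at most $\alpha$- and $(\alpha+\min(10\alpha,0.1))$-fraction supports per row/column, yields $\Vert\mathbf{\Pi}_{\mathbf{\Omega}^c\cap\mathbf{\Omega}^*}(\mathbf{S}^*)\Vert_F\le\sqrt{\tfrac{2\alpha}{\min(10\alpha,0.1)}}\Vert\mathbf{L}-\mathbf{L}^*\Vert_F$. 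Squaring the triangle inequality gives the first claim.

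For the second bound, substitute the formulas $\nabla_\mathbf{P}\mathcal{G}=\tfrac{1}{16}\mathbf{P}(\mathbf{P}^T\mathbf{P}-\mathbf{Q}^T\mathbf{Q})$ and $\nabla_\mathbf{Q}\mathcal{G}=\tfrac{1}{16}\mathbf{Q}(\mathbf{Q}^T\mathbf{Q}-\mathbf{P}^T\mathbf{P})$. Using $\Vert\mathbf{A}\mathbf{B}\Vert_F\le\Vert\mathbf{A}\Vert_2\Vert\mathbf{B}\Vert_F$ on each gradient,
\[
\Vert\nabla_\mathbf{P}\mathcal{G}\Vert_F^2+\Vert\nabla_\mathbf{Q}\mathcal{G}\Vert_F^2\le\tfrac{1}{256}(\Vert\mathbf{P}\Vert_2^2+\Vert\mathbf{Q}\Vert_2^2)\Vert\mathbf{P}^T\mathbf{P}-\mathbf{Q}^T\mathbf{Q}\Vert_F^2,
\]
and the hypotheses $\Vert\mathbf{P}\Vert_2^2,\Vert\mathbf{Q}\Vert_2^2\le 3\sigma_1^*/2$ close out the estimate with constant $3\sigma_1^*/256$.

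The main obstacle is the middle step: verifying that the counting/comparison lemma from Yi et al.\ transfers without loss to the feature-embedded setting. Because the thresholding is applied to $\mathbf{L}^*+\mathbf{S}^*-\mathbf{L}$ with $\mathbf{L}=\mathbf{X}\mathbf{P}\mathbf{Q}^T\mathbf{Y}^T$, the argument only sees $\mathbf{L}$ as a generic matrix and the row/column sparsity bookkeeping for $\mathbf{S}$ and $\mathbf{S}^*$ is unchanged, so the reuse should be clean; the only point to check carefully is that the selected threshold level $\alpha+\min(10\alpha,0.1)$ indeed dominates the per-row/column $\alpha$-sparsity of $\mathbf{S}^*$, which is required for the comparison to produce the factor $\sqrt{2\alpha/\min(10\alpha,0.1)}$.
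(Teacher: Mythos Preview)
Your proposal is correct and matches the paper's approach. For the second inequality the paper does exactly what you do: substitute $\nabla_\mathbf{P}\mathcal{G}=\tfrac{1}{16}\mathbf{P}(\mathbf{P}^T\mathbf{P}-\mathbf{Q}^T\mathbf{Q})$ and its companion, apply $\Vert\mathbf{A}\mathbf{B}\Vert_F\le\Vert\mathbf{A}\Vert_2\Vert\mathbf{B}\Vert_F$, and use the hypotheses on $\Vert\mathbf{P}\Vert_2,\Vert\mathbf{Q}\Vert_2$. For the first inequality the paper is terser---it simply writes ``From Lemma~\ref{t24}, we have $\Vert\nabla_\mathbf{L}\mathcal{H}\Vert_F\le(1+\sqrt{2\alpha/\min(10\alpha,0.1)})\Vert\mathbf{L}-\mathbf{L}^*\Vert_F$'' and squares---but what is really being invoked there is the same Yi et al.\ comparison argument you spell out (it appears inside the proof of Lemma~\ref{t24} as the bound $\Vert\mathbf{\Pi}_{\mathbf{\Omega}^*\cap\mathbf{\Omega}^c}(\Delta\mathbf{M})\Vert_F\le\sqrt{2\alpha/\min(10\alpha,0.1)}\,\Vert\mathbf{L}-\mathbf{L}^*\Vert_F$). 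A minor cosmetic point: the paper's cited bound controls $\mathbf{\Pi}_{\mathbf{\Omega}^*\cap\mathbf{\Omega}^c}(\Delta\mathbf{M})$ rather than $\mathbf{\Pi}_{\mathbf{\Omega}^*\cap\mathbf{\Omega}^c}(\mathbf{S}^*)$, so you could equivalently split $\Vert\nabla_\mathbf{L}\mathcal{H}\Vert_F^2$ orthogonally over $\mathbf{\Omega}^c\cap\mathbf{\Omega}^{*c}$ and $\mathbf{\Omega}^c\cap\mathbf{\Omega}^*$ and apply that bound directly, avoiding the triangle inequality; either route lands on the stated constant.
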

\begin{proof}
	\begin{equation}
		\begin{split}
			&\quad\ \Vert\nabla_\mathbf{P}\mathcal{G}(\mathbf{P},\mathbf{Q})\Vert_F^2+\Vert\nabla_\mathbf{Q}\mathcal{G}(\mathbf{P},\mathbf{Q})\Vert_F^2\\&=\Vert\frac{1}{16}\mathbf{P}(\mathbf{P}^T\mathbf{P}-\mathbf{Q}^T\mathbf{Q})\Vert_F^2+\Vert\frac{1}{16}\mathbf{Q}(\mathbf{Q}^T\mathbf{Q}-\mathbf{P}^T\mathbf{P})\Vert_F^2\\&\le\frac{1}{256}(\Vert\mathbf{P}\Vert_2^2+\Vert\mathbf{Q}\Vert_2^2)\Vert\mathbf{P}^T\mathbf{P}-\mathbf{Q}^T\mathbf{Q}\Vert_F^2\\&\le\frac{1}{256}(\frac{3\sigma_1^*}{2}+\frac{3\sigma_1^*}{2})\Vert\mathbf{P}^T\mathbf{P}-\mathbf{Q}^T\mathbf{Q}\Vert_F^2\\&\le\frac{3\sigma_1^*}{256}\Vert\mathbf{P}^T\mathbf{P}-\mathbf{Q}^T\mathbf{Q}\Vert_F^2.
		\end{split}
	\end{equation}
	From \textbf{Lemma \ref{t24}}, we have
	\begin{equation}
		\Vert\nabla_\mathbf{L}\mathcal{H}(\mathbf{P},\mathbf{Q})\Vert_F\le(1+\sqrt{\frac{2\alpha}{\min(10\alpha,0.1)}})\Vert\mathbf{L}-\mathbf{L}^*\Vert_F,
	\end{equation}
	so
	\begin{equation}
		\Vert\nabla_\mathbf{L}\mathcal{H}(\mathbf{P},\mathbf{Q})\Vert_F^2\le(1+\sqrt{\frac{2\alpha}{\min(10\alpha,0.1)}})^2\Vert\mathbf{L}-\mathbf{L}^*\Vert_F^2.
	\end{equation}
\end{proof}
\noindent We conduct the proof of \textbf{Theorem 2} by induction.\\\\
If $\alpha$ is small, then from \textbf{Theorem 1} we have $\Vert\mathbf{U}_0\mathbf{\Sigma}_0\mathbf{V}_0^T-\mathbf{L}^*\Vert_2\le\frac{1}{2}\sigma_1^*$. By Weyl's theorem, we have
\begin{equation}
	\Vert\mathbf{U}_0\mathbf{\Sigma}_0^\frac{1}{2}\Vert_2\le\sqrt{\frac{3\sigma_1^*}{2}},
\end{equation}
\begin{equation}
	\Vert\mathbf{V}_0\mathbf{\Sigma}_0^\frac{1}{2}\Vert_2\le\sqrt{\frac{3\sigma_1^*}{2}},
\end{equation}
and
\begin{equation}
	\Vert\mathbf{P}_0\Vert_2\le\Vert\mathbf{X}^T\mathbf{U}_0\mathbf{\Sigma}_0^\frac{1}{2}\Vert_2\le\Vert\mathbf{X}\Vert_2\Vert\mathbf{U}_0\mathbf{\Sigma}_0^\frac{1}{2}\Vert_2\le\sqrt{\frac{3\sigma_1^*}{2}},
\end{equation}
\begin{equation}
	\Vert\mathbf{Q}_0\Vert_2\le\Vert\mathbf{Y}^T\mathbf{V}_0\mathbf{\Sigma}_0^\frac{1}{2}\Vert_2\le\Vert\mathbf{Y}\Vert_2\Vert\mathbf{V}_0\mathbf{\Sigma}_0^\frac{1}{2}\Vert_2\le\sqrt{\frac{3\sigma_1^*}{2}}.
\end{equation}
In case (i), we thus have
\begin{equation}
	\Vert\mathbf{X}\mathbf{\Pi}_\mathcal{P}(\mathbf{P}_0)\Vert_{2,\infty}\le\sqrt\frac{2\mu_1r}{n}\Vert\mathbf{P}_0\Vert_2\le\sqrt\frac{3\sigma_1^*\mu_1r}{n},
\end{equation}
\begin{equation}
	\Vert\mathbf{Y}\mathbf{\Pi}_\mathcal{Q}(\mathbf{Q}_0)\Vert_{2,\infty}\le\sqrt\frac{2\mu_1r}{n}\Vert\mathbf{Q}_0\Vert_2\le\sqrt\frac{3\sigma_1^*\mu_1r}{n}.
\end{equation}
And it also follows that $d(\mathbf{\Pi}_\mathcal{P}(\mathbf{P}_t),\mathbf{\Pi}_\mathcal{Q}(\mathbf{Q}_t),\mathbf{P}^*,\mathbf{Q}^*)\le d(\mathbf{P}_t,\mathbf{Q}_t,\mathbf{P}^*,\mathbf{Q}^*)$.\\\\
By definition,
\begin{equation}
	\begin{split}
		\Vert P-P^\dagger\Vert_F^2&\le\delta,\\
		\Vert P-P^\dagger\Vert_2&\le\Vert P-P^\dagger\Vert_F\le\delta^\frac{1}{2}.\\
	\end{split}
\end{equation}
And from Weyl's theorem, if $\delta^\frac{1}{2}\le(\sqrt\frac{3}{2}-1)\sqrt{\sigma_1^*}$, we have
\begin{equation}
	\Vert P\Vert_2\le\sqrt\frac{3\sigma_1^*}{2}.
\end{equation}
Similarly, we also have
\begin{equation}
	\Vert Q\Vert_2\le\sqrt\frac{3\sigma_1^*}{2}.
\end{equation}
In cases (ii) and (iii), we have
\begin{equation}
	\Vert\mathbf{X}\mathbf{P}\Vert_{2,\infty}\le\Vert\mathbf{P}\Vert_2\Vert\mathbf{X}\Vert_{2,\infty}\le\sqrt{\frac{3\sigma_1^*}{2}}\times\sqrt{\frac{\mu_2 d}{n}}\le\sqrt{\frac{3d\mu\sigma_1^*}{2n}},
\end{equation}
\begin{equation}
	\Vert\mathbf{Y}\mathbf{Q}\Vert_{2,\infty}\le\Vert\mathbf{Q}\Vert_2\Vert\mathbf{Y}\Vert_{2,\infty}\le\sqrt{\frac{3\sigma_1^*}{2}}\times\sqrt{\frac{\mu_2 d}{n}}\le\sqrt{\frac{3d\mu\sigma_1^*}{2n}}.
\end{equation}
Now, we verify that $\delta$ diminishes:
\begin{equation}
	\begin{split}
	\delta_{t+1}&=\Vert\mathbf{P}_{t+1}-\mathbf{P}^\dagger_{t+1}\Vert_F^2+\Vert\mathbf{Q}_{t+1}-\mathbf{Q}^\dagger_{t+1}\Vert_F^2\\&\le\Vert\mathbf{P}_{t+1}-\mathbf{P}^\dagger_t\Vert_F^2+\Vert\mathbf{Q}_{t+1}-\mathbf{Q}^\dagger_t\Vert_F^2\\&=\Vert\mathbf{P}_t-\eta\nabla_\mathbf{P}\mathcal{H}_t-\eta\nabla_\mathbf{P}\mathcal{G}_t-\mathbf{P}^\dagger_t\Vert_F^2+\Vert\mathbf{Q}_t-\eta\nabla_\mathbf{Q}\mathcal{H}_t-\eta\nabla_\mathbf{Q}\mathcal{G}_t-\mathbf{Q}^\dagger_t\Vert_F^2\\&=\delta_t-2\eta\langle\nabla_\mathbf{P}\mathcal{H}_t+\nabla_\mathbf{P}\mathcal{G}_t,\mathbf{P}_t-\mathbf{P}^\dagger_t\rangle-2\eta\langle\nabla_\mathbf{Q}\mathcal{H}_t+\nabla_\mathbf{Q}\mathcal{G}_t,\mathbf{Q}_t-\mathbf{Q}^\dagger_t\rangle\\&\qquad+\eta^2\Vert\nabla_\mathbf{P}\mathcal{H}_t+\nabla_\mathbf{P}\mathcal{G}_t\Vert_F^2+\eta^2\Vert\nabla_\mathbf{Q}\mathcal{H}_t+\nabla_\mathbf{Q}\mathcal{G}_t\Vert_F^2\\&=\delta_t+\eta^2\Vert\nabla_\mathbf{P}\mathcal{H}_t+\nabla_\mathbf{P}\mathcal{G}_t\Vert_F^2+\eta^2\Vert\nabla_\mathbf{Q}\mathcal{H}_t+\nabla_\mathbf{Q}\mathcal{G}_t\Vert_F^2-2\eta\langle\nabla_\mathbf{P}\mathcal{G}_t,\mathbf{P}_t-\mathbf{P}^\dagger_t\rangle\\&\qquad-2\eta\langle\nabla_\mathbf{Q}\mathcal{G}_t,\mathbf{Q}_t-\mathbf{Q}^\dagger_t\rangle-2\eta\langle\nabla_\mathbf{L}\mathcal{H}_t,\mathbf{X}(\mathbf{P}_t\mathbf{Q}_t^T-\mathbf{P}^\dagger_t\mathbf{Q}^{\dagger T}_t+\Delta\mathbf{P}_t\Delta\mathbf{Q}_t^T)\mathbf{Y}^T\rangle\\&\le\delta_t-2\eta\langle\nabla_\mathbf{P}\mathcal{G}_t,\mathbf{P}_t-\mathbf{P}^\dagger_t\rangle-2\eta\langle\nabla_\mathbf{Q}\mathcal{G}_t,\mathbf{Q}_t-\mathbf{Q}^\dagger_t\rangle\\&\qquad-2\eta\langle\nabla_\mathbf{L}\mathcal{H}_t,\mathbf{X}(\mathbf{P}_t\mathbf{Q}_t^T-\mathbf{P}^\dagger_t\mathbf{Q}^{\dagger T}_t+\Delta\mathbf{P}_t\Delta\mathbf{Q}_t^T)\mathbf{Y}^T\rangle\\&\qquad+\eta^2(\Vert\nabla_\mathbf{P}\mathcal{H}_t\Vert_F+\Vert\nabla_\mathbf{P}\mathcal{G}_t\Vert_F)^2+\eta^2(\Vert\nabla_\mathbf{Q}\mathcal{H}_t\Vert_F+\Vert\nabla_\mathbf{Q}\mathcal{G}_t\Vert_F)^2\\&\le\delta_t-2\eta\langle\nabla_\mathbf{P}\mathcal{G}_t,\mathbf{P}_t-\mathbf{P}^\dagger_t\rangle-2\eta\langle\nabla_\mathbf{Q}\mathcal{G}_t,\mathbf{Q}_t-\mathbf{Q}^\dagger_t\rangle\\&\qquad-2\eta\langle\nabla_\mathbf{L}\mathcal{H}_t,\mathbf{X}(\mathbf{P}_t\mathbf{Q}_t^T-\mathbf{P}^\dagger_t\mathbf{Q}^{\dagger T}_t+\Delta\mathbf{P}_t\Delta\mathbf{Q}_t^T)\mathbf{Y}^T\rangle\\&\qquad+2\eta^2(\Vert\mathbf{X}^T\nabla_\mathbf{L}\mathcal{H}_t\mathbf{Y}\mathbf{Q}\Vert_F^2+\Vert\nabla_\mathbf{P}\mathcal{G}_t\Vert_F^2+\Vert(\mathbf{X}^T\nabla_\mathbf{L}\mathcal{H}_t\mathbf{Y})^T\mathbf{P}\Vert_F^2+\Vert\nabla_\mathbf{Q}\mathcal{G}_t\Vert_F^2)\\&\le\delta_t-2\eta\langle\nabla_\mathbf{P}\mathcal{G}_t,\mathbf{P}_t-\mathbf{P}^\dagger_t\rangle-2\eta\langle\nabla_\mathbf{Q}\mathcal{G}_t,\mathbf{Q}_t-\mathbf{Q}^\dagger_t\rangle\\&\qquad-2\eta\langle\nabla_\mathbf{L}\mathcal{H}_t,\mathbf{X}(\mathbf{P}_t\mathbf{Q}_t^T-\mathbf{P}^\dagger_t\mathbf{Q}^{\dagger T}_t+\Delta\mathbf{P}_t\Delta\mathbf{Q}_t^T)\mathbf{Y}^T\rangle\\&\qquad+2\eta^2(\Vert\nabla_\mathbf{Q}\mathcal{G}_t\Vert_F^2+\Vert\nabla_\mathbf{P}\mathcal{G}_t\Vert_F^2)\\&\qquad+2\eta^2(\Vert\mathbf{Q}\Vert_2^2\Vert\mathbf{X}^T\nabla_\mathbf{L}\mathcal{H}_t\mathbf{Y}\Vert_F^2+\Vert\mathbf{P}\Vert_2^2\Vert(\mathbf{X}^T\nabla_\mathbf{L}\mathcal{H}_t\mathbf{Y})^T\Vert_F^2)\\&\le\delta_t-2\eta\langle\nabla_\mathbf{P}\mathcal{G}_t,\mathbf{P}_t-\mathbf{P}^\dagger_t\rangle-2\eta\langle\nabla_\mathbf{Q}\mathcal{G}_t,\mathbf{Q}_t-\mathbf{Q}^\dagger_t\rangle\\&\qquad-2\eta\langle\nabla_\mathbf{L}\mathcal{H}_t,\mathbf{X}(\mathbf{P}_t\mathbf{Q}_t^T-\mathbf{P}^\dagger_t\mathbf{Q}^{\dagger T}_t+\Delta\mathbf{P}_t\Delta\mathbf{Q}_t^T)\mathbf{Y}^T\rangle\\&\qquad+2\eta^2(\Vert\nabla_\mathbf{Q}\mathcal{G}_t\Vert_F^2+\Vert\nabla_\mathbf{P}\mathcal{G}_t\Vert_F^2)\\&\qquad+2\eta^2(\Vert\mathbf{Q}\Vert_2^2\Vert\mathbf{X}\Vert_2^2\Vert\mathbf{Y}\Vert_2^2\Vert\nabla_\mathbf{L}\mathcal{H}_t\Vert_F^2+\Vert\mathbf{P}\Vert_2^2\Vert\mathbf{X}\Vert_2^2\Vert\mathbf{Y}\Vert_2^2\Vert\nabla_\mathbf{L}\mathcal{H}_t\Vert_F^2)\\&\le\delta_t-2\eta\langle\nabla_\mathbf{P}\mathcal{G}_t,\mathbf{P}_t-\mathbf{P}^\dagger_t\rangle-2\eta\langle\nabla_\mathbf{Q}\mathcal{G}_t,\mathbf{Q}_t-\mathbf{Q}^\dagger_t\rangle\\&\qquad-2\eta\langle\nabla_\mathbf{L}\mathcal{H}_t,\mathbf{X}(\mathbf{P}_t\mathbf{Q}_t^T-\mathbf{P}^\dagger_t\mathbf{Q}^{\dagger T}_t+\Delta\mathbf{P}_t\Delta\mathbf{Q}_t^T)\mathbf{Y}^T\rangle\\&\qquad+2\eta^2(\Vert\nabla_\mathbf{Q}\mathcal{G}_t\Vert_F^2+\Vert\nabla_\mathbf{P}\mathcal{G}_t\Vert_F^2)\\&\qquad+2\eta^2(\frac{3\sigma_1^*}{2}\Vert\nabla_\mathbf{L}\mathcal{H}_t\Vert_F^2+\frac{3\sigma_1^*}{2}\Vert\nabla_\mathbf{L}\mathcal{H}_t\Vert_F^2)\\&\le\delta_t-2\eta\langle\nabla_\mathbf{P}\mathcal{G}_t,\mathbf{P}_t-\mathbf{P}^\dagger_t\rangle-2\eta\langle\nabla_\mathbf{Q}\mathcal{G}_t,\mathbf{Q}_t-\mathbf{Q}^\dagger_t\rangle\\&\qquad-2\eta\langle\nabla_\mathbf{L}\mathcal{H}_t,\mathbf{X}(\mathbf{P}_t\mathbf{Q}_t^T-\mathbf{P}^\dagger_t\mathbf{Q}^{\dagger T}_t+\Delta\mathbf{P}_t\Delta\mathbf{Q}_t^T)\mathbf{Y}^T\rangle\\&\qquad+2\eta^2(\Vert\nabla_\mathbf{Q}\mathcal{G}_t\Vert_F^2+\Vert\nabla_\mathbf{P}\mathcal{G}_t\Vert_F^2+3\sigma_1^*\Vert\nabla_\mathbf{L}\mathcal{H}_t\Vert_F^2).
	\end{split}
\end{equation}
Applying \textbf{Lemma \ref{t26}}, we get
\begin{equation}
	\begin{split}
		&\delta_{t+1}\le\delta_t-2\eta\langle\nabla_\mathbf{P}\mathcal{G}_t,\mathbf{P}_t-\mathbf{P}_t^\dagger\rangle-2\eta\langle\nabla_\mathbf{Q}\mathcal{G}_t,\mathbf{Q}_t-\mathbf{Q}_t^\dagger\rangle\\&\qquad\qquad -2\eta\langle\nabla_\mathbf{L}\mathcal{H}_t,\mathbf{X}(\mathbf{P}_t\mathbf{Q}_t^T-\mathbf{P}_t^\dagger\mathbf{Q}_t^{\dagger T}+\Delta\mathbf{P}_t\Delta\mathbf{Q}_t^T)\mathbf{Y}^T\rangle\\&\qquad\qquad +\eta^2(\frac{3\sigma_1^*}{128}\Vert\mathbf{P}_t^T\mathbf{P}_t-\mathbf{Q}_t^T\mathbf{Q}_t\Vert_F^2+6(1+\sqrt{\frac{2\alpha}{\min(10\alpha,0.1)}})^2\sigma_1^*\Vert\mathbf{L}_t-\mathbf{L}_t^*\Vert_F^2).
	\end{split}
\end{equation}
Applying \textbf{Lemma \ref{t25}}, we get
\begin{equation}
	\begin{split}
		&\delta_{t+1}\le\delta_t+\eta(\frac{1}{8}\Vert\mathbf{L}_t-\mathbf{L}_t^*\Vert_F^2-\frac{1}{32}\Vert\mathbf{P}_t^T\mathbf{P}_t-\mathbf{Q}_t^T\mathbf{Q}_t\Vert_F^2)\\&\qquad\qquad +\eta(\frac{\sqrt{2}+\sqrt{3}}{16}\sqrt{\sigma_1^*\delta_t^3}-\frac{1}{32}(2\sqrt{\sigma_r^*\delta_t} - \delta_t)^2)\\&\qquad\qquad -2\eta\langle\nabla_\mathbf{L}\mathcal{H}_t,\mathbf{X}(\mathbf{P}_t\mathbf{Q}_t^T-\mathbf{P}_t^\dagger\mathbf{Q}_t^{\dagger T}+\Delta\mathbf{P}_t\Delta\mathbf{Q}_t^T)\mathbf{Y}^T\rangle\\&\qquad\qquad +\eta^2(\frac{3\sigma_1^*}{128}\Vert\mathbf{P}_t^T\mathbf{P}_t-\mathbf{Q}_t^T\mathbf{Q}_t\Vert_F^2+6(1+\sqrt{\frac{2\alpha}{\min(10\alpha,0.1)}})^2\sigma_1^*\Vert\mathbf{L}_t-\mathbf{L}_t^*\Vert_F^2).
	\end{split}
\end{equation}
Applying \textbf{Lemma \ref{t24}}, we have in case (i)
\begin{equation}
	\begin{split}
		&\quad\delta_{t+1}\\&\le\delta_t-\eta(\frac{1}{32}\Vert\mathbf{P}_t^T\mathbf{P}_t-\mathbf{Q}_t^T\mathbf{Q}_t\Vert_F^2+\frac{15}{8}\Vert\mathbf{L}_t-\mathbf{L}_t^*\Vert_F^2)\\&\qquad+\eta(\frac{\sqrt{2}+\sqrt{3}}{16}\sqrt{\sigma_1^*\delta_t^3}-\frac{1}{32}(2\sqrt{\sigma_r^*\delta_t} - \delta_t)^2+(\sqrt{2}+2\sqrt{\frac{\alpha}{\min(10\alpha,0.1)}})\sqrt{\delta_t^3}(\sqrt{\sigma_1^*}+\frac{\sqrt{2\delta_t}}{4}))\\&\qquad+\eta(\frac{\mu_1r\sigma_1^*\delta_t}{2}((4+\beta)\alpha+2\min(10\alpha,0.1))(3+\sqrt\frac{3}{2})^2+\frac{4\alpha\delta_t}{\beta\min(10\alpha,0.1)}(\sqrt{\sigma_1^*}+\frac{\sqrt{2\delta_t}}{4})^2)\\&\qquad+\eta^2(\frac{3\sigma_1^*}{128}\Vert\mathbf{P}_t^T\mathbf{P}_t-\mathbf{Q}_t^T\mathbf{Q}_t\Vert_F^2+6(1+\sqrt{\frac{2\alpha}{\min(10\alpha,0.1)}})^2\sigma_1^*\Vert\mathbf{L}_t-\mathbf{L}_t^*\Vert_F^2),
	\end{split}
\end{equation}
and in cases (ii) and (iii)
\begin{equation}
	\begin{split}
		&\quad\delta_{t+1}\\&\le\delta_t-\eta(\frac{1}{32}\Vert\mathbf{P}_t^T\mathbf{P}_t-\mathbf{Q}_t^T\mathbf{Q}_t\Vert_F^2+\frac{15}{8}\Vert\mathbf{L}_t-\mathbf{L}_t^*\Vert_F^2)\\&\qquad+\eta(\frac{\sqrt{2}+\sqrt{3}}{16}\sqrt{\sigma_1^*\delta_t^3}-\frac{1}{32}(2\sqrt{\sigma_r^*\delta_t} - \delta_t)^2+(\sqrt{2}+2\sqrt{\frac{\alpha}{\min(10\alpha,0.1)}})\sqrt{\delta_t^3}(\sqrt{\sigma_1^*}+\frac{\sqrt{2\delta_t}}{4}))\\&\qquad+\eta(\frac{\mu_2d\sigma_1^*\delta_t}{2}((4+\beta)\alpha+2\min(10\alpha,0.1))(3+\sqrt\frac{3}{2})^2+\frac{4\alpha\delta_t}{\beta\min(10\alpha,0.1)}(\sqrt{\sigma_1^*}+\frac{\sqrt{2\delta_t}}{4})^2)\\&\qquad+\eta^2(\frac{3\sigma_1^*}{128}\Vert\mathbf{P}_t^T\mathbf{P}_t-\mathbf{Q}_t^T\mathbf{Q}_t\Vert_F^2+6(1+\sqrt{\frac{2\alpha}{\min(10\alpha,0.1)}})^2\sigma_1^*\Vert\mathbf{L}_t-\mathbf{L}_t^*\Vert_F^2),
	\end{split}
\end{equation}
If $10\alpha<0.1$, then min$(10\alpha, 0.1)=10\alpha$.\\Therefore, we have in case (i)
\begin{equation}
    \begin{split}
        &\quad\delta_{t+1}\\&\le\delta_t-\eta(\frac{1}{32}\Vert\mathbf{P}_t^T\mathbf{P}_t-\mathbf{Q}_t^T\mathbf{Q}_t\Vert_F^2+\frac{15}{8}\Vert\mathbf{L}_t-\mathbf{L}_t^*\Vert_F^2)\\&\qquad+\eta(\frac{\sqrt{2}+\sqrt{3}}{16}\sqrt{\sigma_1^*\delta_t^3}-\frac{1}{32}(2\sqrt{\sigma_r^*\delta_t} - \delta_t)^2+(\sqrt{2}+\sqrt{\frac{2}{5}})\sqrt{\delta_t^3}(\sqrt{\sigma_1^*}+\frac{\sqrt{2\delta_t}}{4}))\\&\qquad+\eta(\frac{\mu_1r\sigma_1^*\delta_t}{2}((4+\beta)\alpha+20\alpha)(3+\sqrt\frac{3}{2})^2+\frac{2\delta_t}{5\beta}(\sqrt{\sigma_1^*}+\frac{\sqrt{2\delta_t}}{4})^2)\\&\qquad+\eta^2(\frac{3\sigma_1^*}{128}\Vert\mathbf{P}_t^T\mathbf{P}_t-\mathbf{Q}_t^T\mathbf{Q}_t\Vert_F^2+6(1+\sqrt{\frac{1}{5}})^2\sigma_1^*\Vert\mathbf{L}_t-\mathbf{L}_t^*\Vert_F^2)\\&\le\delta_t-\eta(\frac{1}{32}\Vert\mathbf{P}_t^T\mathbf{P}_t-\mathbf{Q}_t^T\mathbf{Q}_t\Vert_F^2+\frac{15}{8}\Vert\mathbf{L}_t-\mathbf{L}_t^*\Vert_F^2)\\&\qquad+\eta(\frac{\sqrt{2}+\sqrt{3}}{16}\sqrt{\sigma_1^*\delta_t^3}-\frac{1}{32}(2\sqrt{\sigma_r^*\delta_t} - \delta_t)^2+(\sqrt{2}+2\sqrt{10})\sqrt{\delta_t^3}(\sqrt{\sigma_1^*}+\frac{\sqrt{2\delta_t}}{4}))\\&\qquad+\eta(\frac{\mu_1r\sigma_1^*\delta_t}{2}((24+\beta)\alpha)(3+\sqrt\frac{3}{2})^2+\frac{40\delta_t}{\beta}(\sqrt{\sigma_1^*}+\frac{\sqrt{2\delta_t}}{4})^2)\\&\qquad+\eta^2(\frac{3\sigma_1^*}{128}\Vert\mathbf{P}_t^T\mathbf{P}_t-\mathbf{Q}_t^T\mathbf{Q}_t\Vert_F^2+6(1+\sqrt{20})^2\sigma_1^*\Vert\mathbf{L}_t-\mathbf{L}_t^*\Vert_F^2),
    \end{split}
\end{equation}
and in cases (ii) and (iii)
\begin{equation}
    \begin{split}
        &\quad\delta_{t+1}\\&\le\delta_t-\eta(\frac{1}{32}\Vert\mathbf{P}_t^T\mathbf{P}_t-\mathbf{Q}_t^T\mathbf{Q}_t\Vert_F^2+\frac{15}{8}\Vert\mathbf{L}_t-\mathbf{L}_t^*\Vert_F^2)\\&\qquad+\eta(\frac{\sqrt{2}+\sqrt{3}}{16}\sqrt{\sigma_1^*\delta_t^3}-\frac{1}{32}(2\sqrt{\sigma_r^*\delta_t} - \delta_t)^2+(\sqrt{2}+2\sqrt{10})\sqrt{\delta_t^3}(\sqrt{\sigma_1^*}+\frac{\sqrt{2\delta_t}}{4}))\\&\qquad+\eta(\frac{\mu_2d\sigma_1^*\delta_t}{2}((24+\beta)\alpha)(3+\sqrt\frac{3}{2})^2+\frac{40\delta_t}{\beta}(\sqrt{\sigma_1^*}+\frac{\sqrt{2\delta_t}}{4})^2)\\&\qquad+\eta^2(\frac{3\sigma_1^*}{128}\Vert\mathbf{P}_t^T\mathbf{P}_t-\mathbf{Q}_t^T\mathbf{Q}_t\Vert_F^2+6(1+\sqrt{20})^2\sigma_1^*\Vert\mathbf{L}_t-\mathbf{L}_t^*\Vert_F^2).
    \end{split}
\end{equation}
On the other hand, we have min$(10\alpha,0.1)=0.1$ if $10\alpha\ge0.1$.\\Then, we have in case (i)
\begin{equation}
    \begin{split}
        &\quad\delta_{t+1}\\&\le\delta_t-\eta(\frac{1}{32}\Vert\mathbf{P}_t^T\mathbf{P}_t-\mathbf{Q}_t^T\mathbf{Q}_t\Vert_F^2+\frac{15}{8}\Vert\mathbf{L}_t-\mathbf{L}_t^*\Vert_F^2)\\&\qquad+\eta(\frac{\sqrt{2}+\sqrt{3}}{16}\sqrt{\sigma_1^*\delta_t^3}-\frac{1}{32}(2\sqrt{\sigma_r^*\delta_t} - \delta_t)^2+(\sqrt{2}+2\sqrt{10\alpha})\sqrt{\delta_t^3}(\sqrt{\sigma_1^*}+\frac{\sqrt{2\delta_t}}{4}))\\&\qquad+\eta(\frac{\mu_1r\sigma_1^*\delta_t}{2}((4+\beta)\alpha+0.2)(3+\sqrt\frac{3}{2})^2+\frac{40\alpha\delta_t}{\beta}(\sqrt{\sigma_1^*}+\frac{\sqrt{2\delta_t}}{4})^2)\\&\qquad+\eta^2(\frac{3\sigma_1^*}{128}\Vert\mathbf{P}_t^T\mathbf{P}_t-\mathbf{Q}_t^T\mathbf{Q}_t\Vert_F^2+6(1+\sqrt{20\alpha})^2\sigma_1^*\Vert\mathbf{L}_t-\mathbf{L}_t^*\Vert_F^2).
    \end{split}
\end{equation}
But $\alpha\le1$, so
\begin{equation}
    \begin{split}
        &\quad\delta_{t+1}\\&\le\delta_t-\eta(\frac{1}{32}\Vert\mathbf{P}_t^T\mathbf{P}_t-\mathbf{Q}_t^T\mathbf{Q}_t\Vert_F^2+\frac{15}{8}\Vert\mathbf{L}_t-\mathbf{L}_t^*\Vert_F^2)\\&\qquad+\eta(\frac{\sqrt{2}+\sqrt{3}}{16}\sqrt{\sigma_1^*\delta_t^3}-\frac{1}{32}(2\sqrt{\sigma_r^*\delta_t} - \delta_t)^2+(\sqrt{2}+2\sqrt{10})\sqrt{\delta_t^3}(\sqrt{\sigma_1^*}+\frac{\sqrt{2\delta_t}}{4}))\\&\qquad+\eta(\frac{\mu_1r\sigma_1^*\delta_t}{2}((4+\beta)\alpha+20\alpha)(3+\sqrt\frac{3}{2})^2+\frac{40\delta_t}{\beta}(\sqrt{\sigma_1^*}+\frac{\sqrt{2\delta_t}}{4})^2)\\&\qquad+\eta^2(\frac{3\sigma_1^*}{128}\Vert\mathbf{P}_t^T\mathbf{P}_t-\mathbf{Q}_t^T\mathbf{Q}_t\Vert_F^2+6(1+\sqrt{20})^2\sigma_1^*\Vert\mathbf{L}_t-\mathbf{L}_t^*\Vert_F^2).
    \end{split}
\end{equation}
And, similarly, we have in cases (ii) and (iii)
\begin{equation}
    \begin{split}
        &\quad\delta_{t+1}\\&\le\delta_t-\eta(\frac{1}{32}\Vert\mathbf{P}_t^T\mathbf{P}_t-\mathbf{Q}_t^T\mathbf{Q}_t\Vert_F^2+\frac{15}{8}\Vert\mathbf{L}_t-\mathbf{L}_t^*\Vert_F^2)\\&\qquad+\eta(\frac{\sqrt{2}+\sqrt{3}}{16}\sqrt{\sigma_1^*\delta_t^3}-\frac{1}{32}(2\sqrt{\sigma_r^*\delta_t} - \delta_t)^2+(\sqrt{2}+2\sqrt{10})\sqrt{\delta_t^3}(\sqrt{\sigma_1^*}+\frac{\sqrt{2\delta_t}}{4}))\\&\qquad+\eta(\frac{\mu_2d\sigma_1^*\delta_t}{2}((4+\beta)\alpha+20\alpha)(3+\sqrt\frac{3}{2})^2+\frac{40\delta_t}{\beta}(\sqrt{\sigma_1^*}+\frac{\sqrt{2\delta_t}}{4})^2)\\&\qquad+\eta^2(\frac{3\sigma_1^*}{128}\Vert\mathbf{P}_t^T\mathbf{P}_t-\mathbf{Q}_t^T\mathbf{Q}_t\Vert_F^2+6(1+\sqrt{20})^2\sigma_1^*\Vert\mathbf{L}_t-\mathbf{L}_t^*\Vert_F^2).
    \end{split}
\end{equation}
If $\eta\le\frac{5}{16(1+\sqrt{20})^2\sigma_1^*}$, we have in case (i)
\begin{multline}
    \delta_{t+1}\le\delta_t+\eta(\frac{\mu_1r\alpha\sigma_1^*\delta_t}{2}(24+\beta)(3+\sqrt\frac{3}{2})^2+\frac{40\delta_t}{\beta}(\sqrt{\sigma_1^*}+\frac{\sqrt{2\delta_t}}{4})^2)\\+\eta(\frac{\sqrt{2}+\sqrt{3}}{16}\sqrt{\sigma_1^*\delta_t^3}-\frac{1}{32}(2\sqrt{\sigma_r^*\delta_t} - \delta_t)^2+(\sqrt{2}+2\sqrt{10})\sqrt{\delta_t^3}(\sqrt{\sigma_1^*}+\frac{\sqrt{2\delta_t}}{4})).
\end{multline}
and in cases (ii) and (iii)
\begin{multline}
    \delta_{t+1}\le\delta_t+\eta(\frac{\mu_2d\alpha\sigma_1^*\delta_t}{2}(24+\beta)(3+\sqrt\frac{3}{2})^2+\frac{40\delta_t}{\beta}(\sqrt{\sigma_1^*}+\frac{\sqrt{2\delta_t}}{4})^2)\\+\eta(\frac{\sqrt{2}+\sqrt{3}}{16}\sqrt{\sigma_1^*\delta_t^3}-\frac{1}{32}(2\sqrt{\sigma_r^*\delta_t} - \delta_t)^2+(\sqrt{2}+2\sqrt{10})\sqrt{\delta_t^3}(\sqrt{\sigma_1^*}+\frac{\sqrt{2\delta_t}}{4})).
\end{multline}
If $\delta_t\le2\sigma_r^*$,
we have in case (i)
\begin{equation}
	\begin{split}
	    &\delta_{t+1}\le\delta_t+\eta(\frac{25\sqrt{2}+\sqrt{3}+48\sqrt{10}}{16}\delta_t\sqrt{\sigma_1^*\delta_t}-\frac{3-\sqrt{2}}{16}\sigma_r^*\delta_t)\\&\qquad\qquad+\eta(\frac{\mu_1r\sigma_1^*\delta_t\alpha}{2}(24+\beta)(3+\sqrt\frac{3}{2})^2+\frac{90\delta_t\sigma_1^*}{\beta})\\&\qquad\le\delta_t(1+\eta(\frac{25\sqrt{2}+\sqrt{3}+48\sqrt{10}}{16}\sqrt{\sigma_1^*\delta_t}-\frac{3-\sqrt{2}}{16}\sigma_r^*)\\&\qquad\qquad+\eta(\frac{\mu_1r\sigma_1^*\alpha}{2}(24+\beta)(3+\sqrt\frac{3}{2})^2+\frac{90\sigma_1^*}{\beta})),
	\end{split}
\end{equation}
and in cases (ii) and (iii)
\begin{equation}
	\begin{split}
	    &\delta_{t+1}\le\delta_t(1+\eta(\frac{25\sqrt{2}+\sqrt{3}+48\sqrt{10}}{16}\sqrt{\sigma_1^*\delta_t}-\frac{3-\sqrt{2}}{16}\sigma_r^*)\\&\qquad\qquad+\eta(\frac{\mu_2d\sigma_1^*\alpha}{2}(24+\beta)(3+\sqrt\frac{3}{2})^2+\frac{90\sigma_1^*}{\beta})),
	\end{split}
\end{equation}
In case (i), if $\alpha\le\frac{1}{16\kappa r\mu_1}$, we have
\begin{equation}
	\delta^\frac{1}{2}=d(\mathbf{P}_0,\mathbf{Q}_0,\mathbf{P}^*,\mathbf{Q}^*)\le18\alpha r\mu_1\sqrt{r\kappa\sigma_1^*},
\end{equation}
which leads to
\begin{equation}
	\begin{split}
	    &\delta_{t+1}\le\delta_t(1+\eta(\frac{\mu_1r\sigma_1^*\alpha}{2}(24+\beta)(3+\sqrt\frac{3}{2})^2-\frac{3-\sqrt{2}}{16}\sigma_r^*)\\&\qquad\qquad+\eta(\frac{90\sigma_1^*}{\beta}+\frac{225\sqrt{2}+9\sqrt{3}+432\sqrt{10}}{8}\alpha r\mu_1\sigma_1^*\sqrt{r\kappa})),
	\end{split}
\end{equation}
In case (ii), if $\alpha\le\frac{1}{16\kappa d\mu_2}$, we have
\begin{equation}
	\delta^\frac{1}{2}=d(\mathbf{P}_0,\mathbf{Q}_0,\mathbf{P}^*,\mathbf{Q}^*)\le18\alpha d\mu_2\sqrt{r\kappa\sigma_1^*},
\end{equation}
which leads to
\begin{equation}
	\begin{split}
	    &\delta_{t+1}\le\delta_t(1+\eta(\frac{\mu_2d\sigma_1^*\alpha}{2}(24+\beta)(3+\sqrt\frac{3}{2})^2-\frac{3-\sqrt{2}}{16}\sigma_r^*)\\&\qquad\qquad+\eta(\frac{90\sigma_1^*}{\beta}+\frac{225\sqrt{2}+9\sqrt{3}+432\sqrt{10}}{8}\alpha d\mu_2\sigma_1^*\sqrt{r\kappa})),
	\end{split}
\end{equation}
In case (iii), if $\alpha\le\frac{1}{16\kappa r\mu_1}$, we have
\begin{equation}
	\delta^\frac{1}{2}=d(\mathbf{P}_0,\mathbf{Q}_0,\mathbf{P}^*,\mathbf{Q}^*)\le18\alpha r\mu_1\sqrt{r\kappa\sigma_1^*},
\end{equation}
which leads to
\begin{equation}
	\begin{split}
	    &\delta_{t+1}\le\delta_t(1+\eta(\frac{\mu_2d\sigma_1^*\alpha}{2}(24+\beta)(3+\sqrt\frac{3}{2})^2-\frac{3-\sqrt{2}}{16}\sigma_r^*)\\&\qquad\qquad+\eta(\frac{90\sigma_1^*}{\beta}+\frac{225\sqrt{2}+9\sqrt{3}+432\sqrt{10}}{8}\alpha r\mu_1\sigma_1^*\sqrt{r\kappa})),
	\end{split}
\end{equation}
In case (i), we require that
\begin{multline}
    \frac{\mu_1r\sigma_1^*\alpha}{2}(24+\beta)(3+\sqrt\frac{3}{2})^2-\frac{3-\sqrt{2}}{16}\sigma_r^*+\frac{90\sigma_1^*}{\beta}\\+\frac{225\sqrt{2}+9\sqrt{3}+432\sqrt{10}}{8}\alpha r\mu_1\sigma_1^*\sqrt{r\kappa}\le0,
\end{multline}
which leads to
\begin{equation}
	\alpha\le\frac{\frac{3-\sqrt{2}}{16}+\frac{90\kappa}{\beta}}{\frac{\mu_1r\kappa}{2}(24+\beta)(3+\sqrt\frac{3}{2})^2+\frac{225\sqrt{2}+9\sqrt{3}+432\sqrt{10}}{8}\kappa r\mu_1\sqrt{r\kappa}}.
\end{equation}
Since other constraints on $\alpha$ are milder, for $\beta$ large enough, there exist $c_1$ and $c_2$ such that if $\alpha\le\frac{c_1}{\mu_1(\kappa r)^\frac{3}{2}}$,
\begin{equation}
\delta_t\le(1-c_2\eta\sigma_r^*)^t\delta_0.
\end{equation}
In case (ii), we require that
\begin{multline}
    \frac{\mu_2d\sigma_1^*\alpha}{2}(24+\beta)(3+\sqrt\frac{3}{2})^2-\frac{3-\sqrt{2}}{16}\sigma_r^*+\frac{90\sigma_1^*}{\beta}\\+\frac{225\sqrt{2}+9\sqrt{3}+432\sqrt{10}}{8}\alpha d\mu_2\sigma_1^*\sqrt{r\kappa}\le0,
\end{multline}
which leads to
\begin{equation}
	\alpha\le\frac{\frac{3-\sqrt{2}}{16}+\frac{90\kappa}{\beta}}{\frac{\mu_2d\kappa}{2}(24+\beta)(3+\sqrt\frac{3}{2})^2+\frac{225\sqrt{2}+9\sqrt{3}+432\sqrt{10}}{8}\kappa d\mu_2\sqrt{r\kappa}}.
\end{equation}
Since other constraints on $\alpha$ are milder, for $\beta$ large enough, there exist $c_1$ and $c_2$ such that if $\alpha\le\frac{c_3}{\mu_2dr^\frac{1}{2}\kappa^\frac{3}{2}}$,
\begin{equation}
\delta_t\le(1-c_4\eta\sigma_r^*)^t\delta_0.
\end{equation}
In case (iii), we require that
\begin{multline}
    \frac{\mu_2d\sigma_1^*\alpha}{2}(24+\beta)(3+\sqrt\frac{3}{2})^2-\frac{3-\sqrt{2}}{16}\sigma_r^*+\frac{90\sigma_1^*}{\beta}\\+\frac{225\sqrt{2}+9\sqrt{3}+432\sqrt{10}}{8}\alpha r\mu_1\sigma_1^*\sqrt{r\kappa}\le0,
\end{multline}
which leads to
\begin{equation}
	\alpha\le\frac{\frac{3-\sqrt{2}}{16}+\frac{90\kappa}{\beta}}{\frac{\mu_2d\kappa}{2}(24+\beta)(3+\sqrt\frac{3}{2})^2+\frac{225\sqrt{2}+9\sqrt{3}+432\sqrt{10}}{8}\kappa r\mu_1\sqrt{r\kappa}}.
\end{equation}
Since other constraints on $\alpha$ are milder, for $\beta$ large enough, there exist $c_5$ and $c_6$ such that if $\alpha\le c_5\min(\frac{1}{\mu_2d\kappa},\frac{1}{\mu_1(\kappa r)^\frac{3}{2}})$,
\begin{equation}
\delta_t\le(1-c_6\eta\sigma_r^*)^t\delta_0.
\end{equation}
\end{appendices}

\bibliographystyle{plainnat}
\bibliography{egbib}
\end{document}